\newtheorem{theorem}{Theorem}[section]
\newtheorem{proposition}[theorem]{Proposition}
\newtheorem{lemma}[theorem]{Lemma}
\newtheorem{corollary}[theorem]{Corollary}
\newtheorem{definition}[theorem]{Definition}
\newcommand{\secref}[1]{Section~\ref{#1}}
\renewcommand{\eqref}[1]{Eq.~(\ref{#1})}
\newcommand{\lemref}[1]{Lemma~\ref{#1}}
\newcommand{\thmref}[1]{Theorem~\ref{#1}}
\newcommand{\appref}[1]{Appendix~\ref{#1}}
\newcommand{\onefunc}{\mathbbm{1}}
\newcommand{\stam}[1]{}
\newcommand{\bx}{x}
\newcommand{\bw}{w}
\newcommand{\bu}{u}
\newcommand{\bv}{v}
\newcommand{\bz}{z}
\newcommand{\bh}{h}
\newcommand{\balpha}{\alpha}
\newcommand{\btheta}{\theta}
\DeclareMathOperator*{\sign}{sign}
\DeclareMathOperator*{\argmax}{argmax}
\DeclareMathOperator*{\argmin}{argmin}
\newcommand{\reals}{{\mathbb R}}
\newcommand{\zero}{{\mathbf{0}}}
\newcommand{\inner}[1]{\langle #1 \rangle}
\newcommand{\norm}[1]{\left\|#1\right\|}
\newcommand{\snorm}[1]{\|#1\|} 
\newcommand{\rmax}{R_{\text{max}}} 
\newcommand{\rmin}{R_{\text{min}}}
\newcommand{\rratio}{R}
\newcommand{\printfnsymbol}[1]{%
  \textsuperscript{\@fnsymbol{#1}}%
}
\newtheorem{condition}[theorem]{Condition}
\newtheorem{fact}[theorem]{Fact}
\newcommand{\R}{\mathbb{R}}
\let\P\BP
\let\hat\widehat
\newcommand{\f}{\frac}
\newcommand{\eps}{\varepsilon}
\renewcommand{\r}{\right}
\renewcommand{\l}{\left}
\newcommand{\sip}[2]{\langle #1, #2 \rangle}
\newcommand{\ip}[2]{\left\langle #1, #2 \right\rangle}
\newcommand{\ddx}[1]{\frac{\mathrm{d}}{\mathrm{d} #1}}
\newcommand{\iid}{\stackrel{\mathrm{ i.i.d.}}{\sim}}
\newcommand{\summ}[2]{\sum_{#1 = 1}^{#2}}
\newcommand{\summm}[3]{\sum_{#1 = #2}^{#3}}
\newcommand{\calD}{\mathcal{D}}
\renewcommand*{\eqref}[1]{%
  \hyperref[{#1}]{\textup{\tagform@{\ref*{#1}}}}%
}
\newcommand{\sinit}{\omega_{\mathrm{init}}}
\newcommand{\wt}[1]{w^{(#1)}}
\newcommand{\thetat}[1]{W^{(#1)}}
\newcommand{\Wt}[1]{W^{(#1)}}
\newcommand{\lpit}[2]{g_{#1}^{(#2)}}
\newcommand{\lv}{\lVert}
\newcommand{\rv}{\rVert}
\newcommand{\nfit}[2]{\nabla f_{#1}^{(#2)}}
\newcommand{\stablerank}{\mathsf{StableRank}}
\newcommand{\rank}{\mathsf{rank}}
\newcommand{\cgp}{c} 
\newcommand\numberthis{\addtocounter{equation}{1}\tag{\theequation}}
\title{\textbf{Implicit Bias in Leaky ReLU Networks Trained on High-Dimensional Data}}
\author{
Spencer Frei\footnote{Equal contribution.}\phantom{$^\ast$}\\
UC Berkeley\\
\texttt{frei@berkeley.edu}\\
\and 
Gal Vardi$^\ast$\\
TTI Chicago and Hebrew University\\ 
\texttt{galvardi@ttic.edu}
\and
Peter L. Bartlett\\
UC Berkeley and Google\\
\texttt{peter@berkeley.edu}\\
\and
Nathan Srebro\\
TTI Chicago\\
\texttt{nati@ttic.edu}\\
\and 
Wei Hu\\
University of Michigan\\
\texttt{vvh@umich.edu}
}
\date{}
\begin{document}

\maketitle

\begin{abstract}
	The implicit biases of gradient-based optimization algorithms are conjectured to be a major factor in the success of modern deep learning.  In this work, we investigate the implicit bias of gradient flow and gradient descent in two-layer fully-connected neural networks with leaky ReLU activations when the training data are nearly-orthogonal, a common property of high-dimensional data.  
	For gradient flow, we leverage recent work on the implicit bias for homogeneous neural networks to show that asymptotically, gradient flow produces a neural network with rank at most two.  
	Moreover, this network is an $\ell_2$-max-margin solution (in parameter space), and has a linear decision boundary that corresponds to an approximate-max-margin linear predictor. 	For gradient descent, provided the random initialization variance is small enough, we show that a single step of gradient descent suffices to drastically reduce the rank of the network, and that the rank remains small throughout training.  We provide experiments which suggest that a small initialization scale is important for finding low-rank neural networks with gradient descent. 
\end{abstract}

\section{Introduction}
Neural networks trained by gradient descent appear to generalize well in many settings, even when trained without explicit regularization.  It is thus understood that the usage of gradient-based optimization imposes an implicit bias towards particular solutions which enjoy favorable properties.  The nature of this implicit regularization effect---and its dependence on the structure of the training data, the architecture of the network, and the particular gradient-based optimization algorithm---is thus a central object of study in the theory of deep learning.

In this work, we examine the implicit bias of gradient descent when the training data is 
such that the pairwise correlations $|\sip{x_i}{x_j}|$ between distinct samples $x_i,x_j\in \R^d$ are much smaller than the Euclidean norms of each sample.  As we shall show, this property is often satisfied when the input dimension $d$ is significantly larger than the number of samples $n$, and is an essentially high-dimensional phenomenon. 
We consider fully-connected two-layer networks with $m$ neurons where the first layer weights are trained and the second layer weights are fixed at their random initialization.  If we denote the first-layer weights by $W\in \R^{m\times d}$, with rows $w_j^\top \in \R^d$, then the network output is given by,
\[ f(x; W) := \summ j m a_j \phi(\sip{w_j}{x}),\]
where $a_j \in \R$, $j=1,\dots m$ are fixed.  We consider the implicit bias in two different settings: gradient flow, 
which corresponds to gradient descent where the step-size tends to zero,
and standard gradient descent.

For gradient flow, we consider the standard leaky ReLU activation, $\phi(z)=\max(\gamma z, z)$.  Our starting point in this setting is recent work by~\citet{lyu2019gradient,ji2020directional} that show that, provided the network interpolates the training data at some time, 
gradient flow on homogeneous networks, such as two-layer leaky ReLU networks, converges (in direction) to 
a network that satisfies the Karush–Kuhn–Tucker (KKT) conditions for the margin-maximization problem,
\[ \min_W \frac{1}{2} \norm{W}_F^2 \;\;\;\; \text{s.t. } \;\;\; \forall i \in [n], \;\; y_i f(\bx_i; W) \geq 1~.\]
Leveraging this, we show that the asymptotic limit of gradient flow produces a matrix $W$ which is a global optimum of the above problem, and has rank at most $2$.
Moreover, we note that our assumption on the high-dimensionality of the data implies that it is linearly separable. Our leaky ReLU network $f(\cdot; W)$ is non-linear, but we show that gradient flow converges in direction to $W$ such that the decision boundary is linear, namely, there exists $\bz \in \reals^d$ such that for all $\bx$ we have $\sign(f(\bx; W)) = \sign(\bz^\top \bx)$. This linear predictor $\bz$ may not be an $\ell_2$-max-margin linear predictor, but it maximizes the margin approximately (see details in \thmref{thm:implicit bias leaky relu}).

For gradient descent, we consider a smoothed approximation to the leaky ReLU activation, and consider training that starts from a random initialization with small initialization variance. Our result for gradient flow on the standard leaky ReLU activation suggests that gradient descent with small-enough step size should eventually produce a network for which $\Wt t$ has small rank.  However, the asymptotic characterization of trained neural networks in terms of KKT points of a margin-maximization problem relies heavily upon 
the infinite-time limit.  This leaves open what happens in finite time. 
Towards this end, we consider the stable rank of the weight matrix $\Wt t$ found by gradient descent at time $t$, defined as $\snorm{\Wt t}_F^2/\snorm{\Wt t}_2^2$, the square of the ratio of the Frobenius norm to the spectral norm of $\Wt t$.  We show that after the first step of gradient descent, the stable rank of the weight matrix $\Wt t$ reduces from something that is of order $\min(m, d)$ to that which is at most an absolute constant, independent of $m$, $d$, or the number of samples.  Further, throughout the training trajectory the stable rank of the network is never larger than some absolute constant.  

We conclude by verifying our results with experiments.  We first confirm our theoretical predictions for binary classification problems with high-dimensional data.  We then consider the stable rank of two-layer networks trained by SGD for the CIFAR10 dataset, which is not high-dimensional.  We notice that the scale of the initialization plays a crucial role in the stable rank of the weights found by gradient descent: with default TensorFlow initialization, the stable rank of a network with $m=512$ neurons never falls below 74, while with a smaller initialization variance, the stable rank quickly drops to 3.25, and only begins to increase above 10 when the network begins to overfit.

\subsection*{Related work}

\paragraph{Implicit bias in neural networks.}

The literature on the implicit bias in neural networks has rapidly expanded in recent years, and cannot be reasonably surveyed here (see \citet{vardi2022implicit} for a survey).
In what follows, we discuss results which apply to two-layer ReLU or leaky ReLU networks in classification settings.

By \citet{lyu2019gradient} and~\citet{ji2020directional}, homogeneous neural networks (and specifically two-layer leaky ReLU networks, which are the focus of this paper) trained with exponentially-tailed classification losses converge in direction to a KKT point of the maximum-margin problem. 
Our analysis of the implicit bias relies on this result.
We note that the aforementioned KKT point may not be a global optimum (see a discussion in \secref{sec:asymptotic}). 

\citet{lyu2021gradient} 
studied the implicit bias in 
two-layer leaky ReLU networks trained on linearly separable and symmetric data, and showed that gradient flow converges to a linear classifier which maximizes the $\ell_2$ margin. Note that in our work we do not assume that the data is symmetric, but we assume that it is nearly orthogonal. Also, in our case we show that gradient flow might converge to a linear classifier that does not maximize the $\ell_2$ margin.
\citet{sarussi2021towards} studied gradient flow on two-layer leaky ReLU networks, where the training data is linearly separable. They showed convergence to a linear classifier based on an assumption called \emph{Neural Agreement Regime (NAR)}: starting from some time point, 
all positive neurons (i.e., neurons with a positive outgoing weight) agree on the classification of the training data, and similarly for the negative neurons.
However, it is unclear when this assumption holds a priori. 

\citet{chizat2020implicit} studied the dynamics of gradient flow on infinite-width homogeneous two-layer networks with exponentially-tailed losses, and showed bias towards margin maximization w.r.t. a certain function norm known as the variation norm.
\citet{phuong2020inductive} studied the implicit bias in two-layer ReLU networks trained on orthogonally separable data (i.e., where for every pair of labeled examples $(\bx_i,y_i),(\bx_j,y_j)$ we have $\bx_i^\top \bx_j > 0$ if $y_i=y_j$ and $\bx_i^\top \bx_j \leq 0$ otherwise).
\citet{safran2022effective} proved implicit bias towards minimizing the number of linear regions in univariate two-layer ReLU networks. 
Implicit bias in neural networks trained with nearly-orthogonal data was previously studied in \citet{vardi2022gradient}. Their assumptions on the training data are similar to ours, but they consider ReLU networks and prove bias towards non-robust networks. Their results do not have any clear implications for our setting. 

Implicit bias towards rank minimization was also studied in several other papers. 
\citet{ji2018gradient, ji2020directional} showed that in linear networks of output dimension~$1$, gradient flow with exponentially-tailed losses converges to networks where the weight matrix of every layer is of rank~$1$.
\citet{timor2022implicit} showed that the bias towards margin maximization in homogeneous ReLU networks may induce a certain bias towards rank minimization in the weight matrices of sufficiently deep ReLU networks.
Finally, implicit bias towards rank minimization was also studied in regression settings. See, e.g., \citet{arora2019implicit,razin2020implicit,li2020towards,timor2022implicit}.

\paragraph{Neural network optimization.}  This work can be considered in the context of other work on developing optimization guarantees for neural networks trained by gradient descent.  A line of work based on the neural tangent kernel approximation~\citep{jacot2018ntk} showed that global convergence of gradient descent is possible if the network is sufficiently wide and stays close to its random initialization~\citep{allenzhu2019convergence,zou2019gradient,du2019-1layer,arora2019exact,soltanolkotabi2017overparameterized,frei2019resnet}.  These results do not hold if the network has constant width or if the variance of the random initialization is small, both of which are permitted with our analysis.  

A series of works have explored the training dynamics of gradient descent when the data is linearly separable (such as is the case when the input dimension is larger than the number of samples, as we consider here).  \citet{brutzkus2017sgd} showed that in two-layer leaky ReLU networks, SGD on the hinge loss for linearly separable data converges to zero loss. \citet{frei2021provable} showed that even when a constant fraction of the training labels are corrupted by an adversary, in two-layer leaky ReLU networks, SGD on the logistic loss produces neural networks that have generalization error close to the label noise rate. 
As we mentioned above, both \citet{lyu2021gradient} and~\citet{sarussi2021towards} considered two-layer leaky ReLU networks trained by gradient-based methods on linearly separable datasets.  

\paragraph{Training of neural networks for high-dimensional data.}  The training dynamics of neural networks for high-dimensional data has been studied in a number of recent works. \citet{cao2022convolutional} studied two-layer convolutional networks trained on an image-patch data model and showed how a low signal-to-noise ratio can result in harmful overfitting, while a high signal-to-noise ratio allows for good generalization performance. \citet{shen2022cows} considered a similar image-patch signal model and studied how data augmentation can improve generalization performance of two-layer convolutional networks.  \citet{frei2022benign} showed that two-layer fully connected networks trained on high-dimensional mixture model data can exhibit a `benign overfitting' phenomenon.  \citet{frei2022rfa} studied the feature-learning process for two-layer ReLU networks trained on noisy 2-xor clustered data and showed that early-stopped networks can generalize well even in high-dimensional settings. \citet{boursier2022gfreluorthogonal} studied the dynamics of gradient flow on the squared loss for two-layer ReLU networks with orthogonal inputs.

\section{Preliminaries}
\label{sec:preliminaries}

\paragraph{Notations.}

For 
a vector $\bx$
we denote by $\norm{\bx}$ the Euclidean norm. For a matrix $W$ we denote by $\norm{W}_F$ the Frobenius norm, and by $\norm{W}_2$ the spectral norm.
We denote by $\onefunc[\cdot]$ the indicator function, for example $\onefunc[t \geq 5]$ equals $1$ if $t \geq 5$ and $0$ otherwise. We denote 
$\sign(z)=1$ for $z>0$ and $\sign(z)=-1$ otherwise.
For an integer $d \geq 1$ we denote $[d]=\{1,\ldots,d\}$. 
We denote by $\mathsf{N}(\mu,\sigma^2)$ the Gaussian distribution.  We denote the maximum of two real numbers $a,b$ as $a \vee b$, and their minimum as $a \wedge b$. 
We denote by $\log$ the logarithm with base $e$.  We use the standard $O(\cdot)$ and $\Omega(\cdot)$ notation to only hide universal constant factors, and use $\tilde O(\cdot)$ and $\tilde \Omega(\cdot)$ to hide poly-logarithmic factors in the argument.

\paragraph{Neural networks.}

In this work we consider depth-$2$ 
neural networks, where the second layer is fixed and only the first layer is trained. Thus, a neural network with parameters $W$ is defined as
\[
    f(\bx; W) = \sum_{j=1}^m a_j \phi(\bw_j^\top \bx)~,
\]
where $\bx \in \reals^d$ is an input, $W \in \reals^{m \times d}$ is a weight matrix with rows $\bw_1^\top,\ldots,\bw_m^\top$, the weights in the second layer are $a_j \in \{\pm 1/\sqrt{m}\}$ for $j \in [m]$, and $\phi: \reals \to \reals$ is an activation function. We focus on the leaky ReLU activation function, defined by $\phi(z) = \max\{z,\gamma z\}$ for some constant $\gamma \in (0,1)$, and on a smooth approximation of leaky ReLU (defined later).

\paragraph{Gradient descent and gradient flow.}

Let $S = \{(\bx_i,y_i)\}_{i=1}^n \subseteq \reals^d \times \{\pm 1\}$ be a binary-classification training dataset. Let $f(\cdot; W):\reals^d \to \reals$ be a neural network parameterized by 
$W$. 
For a loss function $\ell:\reals \to \reals$ the \emph{empirical loss} of $f(\cdot; W)$ on the dataset $S$ is 
\begin{equation*}
\label{eq:objective}
	\hat{L}(W) := \frac{1}{n} \sum_{i=1}^n \ell(y_i f(\bx_i; W))~.
\end{equation*} 
We focus on the exponential loss $\ell(q) = e^{-q}$ and the logistic loss $\ell(q) = \log(1+e^{-q})$.

In \emph{gradient descent}, we 
initialize $[\Wt 0]_{i,j}\iid \mathsf N (0,\sinit^2)$ for some $\sinit \geq 0$, 
and in each iteration we update
\begin{align*}
    \Wt {t+1} &= \Wt t - \alpha \nabla_W \hat L(\Wt t)~,
\end{align*}
where $\alpha > 0$ is a fixed step size.

\emph{Gradient flow} captures the behavior of gradient descent with an infinitesimally small step size. The trajectory $W(t)$ of gradient flow is defined such that starting from an initial point $W(0)$, the dynamics of $W(t)$ obeys the differential equation 
$\frac{d W(t)}{dt} = -\nabla_W \hat{L}(W(t))$.
When $\hat{L}(W)$ is non-differentiable, the dynamics of gradient flow obeys the differential equation $\frac{d W(t)}{dt} \in -\partial^\circ \hat{L}(W(t))$, where $\partial^\circ$ denotes the \emph{Clarke subdifferential}, which is a generalization of the derivative for non-differentiable functions (see Appendix~\ref{app:KKT} for a formal definition).

\section{Asymptotic Analysis of the Implicit Bias} \label{sec:asymptotic}

In this section, we study the implicit bias of gradient flow in the limit $t \to \infty$.
Our results build on a theorem by \citet{lyu2019gradient} and \citet{ji2020directional}, which considers the implicit bias in homogeneous neural networks. 
Let $f(\bx; \btheta)$ be a neural network parameterized by $\btheta$, where we view $\btheta$ as a vector. The network $f$ is \emph{homogeneous} if there exists $L>0$ such that for every $\beta>0$ and $\bx,\btheta$ we have $f(\bx; \beta \btheta) = \beta^L f(\bx; \btheta)$. 
We say that a trajectory $\btheta(t)$ of gradient flow {\em converges in direction} to $\btheta^*$ if $\lim_{t \to \infty}\frac{\btheta(t)}{\norm{\btheta(t)}} = 
\frac{\btheta^*}{\norm{\btheta^*}}$.
Their theorem can be stated as follows.

\begin{theorem}[Paraphrased from \citet{lyu2019gradient,ji2020directional}]
\label{thm:known KKT}
	Let $f$ be a homogeneous ReLU or leaky ReLU neural network parameterized by $\btheta$. Consider minimizing either the exponential or the logistic loss over a binary classification dataset $ \{(\bx_i,y_i)\}_{i=1}^n$ using gradient flow. Assume that there exists time $t_0$ such that $\hat{L}(\btheta(t_0))< \frac{\log(2)}{n}$.
	Then, gradient flow converges in direction to a first order stationary point (KKT point) of the following maximum-margin problem in parameter space:
\begin{equation*}
	\min_\btheta \frac{1}{2} \norm{\btheta}^2 \;\;\;\; \text{s.t. } \;\;\; \forall i \in [n] \;\; y_i f(\bx_i; \btheta) \geq 1~.
\end{equation*}
Moreover, $\hat{L}(\btheta(t)) \to 0$ and $\norm{\btheta(t)} \to \infty$ as $t \to \infty$.
\end{theorem}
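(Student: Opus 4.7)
The plan is to follow the strategy of \citet{lyu2019gradient,ji2020directional}: introduce a smoothed normalized margin, show it is nondecreasing along the flow using Euler's homogeneity identity, and then translate that monotonicity into both loss/norm asymptotics and directional convergence to a KKT point. Concretely, define
\[
\tilde\gamma(\btheta) \;:=\; \frac{-\log(n\, \hat L(\btheta))}{\norm{\btheta}^L}
\]
as a differentiable surrogate for the hard normalized margin $\min_i y_i f(\bx_i;\btheta)/\norm{\btheta}^L$. The central algebraic input is Euler's identity, which for $L$-homogeneous $f$ reads $\langle \btheta, \bh \rangle = L\, f(\bx;\btheta)$ for every $\bh \in \partial^\circ_{\btheta} f(\bx;\btheta)$; this is the identity that couples the growth of $\norm{\btheta(t)}$ to the decay of $\hat L(\btheta(t))$.

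First I would check that the assumption $\hat L(\btheta(t_0)) < \log(2)/n$ forces $y_i f(\bx_i;\btheta(t_0)) > 0$ for every $i$, since otherwise that single summand alone would already contribute at least $\log 2 / n$ (for the logistic loss) or $1/n$ (for the exponential loss) to the average. Combined with homogeneity and the nonsmooth chain rule for Clarke subdifferentials, this separation yields
\[
\tfrac{d}{dt}\norm{\btheta(t)}^2 \;=\; \frac{2L}{n}\sum_{i=1}^n y_i f(\bx_i;\btheta(t)) \,\bigl|\ell'(y_i f(\bx_i;\btheta(t)))\bigr| \;\geq\; 0,
\]
while simultaneously $\tfrac{d}{dt}\hat L(\btheta(t)) = -\snorm{\dot\btheta(t)}^2 \le 0$. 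A Cauchy--Schwarz estimate relating these two derivatives then yields $\tfrac{d}{dt}\tilde\gamma(\btheta(t)) \ge 0$. Since $\tilde\gamma(\btheta(t_0)) > 0$, the smoothed margin stays above a positive constant $c$, and hence $-\log(n\, \hat L(\btheta(t))) \ge c\,\norm{\btheta(t)}^L$ for all $t \ge t_0$. A complementary ODE-comparison argument based on the positive lower bound for $\tfrac{d}{dt}\norm{\btheta}^2$ precludes $\norm{\btheta(t)}$ from staying bounded, so $\norm{\btheta(t)} \to \infty$, and the previous inequality then forces $\hat L(\btheta(t))\to 0$.

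For directional convergence to a KKT point, the idea is to reparametrize time by $s := \log\norm{\btheta(t)}$ and study the unit-normalized curve $\bar\btheta(s) := \btheta(t(s))/\norm{\btheta(t(s))}$ on the sphere. One shows that $\tilde\gamma$ serves as a desingularizing Lyapunov function for this reparametrized flow. Applying the Kurdyka--{\L}ojasiewicz inequality for definable functions (leaky ReLU and the losses are definable in a common o-minimal structure, and hence so is $\tilde\gamma$) gives that $\bar\btheta(s)$ has finite arc length and therefore a unique limit $\bar\btheta^*$. At the limit, the first-order optimality condition for $\tilde\gamma$ rewrites, after rescaling the loss derivatives $\lambda_i(t) := -\ell'(y_i f(\bx_i;\btheta(t)))$ and passing to the limit, as the KKT system of the stated max-margin problem, with the suitably normalized limits of $\lambda_i$ playing the role of Lagrange multipliers.

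The main obstacle I expect is the nonsmoothness of (leaky) ReLU: every derivative identity above must be justified through the Clarke subdifferential, and the chain rule underpinning the monotonicity of $\tilde\gamma$ is not automatic in this setting. The ingredients that make the argument go through are local Lipschitzness and definability of the network and the loss in an o-minimal structure, which together supply a nonsmooth chain rule and permit the Kurdyka--{\L}ojasiewicz inequality to be invoked; this definability-plus-{\L}ojasiewicz step, which upgrades subsequential convergence on the sphere to full directional convergence with a KKT limit, is the most delicate piece and is where \citet{lyu2019gradient,ji2020directional} do the heaviest lifting. Since the theorem is quoted verbatim from those papers, in our exposition I would invoke it as a black box rather than reproduce the machinery here.
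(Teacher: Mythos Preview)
Your proposal is correct and matches the paper's approach: the paper does not prove this theorem at all, but simply states it as a known result from \citet{lyu2019gradient,ji2020directional} and invokes it as a black box. Your concluding sentence---that the theorem should be cited rather than reproved---is exactly what the paper does, and the proof sketch you provide (smoothed normalized margin, Euler's identity, monotonicity, definability/KL for directional convergence) is a faithful high-level summary of those original papers but is not needed here.
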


We focus here on depth-$2$ leaky ReLU networks where the trained parameters is the weight matrix $W \in \reals^{m \times d}$ of the first layer. Such networks are homogeneous (with $L=1$), and hence the above theorem guarantees that if there exists time $t_0$ such that $\hat{L}(W(t_0))< \frac{\log(2)}{n}$, then gradient flow converges in direction to a KKT point of the problem
\begin{equation}
\label{eq:optimization problem}
	\min_W \frac{1}{2} \norm{W}_F^2 \;\;\;\; \text{s.t. } \;\;\; \forall i \in [n] \;\; y_i f(\bx_i; W) \geq 1~.
\end{equation}
Note that in leaky ReLU networks Problem~(\ref{eq:optimization problem}) is non-smooth. Hence, the KKT conditions are defined using the Clarke subdifferential.
See Appendix~\ref{app:KKT} for more details of the KKT conditions.
The theorem implies that even though there might be many possible directions $\frac{W}{\norm{W}_F}$ that classify the dataset correctly, gradient flow converges only to directions that are KKT points of Problem~(\ref{eq:optimization problem}).
We note that such a KKT point is not necessarily a global/local optimum (cf. \cite{vardi2021margin,lyu2021gradient}). Thus, under the theorem's assumptions, gradient flow \emph{may not} converge to an optimum of Problem~(\ref{eq:optimization problem}), but it is guaranteed to converge to a KKT point.

We now state our main result for this section.
For convenience, we will use different notations for positive neurons (i.e., where $a_j = 1/\sqrt{m}$) and negative neurons (i.e., where $a_j = -1/\sqrt{m}$). Namely,
\begin{equation} \label{eq:notations for neurons}
    f(\bx; W) 
    = \sum_{j=1}^m a_j \phi(\bw_j^\top \bx)
    = \sum_{j =1}^{m_1}\frac{1}{\sqrt{m}} \phi(\bv_j^\top \bx) - \sum_{j =1}^{m_2} \frac{1}{\sqrt{m}} \phi(\bu_j^\top \bx)~.
\end{equation}
Note that $m=m_1+m_2$. We assume that $m_1,m_2 \geq 1$.

\begin{theorem} \label{thm:implicit bias leaky relu}
	Let $\{(\bx_i,y_i)\}_{i=1}^n \subseteq \reals^d  \times \{\pm 1\}$ be a training dataset, and let $\rmax := \max_i \norm{\bx_i}$, $\rmin := \min_i \norm{\bx_i}$ and $\rratio = \rmax/\rmin$.  
	We denote $I := [n]$, $I_+ := \{i \in I: y_i=1\}$ and $I_- := \{i \in I: y_i=-1\}$. 
	Assume that 
	\[
	    \rmin^2 \geq 3 \gamma^{-3} \rratio^2 n \max_{i\neq j} |\inner{\bx_i,\bx_j}|~.
	\]
	Let $f$ be the leaky ReLU network from \eqref{eq:notations for neurons} and let $W$ be a KKT point of Problem~(\ref{eq:optimization problem}).
	Then, the following hold:
	\begin{enumerate}
		\item $y_i f(\bx_i; W)=1$ for all $i \in I$.
		\item All positive neurons are identical and all negative neurons are identical: there exist $v,u\in \R^d$ such that $\bv =\bv_1=\ldots=\bv_{m_1}$ and $\bu = \bu_1=\ldots=\bu_{m_2}$. Hence, $\rank(W) \leq 2$.
		\item $\bv = \frac{1}{\sqrt{m}} \sum_{i \in I_+} \lambda_i \bx_i - \frac{\gamma}{\sqrt{m}} \sum_{i \in I_-} \lambda_i \bx_i$ and 
			$\bu = \frac{1}{\sqrt{m}} \sum_{i \in I_-} \lambda_i \bx_i - \frac{\gamma}{\sqrt{m}} \sum_{i \in I_+} \lambda_i \bx_i$, where
			$\lambda_i \in \left( \frac{1}{2 \rmax^2}, \frac{3}{2 \gamma^2  \rmin^2} \right)$ 
			for every $i \in I$.
			Furthermore, for all $i \in I$ we have $y_i \bv^\top \bx_i > 0$ and $y_i \bu^\top \bx_i < 0$.
		\item $W$ is a global optimum of Problem~(\ref{eq:optimization problem}). Moreover, this global optimum is unique.
		\item 
			The pair $\bv,\bu$ from item 2 is the global optimum of the following convex problem:
			\begin{align}
				\label{eq:optimization problem2}
				&\min_{\bv,\bu \in \reals^d} \frac{m_1}{2} \norm{\bv}^2 + \frac{m_2}{2} \norm{\bu}^2  
				\\
				\forall i \in & I_+~: \;\; \frac{m_1}{\sqrt{m}}  \bv^\top \bx_i - \gamma \frac{m_2}{\sqrt{m}}  \bu^\top \bx_i \geq 1 \nonumber
				\\
				\forall i \in & I_-~: \;\; \frac{m_2}{\sqrt{m}}  \bu^\top \bx_i - \gamma \frac{m_1}{\sqrt{m}}  \bv^\top \bx_i \geq 1 \nonumber 
				~.
			\end{align}
		\item Let $\bz = \frac{m_1}{\sqrt{m}} \bv - \frac{m_2}{\sqrt{m}} \bu$. 
		For every $\bx \in \reals^d$ we have $\sign\left(f(\bx; W)\right) = \sign(\bz^\top \bx)$. Thus, the network $f(\cdot; W)$ has a linear decision boundary.
		\item The vector $\bz$ may not be an $\ell_2$-max-margin linear predictor, but it maximizes the margin approximately in the following sense. 
		For all $i \in I$ we have $y_i \bz^\top \bx_i \geq 1$, and 
		$\norm{\bz} \leq \frac{2}{\kappa+\gamma} \norm{\bz^*}$, where 
		$\kappa := \sqrt{\frac{\min\{m_1,m_2\}}{\max\{m_1,m_2\}}}$, and 
		$\bz^* := \argmin_{\tilde{\bz}} \norm{\tilde{\bz}}$ s.t. $y_i \tilde{\bz}^\top \bx_i \geq 1$ for all $i \in I$.
	\end{enumerate}
\end{theorem}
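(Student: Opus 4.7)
The proof hinges on analysing the KKT conditions of Problem~(\ref{eq:optimization problem}) under the near-orthogonality assumption $\rmin^2 \ge 3\gamma^{-3}\rratio^2 n \max_{i\ne j}|\inner{\bx_i}{\bx_j}|$. First I would write stationarity using the Clarke subdifferential of leaky ReLU; because $\phi'$ takes values in $\{1,\gamma\}$ (and lies in $[\gamma,1]$ at $0$), this gives, for some multipliers $\lambda_i\ge 0$ and selectors $s_{i,j}^{(\bv)},s_{i,j}^{(\bu)}\in[\gamma,1]$,
\[
\bv_j = \frac{1}{\sqrt m}\sum_{i=1}^n \lambda_i y_i s_{i,j}^{(\bv)} \bx_i,\qquad \bu_j = -\frac{1}{\sqrt m}\sum_{i=1}^n \lambda_i y_i s_{i,j}^{(\bu)} \bx_i,
\]
together with complementary slackness $\lambda_i(y_i f(\bx_i;W)-1)=0$ and primal feasibility $y_i f(\bx_i;W)\ge 1$.

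The crux is a bootstrap establishing simultaneously (a) the sign pattern $y_i \bv_j^\top \bx_i>0$ and $y_i \bu_j^\top \bx_i<0$ for every $i,j$, and (b) the two-sided bounds $\lambda_i\in(\frac{1}{2\rmax^2},\frac{3}{2\gamma^2\rmin^2})$. Expanding $\bv_j^\top \bx_k$ from the stationarity formula isolates a diagonal term $\frac{1}{\sqrt m}\lambda_k s_{k,j}^{(\bv)} y_k \|\bx_k\|^2$ and off-diagonal cross-terms bounded by $\frac{1}{\sqrt m}(\max_i\lambda_i)\cdot n\max_{i\ne k}|\inner{\bx_i}{\bx_k}|$; the data assumption is tuned so that once $\lambda_i=O(1/(\gamma^2\rmin^2))$, the diagonal dominates and forces the sign pattern. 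I would close the loop in three moves: (i) rule out $\lambda_i=0$, so that all constraints are active and $y_i f(\bx_i;W)=1$ for every $i$ (item 1); (ii) substitute the stationarity formula into $y_i f(\bx_i;W)=1$ and use near-orthogonality to read off both the upper and lower bounds on $\lambda_i$; (iii) plug those bounds back into the expansion of $\bv_j^\top \bx_k$ to confirm the sign pattern. The circular dependence between the selectors, the sign pattern, and the multiplier range—together with the non-smooth points where some pre-activation could vanish—is the main obstacle, and I would handle it by showing that any pre-activation of $0$ would contradict the derived lower bound $\lambda_i>\frac{1}{2\rmax^2}$.

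With the sign pattern in hand, $s_{i,j}^{(\bv)}=1$ if $y_i=1$ and $=\gamma$ if $y_i=-1$, independent of $j$, and analogously for $s^{(\bu)}$; substituting eliminates all $j$-dependence, so the $\bv_j$ collapse to a common $\bv$ and the $\bu_j$ to a common $\bu$, giving items 2 and 3 and thus $\rank(W)\le 2$. Plugging this ansatz into Problem~(\ref{eq:optimization problem}) reduces the constraints to those of~(\ref{eq:optimization problem2}) and the objective to $\frac{m_1}{2}\|\bv\|^2+\frac{m_2}{2}\|\bu\|^2$; strict convexity yields a unique minimiser of~(\ref{eq:optimization problem2}), and since every KKT point of Problem~(\ref{eq:optimization problem}) produces such a pair of the same total Frobenius norm, the KKT point is unique and globally optimal (items 4 and 5).

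Item 6 is a short four-case analysis on the signs of $\bv^\top\bx$ and $\bu^\top\bx$: in each case $f(\bx;W)$ equals $\bz^\top\bx$, $\gamma\bz^\top\bx$, or a strictly positive combination of these two, all of which share $\sign(\bz^\top\bx)$ because $\gamma\in(0,1)$. For item 7, $y_i\bz^\top \bx_i\ge 1$ follows by comparing to $y_i f(\bx_i;W)=1$ via the sign pattern: the factor $\gamma<1$ on one of the terms in $f$ is replaced by $1$ in $\bz^\top \bx_i$, which only increases the expression. The norm bound $\|\bz\|\le \frac{2}{\kappa+\gamma}\|\bz^*\|$ I would obtain by constructing a feasible pair $(\tilde\bv,\tilde\bu)$ for~(\ref{eq:optimization problem2}) from the max-margin linear predictor $\bz^*$ (splitting $\bz^*$ between $\tilde\bv$ and $-\tilde\bu$ proportionally to $m_1$ and $m_2$ so that the signed combination in each constraint recovers $\bz^*$ up to a factor $1+\gamma$), evaluating its objective value, and combining optimality of $(\bv,\bu)$ with the triangle inequality $\|\bz\|\le \frac{m_1}{\sqrt m}\|\bv\|+\frac{m_2}{\sqrt m}\|\bu\|$ and Cauchy--Schwarz to extract the factor involving $\kappa=\sqrt{\min(m_1,m_2)/\max(m_1,m_2)}$.
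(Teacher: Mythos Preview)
Your overall architecture is right, and items 2--7 are essentially as in the paper once items 1--3 are in hand. The gap is in how you break the circularity for items 1--3. Your three moves are ordered (i) rule out $\lambda_i=0$, then (ii) use the resulting equalities $y_i f(\bx_i;W)=1$ to extract both the upper and lower bounds on $\lambda_i$, then (iii) verify the sign pattern. But step (i) cannot be done first: if $\lambda_{i_0}=0$, then $\bx_{i_0}$ is absent from every $\bv_j,\bu_j$, and feasibility $|f(\bx_{i_0};W)|\ge 1$ together with stationarity only yields $\max_q \lambda_q \gtrsim 1/(np)$, which is no contradiction unless you \emph{already} have an upper bound on the $\lambda$'s. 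So your proposed ordering is genuinely circular, and the remark that ``any pre-activation of $0$ would contradict the derived lower bound'' addresses a different issue (non-smooth selectors) rather than this one.

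The paper breaks the loop differently, and the trick is worth noting. First, it proves the \emph{upper} bound on the $\lambda$'s by a max argument: let $r$ achieve $\max_q\big(\sum_j \lambda_q \phi'_{q,\bv_j}+\sum_j \lambda_q \phi'_{q,\bu_j}\big)$; if this quantity is too large then $\lambda_r>0$ automatically, so complementary slackness gives $y_r f(\bx_r;W)=1$ at \emph{this single index only}, and substituting stationarity produces a diagonal term of order $-\gamma\xi\rmin^2$ against off-diagonals of order $np\xi$, forcing $|f(\bx_r;W)|>1$, a contradiction. Second, it proves the \emph{lower} bound using only feasibility $|f(\bx_i;W)|\ge 1$ (not equality) together with the already-established upper bound. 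Only then does $\lambda_i>0$ for all $i$ follow, giving item 1, and then the sign pattern (items 2--3) is read off from the now-quantified $\lambda_i$'s. Reordering your argument along these lines---upper bound first via the maximizer, then lower bound via feasibility, then active constraints, then sign pattern---closes the gap.
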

Note that by the above theorem, the KKT points possess very strong properties: the weight matrix is of rank at most $2$, there is margin maximization in parameter space, in function space the predictor has a linear decision boundary, there may not be margin maximization in predictor space, but the predictor maximizes the margin approximately within a factor of $\frac{2}{\kappa+\gamma}$. Note that if $\kappa=1$ (i.e., $m_1=m_2$) and $\gamma$ is roughly $1$, then we get margin maximization also in predictor space.      
We remark that variants of items~2,~5 and~6 were shown in \citet{sarussi2021towards} under a different assumption called \emph{Neural Agreement Regime} (as we discussed in the related work section).\footnote{In fact, the main challenge in our proof is to show that a property similar to their assumption holds in every KKT point in our setting.} 

The proof of \thmref{thm:implicit bias leaky relu} is given in \appref{app:proof of implicit bias leaky relu}.
We now briefly discuss the proof idea.
Since $W$ satisfies the KKT conditions of Problem~(\ref{eq:optimization problem}), then there are $\lambda_1,\ldots,\lambda_n$ such that for every $j \in [m]$ we have
\begin{equation*}
\label{eq:kkt condition w}
	\bw_j 
	= \sum_{i \in I} \lambda_i \nabla_{\bw_j} \left( y_i f(\bx_i; W) \right) 
	= a_j \sum_{i \in I} \lambda_i y_i \phi'_{i,\bw_j} \bx_i~,
\end{equation*}
where $\phi'_{i,\bw_j}$ is a subgradient  of $\phi$ at $\bw_j^\top \bx_i$.
Also we have $\lambda_i \geq 0$ for all $i$, and $\lambda_i=0$ if $y_i  f(\bx_i; W) \neq 1$. 
We prove strictly positive upper and lower bounds for each of the $\lambda_i$'s.  Since the $\lambda_i$'s are strictly positive, the KKT conditions show that the margin constraints are satisfied with equalities, i.e., part 1 of the theorem.  By leveraging these 
bounds on the $\lambda_i$'s
we also derive the remaining parts of the theorem.

The main assumption in
\thmref{thm:implicit bias leaky relu} is that 
$\rmin^2 \geq 3 \gamma^{-3} \rratio^2 n \max_{i\neq j} |\inner{\bx_i,\bx_j}|$.
\lemref{lem:random are orthogonal} below implies that if the inputs $\bx_i$ are drawn from a well-conditioned Gaussian distribution (e.g., $\mathsf{N}(0,I_d)$), then it suffices to require $n \leq O\big(\gamma^3 \sqrt{\tfrac{d}{\log n}}\big)$, i.e., $d \geq \tilde{\Omega}\left( n^2 \right)$ if $\gamma=\Omega(1)$.
\lemref{lem:random are orthogonal} holds more generally for a class of subgaussian distributions (see, e.g.,~\citet[Claim 3.1]{hu2020surprising}), and we state the result for Gaussians here for simplicity.

\begin{lemma}
\label{lem:random are orthogonal}
	Suppose that
	$\bx_1,\ldots,\bx_n$ are drawn i.i.d. from a $d$-dimensional Gaussian distribution $\mathsf{N}(0,\Sigma)$, where $\mathrm{Tr}[\Sigma] = d$ and $\norm{\Sigma}_2 = O(1)$. Suppose $n \le d^{O(1)}$.
	Then, 
	with probability at least $1 - n^{-10}$ we have $\frac{\norm{\bx_i}^2}{d} = 1 \pm O( \sqrt{\tfrac{\log n}{d}} )$ for all $i$, and $\frac{|\inner{\bx_i,\bx_j}|}{d} = O( \sqrt{\frac{\log n}{d}})$ for all $i \neq j$.
\end{lemma}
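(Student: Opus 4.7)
The plan is to reduce the problem to concentration for quadratic and bilinear forms in independent standard Gaussians. Writing $\bx_i = \Sigma^{1/2} \bg_i$ with $\bg_1,\ldots,\bg_n \iid \mathsf{N}(0, I_d)$, the diagonal terms become $\norm{\bx_i}^2 = \bg_i^\top \Sigma \bg_i$ and the off-diagonal terms become $\inner{\bx_i}{\bx_j} = \bg_i^\top \Sigma \bg_j$. The key spectral quantity entering the tail bounds will be $\norm{\Sigma}_F^2$, which is at most $\norm{\Sigma}_2 \cdot \mathrm{Tr}[\Sigma] = O(d)$ by the assumptions $\norm{\Sigma}_2 = O(1)$ and $\mathrm{Tr}[\Sigma] = d$.

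For the norm term, I would invoke the Hanson-Wright inequality, which yields, for some absolute constant $c > 0$,
\[
    \Pr\left[ \left| \bg_i^\top \Sigma \bg_i - \mathrm{Tr}[\Sigma] \right| > t \right] \le 2\exp\left( -c \min\left( \frac{t^2}{\norm{\Sigma}_F^2}, \frac{t}{\norm{\Sigma}_2} \right) \right).
\]
Plugging in the bounds above and choosing $t = C\sqrt{d \log n}$ for a sufficiently large absolute constant $C$, the quadratic term in the exponent is the minimum (this uses $n \le d^{O(1)}$, which keeps $\sqrt{d \log n}$ well below $d$ and therefore inside the subgaussian branch of Hanson-Wright), giving failure probability at most $n^{-12}$ per index $i$. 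A union bound over $i \in [n]$ then yields $\bigl|\norm{\bx_i}^2 - d\bigr| \le O\bigl(\sqrt{d \log n}\bigr)$ simultaneously for all $i$ with probability at least $1 - n^{-11}$, which rearranges to the claimed estimate $\norm{\bx_i}^2/d = 1 \pm O(\sqrt{\log n/d})$.

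For the off-diagonal terms with $i \neq j$, $\bg_i^\top \Sigma \bg_j$ is a decoupled bilinear form in independent standard Gaussians. A Hanson-Wright-type inequality for such forms produces the same tail with the same spectral quantities $\norm{\Sigma}_F^2$ and $\norm{\Sigma}_2$. Alternatively, one may condition on $\bg_j$, observe that $\bg_i^\top \Sigma \bg_j \mid \bg_j \sim \mathsf{N}(0, \norm{\Sigma \bg_j}^2)$, apply a Gaussian tail, and then control $\norm{\Sigma \bg_j}^2 = \bg_j^\top \Sigma^2 \bg_j$ by Hanson-Wright applied with $\Sigma^2$ in place of $\Sigma$ (using $\mathrm{Tr}[\Sigma^2] \le \norm{\Sigma}_2 \cdot \mathrm{Tr}[\Sigma] = O(d)$). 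Taking $t = C\sqrt{d \log n}$ again gives failure probability at most $n^{-12}$ per pair, and a union bound over the $\binom{n}{2} \le n^2$ pairs produces the claimed off-diagonal estimate with probability at least $1 - n^{-10}$. Combining the diagonal and off-diagonal union bounds via a final union bound yields the lemma.

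\textbf{Main obstacle.} The argument is entirely standard, so the principal subtlety is bookkeeping: one must track which regime of the Hanson-Wright tail is active for the chosen deviation $t = C\sqrt{d \log n}$, and the hypothesis $n \le d^{O(1)}$ is precisely what guarantees that we remain in the subgaussian (quadratic) branch rather than crossing into the subexponential (linear) branch, which would otherwise degrade the bound beyond the advertised $O(\sqrt{\log n/d})$. Everything else is a matter of choosing constants and applying the union bound.
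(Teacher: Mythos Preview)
Your proposal is correct and follows essentially the same route as the paper: the paper also writes $\bx_i=\Sigma^{1/2}\bar{\bx}_i$, applies Hanson--Wright for the norm concentration (stated there as concentration of $\|\Sigma^{1/2}\bar{\bx}_i\|$ around $\|\Sigma^{1/2}\|_F$ rather than of the squared norm, an equivalent formulation), and handles the cross terms by exactly your second alternative---conditioning on one vector to obtain a Gaussian tail with variance $\bx_j^\top\Sigma\bx_j=O(d)$. The only cosmetic differences are that the paper bounds the conditional variance via $\bx_j^\top\Sigma\bx_j\le\|\Sigma\|_2\|\bx_j\|^2$ using the already-established norm bound rather than invoking Hanson--Wright a second time on $\Sigma^2$, and it does not separately mention the decoupled bilinear version.
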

The proof of Lemma~\ref{lem:random are orthogonal} is provided in Appendix~\ref{app:random.orthogonal}. 

By \thmref{thm:implicit bias leaky relu}, if the data points are nearly orthogonal 
then every KKT point of Problem~(\ref{eq:optimization problem}) satisfies items 1-7 there. It leaves open the question of whether gradient flow converges to a KKT point. By \thmref{thm:known KKT}, in order to prove convergence to a KKT point, it suffices to show that there exists time $t_0$ where $\hat{L}(W(t_0)) < \frac{\log(2)}{n}$. In the following theorem we show that such $t_0$ exists, regardless of the initialization of gradient flow (the theorem holds both for the logistic and the exponential losses).

\begin{theorem} \label{thm:convergence leaky ReLU}
Consider gradient flow on a the network from \eqref{eq:notations for neurons} w.r.t. a dataset that satisfies the assumption from \thmref{thm:implicit bias leaky relu}. Then, 
there exists a finite time $t_0$ such that for all $t \geq t_0$ we have $\hat L(W(t)) < \log(2)/n$. 
\end{theorem}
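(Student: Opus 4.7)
The plan is to show that gradient flow drives the minimum margin $M(t) := \min_i y_i f(\bx_i; W(t))$ to $+\infty$, which immediately forces $\hat L(W(t)) \to 0$ and hence $\hat L(W(t)) < \log(2)/n$ for all sufficiently large $t$.

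First, I would derive a differential inequality for each individual margin $y_i f(\bx_i; W(t))$. Writing $g_k := -\ell'(y_k f(\bx_k; W))$ and $S_{ik} := \sum_{j=1}^m \phi'(\bw_j^\top \bx_i)\phi'(\bw_j^\top \bx_k)$, substituting $\dot{\bw}_j = -\nabla_{\bw_j} \hat L(W)$ into the chain rule and using $a_j^2 = 1/m$ yields
\[
\frac{d}{dt}\bigl[y_i f(\bx_i; W(t))\bigr] \;=\; \frac{1}{nm}\sum_{k=1}^n g_k\, y_i y_k \inner{\bx_i}{\bx_k}\, S_{ik}.
\]
The crucial structural input is that every Clarke subgradient of the leaky ReLU $\phi$ lies in $[\gamma,1]$, so $S_{ik} \in [\gamma^2 m,\,m]$ uniformly in $i,k$ and in $W$.

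Next, I would split the $k=i$ self-term from the $k\neq i$ cross-terms. The self-term contributes at least $g_i \gamma^2 \rmin^2/n$ (using $\norm{\bx_i}^2 \geq \rmin^2$ and $S_{ii}\geq \gamma^2 m$), while the cross-terms are together bounded in magnitude by $c(G - g_i)/n$, where $c := \max_{i\neq j}|\inner{\bx_i}{\bx_j}|$ and $G(t) := \sum_k g_k(t)$. I would then specialize to an index $i_0(t)$ achieving $M(t)$: since $\ell$ is convex and decreasing, $g_k$ is a decreasing function of $y_k f(\bx_k;W)$, so $g_{i_0} = \max_k g_k$ and hence $G \leq n g_{i_0}$. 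Combining these observations and invoking the data assumption $\rmin^2 \geq 3\gamma^{-3}\rratio^2 n c$ (which in particular gives $\gamma^2 \rmin^2/n \geq 3c$), I obtain
\[
\dot M(t) \;\geq\; g_{i_0}\!\left(\frac{\gamma^2 \rmin^2}{n} - c\right) \;\geq\; 2c\, g(M(t)),
\]
where $g(q) := -\ell'(q)$.

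Finally, I would solve this scalar ODE inequality for both loss functions. For the exponential loss, $g(q) = e^{-q}$, so $\frac{d}{dt}e^{M(t)} \geq 2c$, giving $e^{M(t)} \geq e^{M(0)} + 2ct \to \infty$. For the logistic loss, $g(q) = 1/(1+e^q)$, and the analogous integration yields $M(t) + e^{M(t)} \geq M(0) + e^{M(0)} + 2ct \to \infty$. In both cases $M(t) \to \infty$, so $\max_i \ell(y_i f(\bx_i; W(t))) \to 0$ and $\hat L(W(t)) \to 0$; in particular there is a finite $t_0$ with $\hat L(W(t_0)) < \log(2)/n$, and monotonicity of $\hat L$ along gradient flow then gives $\hat L(W(t)) < \log(2)/n$ for all $t \geq t_0$. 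The main technical obstacle is the non-smoothness of leaky ReLU at $0$: the identity for $\frac{d}{dt}(y_i f(\bx_i;W))$ must be read with Clarke subgradients, and $M(t)$ is only locally Lipschitz with possibly non-unique minimizers. These issues are manageable because every Clarke subgradient of $\phi$ at $0$ still lies in $[\gamma,1]$, preserving the uniform bounds on $S_{ik}$, and the envelope/chain-rule arguments needed to validate $\dot M \geq 2c\,g(M)$ for a.e.\ $t$ are standard in the non-smooth homogeneous setting developed in \citet{lyu2019gradient,ji2020directional}.
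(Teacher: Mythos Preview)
Your proof is correct and takes a genuinely different route from the paper's. The paper constructs the separating direction $\hat\mu = \sum_i y_i x_i$, builds a test matrix $Z$ with rows $a_j\hat\mu/\snorm{\hat\mu}$, and uses $\snorm{\nabla\hat L}_F \geq \sip{\nabla\hat L}{-Z}$ to obtain a proxy Polyak--{\L}ojasiewicz inequality $\snorm{\nabla\hat L(W)}_F \geq c_0\, \hat G(W)$ with $\hat G(W) := -\tfrac1n\sum_i\ell'(y_if(x_i;W))$. From $\frac{d}{dt}\hat L = -\snorm{\nabla\hat L}_F^2$ it then argues that $\hat G$ must drop below $\log(2)/(3n)$ in finite time, which forces interpolation and hence $\hat L \leq 2\hat G$. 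You instead track the minimum margin $M(t)$ directly, using that the diagonal term in $\frac{d}{dt}[y_i f(x_i;W)]$ dominates the cross-terms by near-orthogonality and that the minimizing index carries the largest $g_k$. Your approach is more elementary --- it avoids the auxiliary direction $\hat\mu$ and the PL detour, and yields the stronger pointwise conclusion $M(t)\to\infty$ --- while the paper's proxy-PL lemma is reused verbatim in its finite-step-size gradient-descent analysis, so that route buys uniformity across the two settings.

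One minor caveat: your final ODE constant $2c$ vanishes when the data are exactly orthogonal ($c=0$), making the inequality $\dot M \geq 2c\,g(M)$ vacuous in that edge case. The fix is immediate --- keep your earlier bound $\dot M \geq (\gamma^2\rmin^2/n - c)\,g(M)$ and note $\gamma^2\rmin^2/n - c \geq \tfrac{2}{3}\gamma^2\rmin^2/n > 0$ --- but it is worth stating so the argument covers all admissible datasets.
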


We prove the theorem in \appref{app:proof of convergence leaky ReLU}. Combining Theorems~\ref{thm:known KKT},~\ref{thm:implicit bias leaky relu} and~\ref{thm:convergence leaky ReLU}, we get the following corollary: 

\begin{corollary}\label{cor:gf.implicit.bias}
    Consider gradient flow on the network from \eqref{eq:notations for neurons} w.r.t. a dataset that satisfies the assumption from \thmref{thm:implicit bias leaky relu}.
    Then, gradient flow converges to zero loss, and converges in direction to a weight matrix $W$ that satisfies items 1-7 from \thmref{thm:implicit bias leaky relu}.
\end{corollary}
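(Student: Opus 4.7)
The plan is to track the evolution of the minimum margin $\psi_{\min}(t) := \min_{i \in [n]} y_i f(\bx_i; W(t))$ and show that $\psi_{\min}(t) \to +\infty$ as $t \to \infty$. This suffices: both $\ell(\psi) = e^{-\psi}$ and $\ell(\psi) = \log(1 + e^{-\psi})$ are decreasing with $\ell(\psi) \to 0$ as $\psi \to \infty$, so $\hat L(W(t)) \leq \ell(\psi_{\min}(t)) \to 0$ and therefore some finite $t_0$ satisfies $\hat L(W(t_0)) < \log(2)/n$; monotonicity of $\hat L$ along gradient flow preserves this bound for all later times.

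To derive the dynamics of $\psi_i$, I would compute $\dot{\psi}_i = y_i \ip{\nabla_W f(\bx_i; W)}{\dot W}$ with $\dot W = -\nabla_W \hat L$. Using $\nabla_{\bw_j} f(\bx; W) = a_j \phi'(\bw_j^\top \bx)\bx$ and $a_j^2 = 1/m$, this produces
\[
\dot \psi_i = \frac{1}{nm}\sum_{k=1}^n p_k\, y_i y_k\, \inner{\bx_i,\bx_k} \sum_{j=1}^m \phi'(\bw_j^\top \bx_i)\,\phi'(\bw_j^\top \bx_k),
\]
where $p_k := -\ell'(\psi_k) > 0$. Since every Clarke subgradient of leaky ReLU lies in $[\gamma, 1]$, the diagonal $k=i$ term contributes at least $p_i \gamma^2 \|\bx_i\|^2/n \geq p_i \gamma^2 \rmin^2/n$, while each $k \neq i$ term is bounded in absolute value by $p_k |\inner{\bx_i,\bx_k}|/n \leq p_k M / n$ with $M := \max_{i\neq j}|\inner{\bx_i,\bx_j}|$. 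Combining,
\[
\dot \psi_i \;\geq\; \frac{p_i \gamma^2 \rmin^2}{n} - \frac{M}{n} \sum_{k \neq i} p_k.
\]

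Restricting to $i^*(t) \in \argmin_i \psi_i(t)$ and using that $p_k = -\ell'(\psi_k)$ is strictly decreasing in $\psi_k$ for both losses, we get $p_{i^*} = \max_k p_k$. Invoking the hypothesis $M \leq \gamma^3 \rmin^2/(3\rratio^2 n) \leq \gamma^3 \rmin^2/(3n)$ then yields
\[
\dot\psi_{i^*} \;\geq\; p_{i^*}\l(\frac{\gamma^2 \rmin^2}{n} - M\r) \;\geq\; \frac{2 \gamma^2 \rmin^2}{3n}\, p_{i^*}.
\]
Since the pointwise minimum of finitely many Lipschitz functions is itself Lipschitz and a.e.\ differentiable with derivative equal to that of some active coordinate, this extends to $\dot\psi_{\min}(t) \geq \tfrac{2\gamma^2 \rmin^2}{3n}\, p_{\max}(t)$ a.e. For the exponential loss, $p_{\max} = e^{-\psi_{\min}}$, so $\tfrac{d}{dt} e^{\psi_{\min}} \geq 2\gamma^2\rmin^2/(3n)$, forcing $\psi_{\min}(t) \to \infty$ (logarithmically in $t$). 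For the logistic loss, $p_{\max} = 1/(1+e^{\psi_{\min}}) \geq \min\{1/2,\, e^{-\psi_{\min}}/2\}$, so $\psi_{\min}$ grows at rate at least $\gamma^2\rmin^2/(3n)$ until it becomes positive and then at least as fast as in the exponential case, so again $\psi_{\min}(t) \to \infty$.

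The main technical subtlety lies in the two sources of non-smoothness: the leaky ReLU (so that the gradient-flow trajectory is defined only via $\partial^\circ \hat L$) and the min operation in $\psi_{\min}$. Both are handled cleanly by the observation that $\phi'(\bw_j^\top \bx) \in [\gamma, 1]$ for \emph{every} element of the Clarke subdifferential, so the quadratic-form lower bound holds regardless of which subgradient the flow selects, and by absolute continuity of $\psi_{\min}$ as a pointwise minimum of Lipschitz $\psi_i$'s. The rest is elementary integration of the differential inequality.
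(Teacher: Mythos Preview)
Your proof is correct and takes a genuinely different route from the paper. The paper establishes a proxy Polyak--{\L}ojasiewicz inequality $\snorm{\nabla \hat L(W(t))}_F \geq c\,\hat G(W(t))$ by testing $\nabla\hat L$ against the direction $Z$ built from $\hat\mu = \sum_i y_i x_i$, then uses $\frac{d}{dt}\hat L = -\snorm{\nabla\hat L}^2 \leq -c^2 \hat G^2$ to force $\hat G$ (and hence $\hat L$) below $\log(2)/n$ in finite time. You instead track the minimum margin $\psi_{\min}$ directly and derive the differential inequality $\dot\psi_{\min} \gtrsim -\ell'(\psi_{\min})$ by comparing the diagonal term $p_i\gamma^2\rmin^2/n$ against the off-diagonal contribution $M\sum_{k\neq i} p_k$, using that $p_{i^*}=\max_k p_k$ at the argmin. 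Both arguments rest on the same structural fact---near-orthogonality makes the diagonal of the kernel $\sip{\nabla f(x_i)}{\nabla f(x_k)}$ dominate---but package it differently. Your route is somewhat more direct and yields $\psi_{\min}\to\infty$ (hence $\hat L\to 0$) without needing to invoke \thmref{thm:known KKT} for that part; the paper's route is a bit cleaner on the non-smoothness side since it only differentiates $\hat L$ along the flow rather than each $\psi_i$. Your treatment of the Clarke subdifferential is fine: even if the subgradient $g_i'$ appearing in $\dot\psi_i = y_i\sip{g_i'}{\dot W}$ differs from the subgradient $g_i$ used in $\dot W$, both have entries $a_j\phi_j' x_i$ with $\phi_j'\in[\gamma,1]$, so $\sip{g_i'}{g_i}\geq \gamma^2\snorm{x_i}^2$ and the cross-term bound $|\sip{g_i'}{g_k}|\leq M$ still hold. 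To complete the corollary you should state explicitly that once $\hat L(W(t_0))<\log(2)/n$, \thmref{thm:known KKT} supplies directional convergence to a KKT point of Problem~\eqref{eq:optimization problem}, and \thmref{thm:implicit bias leaky relu} then gives items 1--7.
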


\section{Non-Asymptotic Analysis of the Implicit Bias}\label{sec:gd}
In this section, we study the implicit bias of gradient descent with a fixed step size following random initialization (refer to Section~\ref{sec:preliminaries} for the definition of gradient descent).  Our results in this section are for the logistic loss $\ell(z) = \log(1+\exp(-z))$ but could be extended to the exponential loss as well.  We shall assume the activation function $\phi$ satisfies $\phi(0)=0$ and is twice differentiable and there exist constants $\gamma\in (0,1], H >0$ such that
\[ 0 <\gamma \leq \phi'(z)\leq 1,\quad \text{and} \quad |\phi''(z)|\leq H.\] 
We shall refer to functions satisfying the above properties as \textit{$\gamma$-leaky, $H$-smooth}.  Note that such functions are not necessarily homogeneous.  Examples of such functions are any smoothed approximation to the leaky ReLU that is zero at the origin.  One such example is: $\phi(z) =\gamma z + (1-\gamma) \log\l(\f 12 (1+\exp(z))\r)$, which is $\gamma$-leaky and $\nicefrac 14$-smooth (see Figure~\ref{fig:smooth.leaky} in the appendix for a side-by-side plot of this activation with the standard leaky ReLU).

We next introduce the definition of stable rank~\citep{rudelson2007sampling}.  
\begin{definition}
The stable rank of a matrix $W\in \R^{m\times d}$ is  $\stablerank(W) = \snorm{W}_F^2/\snorm{W}_2^2$.  
\end{definition}
The stable rank is in many ways analogous to the classical rank of a matrix but is considerably more well-behaved.  For instance, consider the diagonal matrix $W\in \R^{d\times d}$ with diagonal entries equal to $1$ except for the first entry which is equal to $\eps \geq 0$.  As $\eps \to 0$, the classical rank of the matrix is equal to $d$ until $\eps$ exactly equals $0$, while on the other hand the stable rank smoothly decreases from $d$ to $d-1$.  For another example, suppose again $W\in \R^{d\times d}$ is diagonal with $W_{1,1} = 1$ and $W_{i,i} = \exp(-d)$ for $i\geq 2$.  The classical rank of this matrix is exactly equal to $d$, while the stable rank of this matrix is $1 + o_d(1)$.

With the above conditions in hand, we can state our main theorem for this section. 

\begin{restatable}{theorem}{gdstablerank}\label{thm:gdstablerank}
Suppose that $\phi$ is a $\gamma$-leaky, $H$-smooth activation.  For training data $\{(x_i, y_i)\}_{i=1}^n \subset \R^d \times \{\pm 1 \}$, let $\rmax = \max_i \snorm{x_i}$ and $\rmin = \min_i \snorm{x_i}$, and suppose $\rratio = \rmax/\rmin$ is at most an absolute constant.  Denote by $C_R := 10\rratio^2\gamma^{-2}+10$. Assume the training data satisfies,
\[ \rmin^2 \geq 5\gamma^{-2} C_R n \max_{i\neq j} |\sip{x_i}{x_j}|.\] 
There exist absolute constants $C_1, C_2>1$ (independent of $m$, $d$, and $n$) such that the following holds.  For any $\delta \in (0,1)$, if the step-size satisfies $\alpha \leq \gamma^2 (5n \rmax^2 \rratio^2 C_R \max(1,H) )^{-1}$, and $\sinit \leq \alpha \gamma^2 \rmin (72\rratio C_Rn \sqrt{md\log(4m/\delta)})^{-1}$, then with probability at least $1-\delta$ over the random initialization of gradient descent, the trained network satisfies:
\begin{enumerate} 
\item The empirical risk under the logistic loss is driven to zero: 
\[  \text{ for all $t\geq 1$,} \quad \hat L(\Wt t) \leq \sqrt{ \f{ C_1 n}{\rmin^2 \alpha t}}.\]
\item The $\ell_2$ norm of each neuron grows to infinity: 
\[ \text{for all $j\in [m]$, \quad} \snorm{\wt t_j}_2 \to \infty.\] 
\item The stable rank of the weights throughout the gradient descent trajectory satisfies,
\[ \sup_{t\geq 1} \l \{ \stablerank(\Wt t)\r\}  \leq C_2.\]
\end{enumerate}
\end{restatable}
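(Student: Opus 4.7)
The plan is to establish an inductive structural invariant on the weight matrices $\Wt t$ that simultaneously yields all three items. Writing $g_i^{(t)} := -\ell'(y_i f(\bx_i; \Wt t)) \in (0,1)$, the gradient descent update reads
\[ \wt{t+1}_j = \wt t_j + \frac{\alpha a_j}{n}\sum_{i=1}^n g_i^{(t)}\, y_i\, \phi'(\sip{\wt t_j}{\bx_i})\, \bx_i. \]
The central claim I would prove by induction on $t \geq 1$ is the \emph{sign invariant}: $\sgn(\sip{\wt t_j}{\bx_i}) = \sgn(a_j y_i)$ for every $i, j$. Under this invariant, $\phi'(\sip{\wt t_j}{\bx_i}) \in [\gamma,1]$ with uniform lower bound $\gamma$ for all $(i,j)$, and combined with the near-orthogonality hypothesis this pins down the macroscopic structure of each neuron's trajectory.

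For the base case $t=1$, the choice $\sinit \lesssim \alpha\gamma^2\rmin/(n\rratio C_R\sqrt{md\log(m/\delta)})$ makes the Gaussian initialization negligible: with probability $1-\delta$ we have $|\sip{\wt 0_j}{\bx_i}| \lesssim \sinit\,\rmax\sqrt{d\log(m/\delta)}$, strictly dominated by the first-step signal, and a direct computation of $\wt 1_j$ combined with $\rmin^2 \geq C_R \gamma^{-2} n \max_{i\neq k}|\sip{\bx_i}{\bx_k}|$ yields the invariant at $t=1$. For the inductive step I would expand
\[ \sip{\wt{t+1}_j}{\bx_i} = \sip{\wt 0_j}{\bx_i} + \frac{\alpha a_j}{n}\sum_{s\leq t}\sum_k g_k^{(s)} y_k\, \phi'(\sip{\wt s_j}{\bx_k})\sip{\bx_k}{\bx_i} \]
and split the inner sum into the diagonal $k=i$ contribution (which has the correct sign $a_j y_i$ and magnitude at least $\tfrac{\alpha\gamma \rmin^2}{n\sqrt m}\sum_{s\leq t}g_i^{(s)}$) plus off-diagonal cross-interference bounded by a factor $\gtrsim \gamma^{-1}C_R$ smaller; the diagonal strictly dominates and the invariant propagates.

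The three items then follow. For Item~3, take the unit vector $u \in \R^m$ with $u_j = \sgn(a_j)/\sqrt m$; the update formula gives $u^\top \Wt t = \tfrac{\alpha}{nm}\sum_{s<t}\sum_i g_i^{(s)} y_i\, \bx_i\cdot\bigl(\sum_{j=1}^m \phi'(\sip{\wt s_j}{\bx_i})\bigr) + (\text{initialization noise})$. The sign invariant forces $\sum_j \phi'(\cdot) \in [\gamma m, m]$, so near-orthogonality yields $\snorm{u^\top \Wt t}^2 \gtrsim (\alpha\gamma\rmin/n)^2\bigl(\sum_{s<t,i}g_i^{(s)}\bigr)^2$, while the update formula with $\phi'\leq 1$ gives the matching upper bound $\snorm{\Wt t}_F^2 \lesssim (\alpha\rmax/n)^2\bigl(\sum_{s<t,i}g_i^{(s)}\bigr)^2$; hence $\stablerank(\Wt t) \leq \snorm{\Wt t}_F^2/\snorm{u^\top \Wt t}^2 \leq (\rratio/\gamma)^2 \leq C_2$. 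For Item~1, the $O(H)$-smoothness of $\hat L$ with $\alpha H \lesssim 1$ gives a descent lemma $\hat L(\Wt{t+1}) \leq \hat L(\Wt t) - \tfrac{\alpha}{2}\snorm{\nabla \hat L(\Wt t)}_F^2$, and the sign invariant with $\phi'\geq \gamma$ and near-orthogonality yields a Polyak--Lojasiewicz-type lower bound $\snorm{\nabla \hat L(\Wt t)}_F^2 \gtrsim \gamma^2 \rmin^2 \hat L(\Wt t)^2/n$; telescoping delivers the $O(1/\sqrt t)$ rate with constant $C_1$. For Item~2, the same sign invariant combined with $\phi'\geq \gamma$ implies $\snorm{\wt t_j} \gtrsim \tfrac{\alpha\gamma\rmin}{n\sqrt m}\sum_{s<t}\sum_i g_i^{(s)}$; since $\hat L \to 0$ with the logistic loss forces $\sum_s g_i^{(s)}\to\infty$ (because $g_i^{(s)} \gtrsim \ell(y_i f(\bx_i;\Wt s))$ once the margin is positive), each neuron norm diverges.

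The main obstacle is maintaining the sign invariant uniformly over the entire unbounded trajectory while the per-step variations of $\phi'$ across neurons in the same sign class---which are not zero, since $\phi'$ is only known to lie in $[\gamma,1]$---could in principle accumulate and eventually flip the diagonal-dominance inequality. The resolution is that the sign invariant depends only on \emph{sign}-level information about $\phi'$ through its uniform bounds $\gamma$ and $1$, not on exact equality across neurons, so the pointwise fluctuations of $\phi'$ enter only as constant-factor slack in the diagonal-versus-cross-interference comparison. Coordinating this slack with the logistic-loss tail (to control $\sum_s g_i^{(s)}$) and with the Gaussian initialization concentration is what produces the tight quantitative coupling of $\alpha$, $\sinit$, and the near-orthogonality parameter prescribed in the theorem statement.
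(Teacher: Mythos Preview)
Your central inductive hypothesis (the sign invariant $\sgn(\sip{\wt t_j}{x_i})=\sgn(a_jy_i)$) is different from the paper's, and the induction does not close as you describe.  In the diagonal/off-diagonal split of $\sip{\wt{t+1}_j}{x_i}$, the diagonal term has magnitude $\gtrsim\frac{\alpha\gamma\rmin^2}{n\sqrt m}\sum_{s\le t}g_i^{(s)}$ while the off-diagonal is $\lesssim\frac{\alpha}{n\sqrt m}\max_{k\neq i}|\sip{x_i}{x_k}|\sum_{s\le t}\sum_{k\neq i}g_k^{(s)}$.  Using the near-orthogonality hypothesis, the ratio diagonal/off-diagonal becomes $\gtrsim \frac{C_R}{\gamma}\cdot\frac{\sum_s g_i^{(s)}}{\sum_s\hat G(\Wt s)}$, so you need a \emph{per-sample} lower bound $\sum_s g_i^{(s)}\gtrsim\frac{\gamma}{C_R}\sum_s\hat G(\Wt s)$ to make the diagonal dominate.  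This is exactly a (summed) \emph{loss-ratio bound} $\max_{i,k}g_k^{(s)}/g_i^{(s)}\le C_R$, which you never establish.  The obstacle you name---fluctuations of $\phi'$ across neurons---is not the real issue: $\phi'\in[\gamma,1]$ contributes only a constant factor, whereas the $g_i^{(s)}$ can a priori differ exponentially across samples (one sample's margin could grow much faster than another's, making $g_i^{(s)}\ll\hat G(\Wt s)$).  The paper proves the loss-ratio bound by a separate, rather delicate induction on the \emph{exponential} loss ratio (its Lemma on loss ratios and the supporting Proposition), and this is the key structural lemma driving the whole argument.

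There is a second concrete bug in Item~3.  Writing $G_i:=\sum_{s<t}g_i^{(s)}$, the vector $u^\top(\Wt t-\Wt 0)=\frac{\alpha}{n}\sum_i\bar D_i\,y_ix_i$ with $\bar D_i\in[\gamma G_i,G_i]$ has squared norm $\approx\frac{\alpha^2\rmin^2}{n^2}\sum_iG_i^2$ by near-orthogonality, not $\frac{\alpha^2\gamma^2\rmin^2}{n^2}(\sum_iG_i)^2$ as you claim; these differ by a factor $n$ when the $G_i$ are comparable.  Paired with your (correct but loose) triangle-inequality Frobenius bound $\|\Wt t\|_F^2\lesssim\frac{\alpha^2\rmax^2}{n^2}(\sum_iG_i)^2$, this gives stable rank $O(n)$, not $O(1)$.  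The paper recovers the missing factor by sharpening the Frobenius bound to $\|\Wt t\|_F\lesssim\frac{\rmax\alpha}{\sqrt n}\sum_s\hat G(\Wt s)$ via the loss-ratio bound (expanding $\|\nabla_j\hat L\|^2$ and using $g_k^{(s)}\le C_R\hat G(\Wt s)$ to control the cross terms), and pairs it with a spectral lower bound obtained by projecting onto $\hat\mu/\|\hat\mu\|\in\R^d$ on the right rather than $u\in\R^m$ on the left.
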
 

We now make a few remarks on the above theorem.  We note that the assumption on the training data is the same as in Theorem~\ref{thm:implicit bias leaky relu} up to constants (treating $\gamma$ as a constant), and is satisfied in many settings when $d\gg n^2$ (see Lemma~\ref{lem:random are orthogonal}).  

For the first part of the theorem, we show that despite the non-convexity of the underlying optimization problem, gradient descent can efficiently 
minimize the training error, driving the empirical risk to zero.  

For the second part of the theorem, note that since the empirical risk under the logistic loss is driven to zero and the logistic loss is decreasing and satisfies $\ell(z)>0$ for all $z$, it is necessarily the case that the spectral norm of the first layer weights $\snorm{\Wt t}_2 \to \infty$.  (Otherwise, $\hat L(\Wt t)$ would be bounded from below by a constant.)  This leaves open the question of whether only a few neurons in the network are responsible for the growth of the magnitude of the spectral norm, and part (2) of the theorem resolves this question.  

The third part of the theorem is perhaps the most interesting one.   In Theorem~\ref{thm:implicit bias leaky relu}, we showed that for the standard leaky ReLU activation trained on nearly-orthogonal data with gradient flow, the asymptotic true rank of the network is at most 2.  By contrast, Theorem~\ref{thm:gdstablerank} shows that the stable rank of neural networks with $\gamma$-leaky, $H$-smooth activations trained by gradient descent have  a constant stable rank after the first step of gradient descent and the rank remains bounded by a constant throughout the trajectory.   Note that at initialization, by standard concentration bounds for random matrices (see, e.g.,~\citet{vershynin}), the stable rank satisfies
\[ \stablerank(\Wt 0) \approx \Theta(\nicefrac{md}{(\sqrt{m} + \sqrt d)^2}) = \Omega(m\wedge d ),\] 
so that Theorem~\ref{thm:gdstablerank} implies that gradient descent drastically reduces the rank of the matrix after just one step. 

The details for the proof of Theorem~\ref{thm:gdstablerank} are provided in Appendix~\ref{app:gdstablerank}, but we provide some of the main ideas for the proofs of part 1 and 3 of the theorem here.  For the first part, note that training data satisfying the assumptions in the theorem are linearly separable with a large margin (take, for instance, the vector $\summ i n y_i x_i$).  We use this to establish a proxy Polyak--Lojasiewicz (PL) inequality~\citep{frei2021proxyconvex} that takes the form $\snorm{\nabla \hat L(\Wt t)}_F \geq c \hat G(\Wt t)$ for some $c>0$, where $\hat G(\Wt t)$ is the empirical risk under the sigmoid loss $-\ell'(z) = 1/(1+\exp(z))$.  Because we consider smoothed leaky ReLU activations, we can use a smoothness-based analysis of gradient descent to show $\snorm{\nabla \hat L(\Wt t)}_F\to 0$, which implies $\hat G(\Wt t)\to 0$ by the proxy PL inequality.  We then translate guarantees for $\hat G(\Wt t)$ into guarantees for $\hat L(\Wt t)$ by comparing the sigmoid and logistic losses. 

For the third part of the theorem, we need to establish two things: $(i)$ an upper bound for the Frobenius norm, and $(ii)$ a lower bound for the spectral norm.   To develop a good upper bound for the Frobenius norm, we first establish a structural condition we refer to as a \textit{loss ratio bound} (see Lemma~\ref{lemma:loss.ratio}).  In the gradient descent updates, each sample is weighted by a quantity that scales with the sigmoid loss $-\ell'(y_i f(x_i;\Wt t)) \in (0,1)$.  We show that these $-\ell'$ losses grow at approximately the same rate for each sample throughout training, and that this allows for a tighter upper bound for the Frobenius norm.  Loss ratio bounds were key to the generalization analysis of two previous works on benign overfitting~\cite{chatterji2020linearnoise,frei2022benign} and may be of independent interest.  In Proposition~\ref{prop:exponential.loss.ratio.orthogonality.persistence} we provide a general approach for proving loss ratio bounds that can hold for more general settings than the ones we consider in this work (i.e., data which are not high-dimensional, and networks with non-leaky activations).  The lower bound on the spectral norm follows by identifying a single direction $\hat \mu := \summ i n y_i x_i$ that is strongly correlated with every neuron's weight $w_j$, in the sense that $\sip{\nicefrac{\wt t_j}{\snorm{\wt t_j}}}{\hat \mu}$ is relatively large for each $j\in [m]$.  Since every neuron is strongly correlated with this direction, this allows for a good lower bound on the spectral norm.

\section{Implications of the Implicit Bias and Empirical Observations}

\begin{figure}
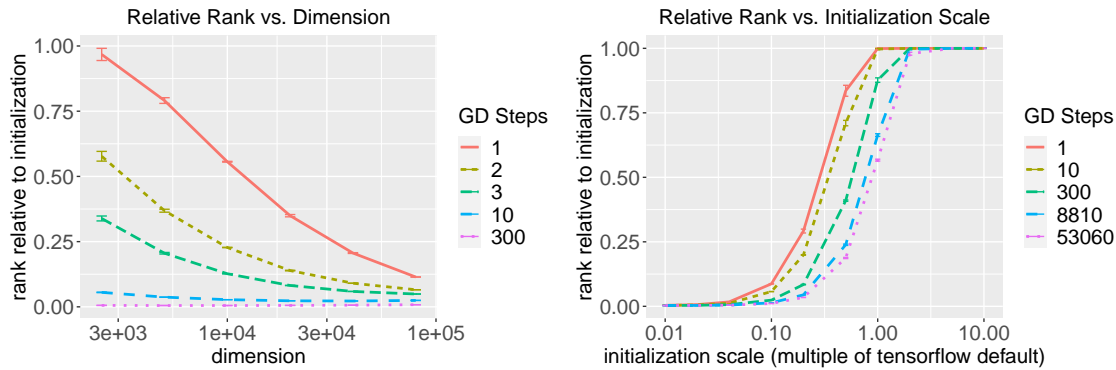

    \centering
    \includegraphics[width=0.45\textwidth]{rank_vs_dimension.pdf}
    \includegraphics[width=0.45\textwidth]{rank_vs_initialization_016.pdf}
    \caption{Relative reduction in the stable rank of two-layer nets trained by gradient descent for Gaussian mixture model data (cf.~\eqref{eq:binary.cluster.distribution}).  The rank reduction happens more quickly as the dimension grows (left; initialization scale $50\times$ smaller than default TensorFlow, $\alpha=0.01$) and as the initialization scale decreases (right; $d=10^4$, $\alpha=0.16$).}
    \label{fig:rank.vs.dimension.init}
\end{figure}

The results in the preceding sections show a remarkable simplicity bias of gradient-based optimization when training two-layer networks with leaky activations on sufficiently high-dimensional data.  For gradient flow, regardless of the initialization, the learned network has a \textit{linear} decision boundary, even when the labels $y$ are some nonlinear function of the input features and when the network has the capacity to approximate any continuous function.  With our analysis of gradient descent, we showed that the bias towards producing low-complexity networks (as measured by the stable rank of the network) is something that occurs quickly following random initialization, provided the initialization scale is small enough.

In some distributional settings, this bias towards rather simple classifiers may be beneficial, while in others it may be harmful.  To see where it may be beneficial, consider a Gaussian mixture model distribution $\mathsf P$, parameterized by a mean vector $\mu \in \R^d$, where samples $(x,y)\sim \mathsf P$ have a distribution as follows:
\begin{equation} \label{eq:binary.cluster.distribution}  y \sim \mathsf {Uniform}(\{\pm 1 \}),\quad x|y \sim y \mu + z,\quad z \sim \mathsf N(0,I_d).
\end{equation}
The linear classifier $x \mapsto \sign(\sip{\mu}{x})$
performs optimally for this distribution, and so the implicit bias of gradient descent towards low-rank classifiers (and of gradient flow towards linear decision boundaries) for high-dimensional data could in principle be helpful for allowing neural networks trained on such data to generalize well for this distribution.   Indeed, as shown by~\citet{chatterji2020linearnoise}, since $\snorm{x_i}^2 \approx d + \snorm{\mu}^2$ while $|\sip{x_i}{x_j}| \approx \snorm{\mu}^2 + \sqrt{d}$ for $i\neq j$, provided $\snorm{\mu} = \Theta(d^{\beta})$ and $d \gg n^{\f 1{1-2\beta}}\vee n^2$ for $\beta \in (0,1/2)$, the assumptions in Theorem~\ref{thm:gdstablerank} hold. 
Thus, gradient descent on two-layer networks with $\gamma$-leaky, $H$-smooth activations, the empirical risk is driven to zero and the stable rank of the network is constant after the first step of gradient descent.  In this setting, \citet{frei2022benign} recently showed that such networks also achieve minimax-optimal generalization error.  This shows that the implicit bias towards classifiers with constant rank can be beneficial in distributional settings where linear classifiers can perform well. 

On the other hand, the same implicit bias can be harmful if the training data come from a distribution that does not align with this bias.  Consider the noisy 2-xor distribution $\calD_{\mathsf{xor}}$ defined by $x = z + \xi$ where $z \sim \mathsf{Uniform}(\{\pm \mu_1,\pm \mu_2\})$, where $\mu_1,\mu_2$ are orthogonal with identical norms,
$\xi \sim \mathsf N(0,I_d)$, 
and $y = \sign(|\sip{\mu_1}{x}| - |\sip{\mu_2}{x}|)$.  Then every linear classifier achieves 50\% test error on $\calD_{\mathsf{xor}}$.  Moreover, provided $\snorm{\mu_i} = \Theta(d^{\beta})$ for $\beta < 1/2$, by the same reasoning in the preceding paragraph the assumptions needed for Theorem~\ref{thm:implicit bias leaky relu} are satisfied provided $d\gg n^{\f 1{1-2\beta}}\vee n^2$.  In this setting, regardless of the initialization, by Theorem~\ref{thm:implicit bias leaky relu} the limit of gradient flow produces a neural network which has a linear decision boundary and thus achieves 50\% test error.

Thus, the implicit bias can be beneficial in some settings and harmful in others.  Theorem~\ref{thm:gdstablerank} and Lemma~\ref{lem:random are orthogonal} suggest that the relationship between the input dimension and the number of samples, as well as the initialization variance, can influence how quickly gradient descent finds low-rank networks. 
In Figure~\ref{fig:rank.vs.dimension.init} we examine these factors for two-layer nets trained on a Gaussian mixture model distribution (see Appendix~\ref{app:experiments} for experimental details).  We see that the bias towards rank reduction increases as the dimension increases and the initialization scale decreases, as suggested by our theory.   Moreover, it appears that the initialization scale is more influential for determining the rank reduction than training gradient descent for longer.  In Appendix~\ref{app:experiments} we provide more detailed empirical investigations into this phenomenon.

\begin{figure}
    \centering
    \includegraphics[width=0.95\textwidth]{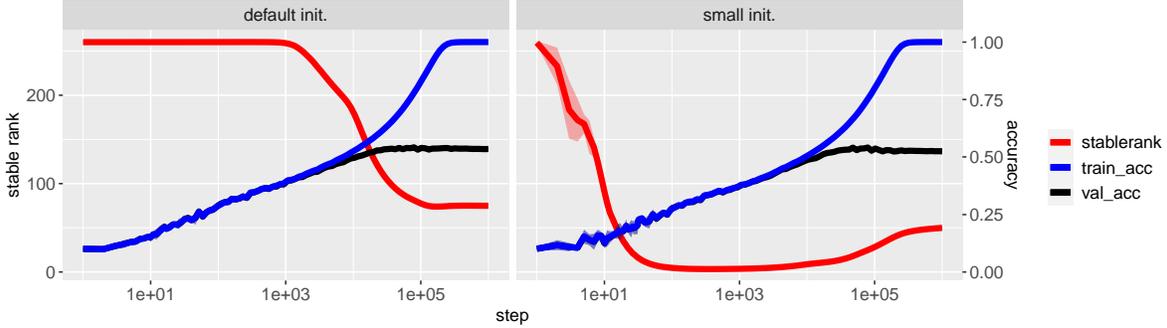}
    \caption{Stable rank of SGD-trained two-layer ReLU networks on CIFAR-10.  Compared to the default TensorFlow initialization (left), a smaller initialization (right) results in a smaller stable rank, and this effect is especially pronounced before the very late stages of training.  Remarkably, the train (blue) and test (black) accuracy behavior is essentially the same.}
    \label{fig:cifar.main}
\end{figure}

In Figure~\ref{fig:cifar.main}, we investigate whether or not the initialization scale's effect on the rank reduction of gradient descent occurs in settings not covered by our theory, namely in two-layer ReLU networks with bias terms trained by SGD on CIFAR-10.  We consider two different initialization schemes: (1) Glorot uniform, the default TensorFlow initialization scheme with standard deviation of order $1/\sqrt{m+d}$, and (2) a uniform initialization scheme with $50\times$ smaller standard deviation than that of the Glorot uniform initialization.  In the default initialization scheme, it appears that a reduction in the rank of the network only comes in the late stages of training, and the smallest stable rank achieved by the network within $10^6$ steps is 74.0.  On the other hand, with the smaller initialization scheme, the rank reduction comes rapidly, and the smallest stable rank achieved by the network is 3.25.  It is also interesting to note that in the small initialization setting, after gradient descent rapidly produces low-rank weights, the rank of the trained network begins to increase only when the gap between the train and test accuracy begin to diverge.

\section{Conclusion}
In this work, we characterized the implicit bias of gradient flow and gradient descent for two-layer leaky ReLU networks when trained on high-dimensional datasets.  For both gradient flow and gradient descent, we proved convergence to near-zero training loss and that there is an implicit bias towards low-rank networks.  For gradient flow, we showed a number of additional implicit biases: the weights are (unique) global maxima of the associated margin maximization problem, and the decision boundary of the learned network is linear.  For gradient descent, we provided experimental evidence which suggests that small initialization variance is important for gradient descent's ability to quickly produce low-rank networks.

There are many natural directions to pursue following this work.  One question is whether or not a similar implicit bias towards low-rank weights in fully connected networks exists for networks with different activation functions or for data which is not high-dimensional.  Our proofs relied heavily upon the `leaky' behavior of the leaky ReLU, namely that there is some $\gamma>0$ such that the activation satisfies $\phi'(z)\geq \gamma$ for all $z\in \R$.  We conjecture that some of the properties we showed in Theorem~\ref{thm:implicit bias leaky relu} (e.g., a linear decision boundary) may not hold for non-leaky activations, like the ReLU.


\section*{Acknowledgements}
We thank Matus Telgarsky for helpful discussions.  
This work was done in part while the authors were visiting the Simons Institute for the Theory of Computing as a part of the Deep Learning Theory Summer Cluster.  
SF, GV, PB, and NS acknowledge the support of the NSF and the Simons Foundation for 
the Collaboration on the Theoretical Foundations of Deep Learning through awards DMS-2031883 and \#814639.

\appendix
\newpage 
{\hypersetup{linkcolor=Black}\tableofcontents} 
\section{Preliminaries on the Clarke Subdifferential and the KKT Conditions}
\label{app:KKT}

Below we review the definition of the KKT conditions for non-smooth optimization problems (cf. \citet{lyu2019gradient,dutta2013approximate}).

Let $f: \reals^d \to \reals$ be a locally Lipschitz function. The Clarke subdifferential \citep{clarke2008nonsmooth} at $\bx \in \reals^d$ is the convex set
\[
	\partial^\circ f(\bx) := \text{conv} \left\{ \lim_{i \to \infty} \nabla f(\bx_i) \; \middle| \; \lim_{i \to \infty} \bx_i = \bx,\; f \text{ is differentiable at } \bx_i  \right\}~.
\]
If $f$ is continuously differentiable at $\bx$ then $\partial^\circ f(\bx) = \{\nabla f(\bx) \}$.
For the Clarke subdifferential the chain rule holds as an inclusion rather than an equation.
That is, for locally Lipschitz functions $z_1,\ldots,z_n:\reals^d \to \reals$ and $f:\reals^n \to \reals$, we have
\[
	\partial^\circ(f \circ \bz)(\bx) \subseteq \text{conv}\left\{ \sum_{i=1}^n \alpha_i \bh_i: \balpha \in \partial^\circ f(z_1(\bx),\ldots,z_n(\bx)), \bh_i \in \partial^\circ z_i(\bx) \right\}~.
\]

Consider the following optimization problem
\begin{equation}
\label{eq:KKT nonsmooth def}
	\min f(\bx) \;\;\;\; \text{s.t. } \;\;\; \forall n \in [N] \;\; g_n(\bx) \leq 0~,
\end{equation}
where $f,g_1,\ldots,g_n : \reals^d \to \reals$ are locally Lipschitz functions. We say that $\bx \in \reals^d$ is a \emph{feasible point} of Problem~(\ref{eq:KKT nonsmooth def}) if $\bx$ satisfies $g_n(\bx) \leq 0$ for all $n \in [N]$. We say that a feasible point $\bx$ is a \emph{KKT point} if there exists $\lambda_1,\ldots,\lambda_N \geq 0$ such that 
\begin{enumerate}
	\item $\zero \in \partial^\circ f(\bx) + \sum_{n \in [N]} \lambda_n \partial^\circ g_n(\bx)$;
	\item For all $n \in [N]$ we have $\lambda_n g_n(\bx) = 0$.
\end{enumerate}

\section{Proof of \thmref{thm:implicit bias leaky relu}} \label{app:proof of implicit bias leaky relu}

We start with some notations. 
We denote $p = \max_{i \neq j}|\inner{\bx_i,\bx_j}|$. 
Thus, our assumption on $n$ can be written as $n \leq \frac{\gamma^3}{3} \cdot \frac{ \rmin^2}{p} \cdot \frac{\rmin^2}{\rmax^2}$.
Since $W$ satisfies the KKT conditions of Problem~(\ref{eq:optimization problem}), then there are $\lambda_1,\ldots,\lambda_n$ such that for every $j \in [m_1]$ we have
\begin{equation}
\label{eq:kkt condition v}
	\bv_j 
	= \sum_{i \in I} \lambda_i \nabla_{\bv_j} \left( y_i f(\bx_i; W) \right) 
	= \frac{1}{\sqrt{m}} \sum_{i \in I} \lambda_i y_i \phi'_{i,\bv_j} \bx_i~,
\end{equation}
where $\phi'_{i,\bv_j}$ is a subgradient  of $\phi$ at $\bv_j^\top \bx_i$, i.e., if $\bv_j^\top \bx_i > 0$ then $\phi'_{i,\bv_j} = 1$, if $\bv_j^\top \bx_i < 0$ then $\phi'_{i,\bv_j} = \gamma$ and otherwise $\phi'_{i,\bv_j}$ is some value in $[\gamma,1]$. Also we have $\lambda_i \geq 0$ for all $i$, and $\lambda_i=0$ if 
$y_i  f(\bx_i; W) \neq 1$. Likewise, for all $j \in [m_2]$ we have
\begin{equation}
\label{eq:kkt condition u}
	\bu_j 
	= \sum_{i \in I} \lambda_i \nabla_{\bu_j} \left( y_i f(\bx_i; W) \right) 
	= \frac{1}{\sqrt{m}} \sum_{i \in I} \lambda_i (- y_i) \phi'_{i,\bu_j} \bx_i~,
\end{equation}
where $\phi'_{i,\bu_j}$ is defined similarly to $\phi'_{i,\bv_j}$.
The proof of the theorem follows from the following lemmas.

\begin{lemma}\label{lambda upper bound}
	For all $i \in I$ we have $\sum_{j \in [m_1]} \lambda_i \phi'_{i,\bv_j} + \sum_{j \in [m_2]} \lambda_i \phi'_{i,\bu_j} < \frac{3m}{2\gamma \rmin^2}$. Furthermore, $\lambda_i < \frac{3}{2 \gamma^2 \rmin^2}$ for all $i \in I$.
\end{lemma}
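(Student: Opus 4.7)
The plan is to exploit the active-constraint identity $y_i f(\bx_i; W) = 1$ (which holds whenever $\lambda_i > 0$ by complementary slackness; when $\lambda_i = 0$ both bounds are trivial) together with the positive-homogeneity identity $\phi(z) = \phi'(z)\cdot z$ for leaky ReLU to turn the KKT stationarity equations \eqref{eq:kkt condition v} and \eqref{eq:kkt condition u} into a self-referential inequality for the $\lambda_i$'s. Concretely, using $\phi(\bv_j^\top \bx_i) = \phi'_{i,\bv_j}\bv_j^\top \bx_i$ (and similarly for $\bu_j$) and substituting \eqref{eq:kkt condition v}--\eqref{eq:kkt condition u}, I would derive the identity
\[
1 \;=\; y_i f(\bx_i;W) \;=\; \frac{\lambda_i \|\bx_i\|^2 T_i}{m} \;+\; \frac{1}{m}\sum_{k\neq i} \lambda_k\, y_i y_k \inner{\bx_k}{\bx_i}\, T_{i,k},
\]
where $T_{i,k} := \sum_{j=1}^{m_1}\phi'_{i,\bv_j}\phi'_{k,\bv_j} + \sum_{j=1}^{m_2}\phi'_{i,\bu_j}\phi'_{k,\bu_j}$ and $T_i := T_{i,i}$.

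Next I would extract three elementary bounds from $\phi'_{i,\cdot} \in [\gamma,1]$: $T_{i,k}\leq m$, $T_i \geq \gamma^2 m$, and crucially $T_i \geq \gamma\, S_i$ where $S_i := \sum_{j}\phi'_{i,\bv_j} + \sum_j \phi'_{i,\bu_j}$ (this last estimate, obtained from $(\phi'_{i,\bw})^2 \geq \gamma\,\phi'_{i,\bw}$, is what saves a factor of $\gamma$ and yields the sharper first bound). Bounding the off-diagonal sum in absolute value by $p\sum_{k\neq i}\lambda_k$ with $p := \max_{k\neq i}|\inner{\bx_k}{\bx_i}|$, the identity rearranges to
\[
\lambda_i \|\bx_i\|^2 T_i \;\leq\; m\bigl(1 + p\Lambda\bigr), \qquad \Lambda := \sum_{k} \lambda_k.
\]
Dividing by $\rmin^2$ and applying the two lower bounds on $T_i$ yields the per-sample estimates $\lambda_i \leq (1+p\Lambda)/(\gamma^2 \rmin^2)$ and $\lambda_i S_i \leq m(1+p\Lambda)/(\gamma \rmin^2)$.

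The main obstacle, and the key step, is that these bounds depend on the global quantity $\Lambda$, so the argument must be closed by bootstrapping. Summing the per-sample bound over $i\in I$ (the inactive $\lambda_i = 0$ contribute nothing) produces $\Lambda \leq n(1+p\Lambda)/(\gamma^2 \rmin^2)$, whose solution is $\Lambda \leq n/(\gamma^2 \rmin^2 - np)$, well-defined because the hypothesis $\rmin^2 \geq 3\gamma^{-3}\rratio^2 np$ gives $np/(\gamma^2 \rmin^2) \leq \gamma/(3\rratio^2) \leq 1/3$. This yields $\Lambda \leq 3n/(2\gamma^2 \rmin^2)$, whence $p\Lambda \leq \gamma/(2\rratio^2) \leq \gamma/2 < 1/2$ (using $\rratio\geq 1$ and $\gamma<1$). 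Plugging $1+p\Lambda < 3/2$ back into the two per-sample bounds delivers the strict inequalities $\lambda_i S_i < 3m/(2\gamma \rmin^2)$ and $\lambda_i < 3/(2\gamma^2 \rmin^2)$ claimed in the lemma.
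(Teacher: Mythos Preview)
Your argument is correct and takes a genuinely different route from the paper's own proof. The paper argues by contradiction at the maximizing index: it sets $\xi = \max_q \bigl(\sum_j \lambda_q \phi'_{q,\bv_j} + \sum_j \lambda_q \phi'_{q,\bu_j}\bigr)$, assumes $\xi \geq \tfrac{3m}{2\gamma\rmin^2}$, expands $\sqrt{m}\,f(\bx_r;W)$ with the KKT expressions still \emph{inside} $\phi(\cdot)$, and then uses the increment inequality $\phi(z_1)-\phi(z_2)\geq \gamma(z_1-z_2)$ to peel off the diagonal term; the off-diagonals are each bounded by $\xi$, leading to $|f(\bx_r;W)|>1$, a contradiction. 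This entails a case split on the sign of $y_r$.

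Your approach instead exploits the positive homogeneity $\phi(z)=\phi'(z)\,z$ of leaky ReLU to linearize immediately, obtaining the clean bilinear identity $1=\tfrac{1}{m}\sum_k \lambda_k\,y_iy_k\langle \bx_k,\bx_i\rangle\,T_{i,k}$. You then bound each $\lambda_i$ (and $\lambda_i S_i$) in terms of the global sum $\Lambda=\sum_k\lambda_k$, close the loop by summing to get a linear inequality for $\Lambda$, and plug back. The bootstrap is direct (no contradiction, no case analysis), and the factor-of-$\gamma$ saving via $T_i\geq \gamma S_i$ is a nice observation. The only trade-off is that your argument leans on the exact identity $\phi(z)=\phi'(z)z$, which is specific to positively homogeneous activations, whereas the paper's increment inequality $\phi(z_1)-\phi(z_2)\geq\gamma(z_1-z_2)$ would survive for any $\gamma$-leaky activation; since this section is explicitly about the (non-smooth) leaky ReLU, that is not a limitation here.
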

\begin{proof}
	Let $\xi = 	\max_{q \in I} \left(  \sum_{j \in [m_1]}  \lambda_q \phi'_{q,\bv_j}  + \sum_{j \in [m_2]} \lambda_q \phi'_{q,\bu_j} \right)$ and suppose that $\xi \geq \frac{3m}{2\gamma \rmin^2}$. Let $r = \argmax_{q \in I} \left(  \sum_{j \in [m_1]}  \lambda_q \phi'_{q,\bv_j}  + \sum_{j \in [m_2]} \lambda_q \phi'_{q,\bu_j} \right)$. Since $\xi \geq \frac{3m}{2\gamma \rmin^2} > 0$ then $\lambda_r > 0$, and hence by the KKT conditions we must have $y_r f(\bx_r; W) = 1$.
	
	We consider two cases:
	
	{\bf Case 1: }
	Assume that $r \in I_-$. Using \eqref{eq:kkt condition v} and~(\ref{eq:kkt condition u}), we have
	\begin{align*}
		\sqrt{m} f(\bx_r; W)
		&= \sum_{j \in [m_1]} \phi(\bv_j^\top \bx_r) - \sum_{j \in [m_2]} \phi(\bu_j^\top \bx_r)
		\\
		&= \sum_{j \in [m_1]} \phi \left(\frac{1}{\sqrt{m}} \sum_{q \in I} \lambda_q y_q \phi'_{q,\bv_j} \bx_q^\top \bx_r \right) - \sum_{j \in [m_2]}\phi \left(\frac{1}{\sqrt{m}} \sum_{q \in I} \lambda_q (-y_q) \phi'_{q,\bu_j} \bx_q^\top \bx_r \right)
		\\
		&= \sum_{j \in [m_1]} \phi \left(\frac{1}{\sqrt{m}} \lambda_r y_r \phi'_{r,\bv_j} \bx_r^\top \bx_r + \frac{1}{\sqrt{m}} \sum_{q \in I \setminus \{r\}} \lambda_q y_q \phi'_{q,\bv_j} \bx_q^\top \bx_r \right) 
		\\
		&\;\;\;\; -  \sum_{j \in [m_2]} \phi \left(\frac{1}{\sqrt{m}} \lambda_r (-y_r) \phi'_{r,\bu_j} \bx_r^\top \bx_r + \frac{1}{\sqrt{m}} \sum_{q \in I \setminus\{r\}}  \lambda_q (-y_q) \phi'_{q,\bu_j} \bx_q^\top \bx_r \right)
		\\
		&\leq \sum_{j \in [m_1]} \phi \left( - \frac{1}{\sqrt{m}} \lambda_r \phi'_{r,\bv_j} \rmin^2 + \frac{1}{\sqrt{m}} \sum_{q \in I \setminus \{r\}} \lambda_q y_q \phi'_{q,\bv_j} \bx_q^\top \bx_r \right) 
		\\
		&\;\;\;\; - \sum_{j \in [m_2]} \phi \left(\frac{1}{\sqrt{m}} \lambda_r \phi'_{r,\bu_j} \rmin^2 + \frac{1}{\sqrt{m}} \sum_{q \in I \setminus\{r\}} \lambda_q (-y_q) \phi'_{q,\bu_j} \bx_q^\top \bx_r \right)~.
	\end{align*}
    Since the derivative of $\phi$ is lower bounded by $\gamma$, we know $\phi(z_1)-\phi(z_2) \geq \gamma(z_1-z_2)$ for all $z_1,z_2\in \R$.  Using this and the definition of $\xi$, the above is at most
	\begin{align*}
		&\sum_{j \in [m_1]}\l[  \phi \left( \frac{1}{\sqrt{m}} \sum_{q \in I \setminus \{r\}} \lambda_q y_q \phi'_{q,\bv_j} \bx_q^\top \bx_r \right)  - \frac{1}{\sqrt{m}} \gamma \cdot \lambda_r \phi'_{r,\bv_j} \rmin^2\r]
		\\
		&\;\;\;\;\;- \sum_{j \in [m_2]} \l[ \phi \left(\frac{1}{\sqrt{m}} \sum_{q \in I \setminus\{r\}} \lambda_q (-y_q) \phi'_{q,\bu_j} \bx_q^\top \bx_r \right) + \frac{1}{\sqrt{m}} \gamma \cdot \lambda_r \phi'_{r,\bu_j} \rmin^2\r]
		\\
		&\leq - \frac{1}{\sqrt{m}} \gamma \xi \rmin^2 + 
		\sum_{j \in [m_1]}  \left| \frac{1}{\sqrt{m}} \sum_{q \in I \setminus \{r\}} \lambda_q y_q \phi'_{q,\bv_j} \bx_q^\top \bx_r \right| 
		+ \sum_{j \in [m_2]} \left|\frac{1}{\sqrt{m}} \sum_{q \in I \setminus\{r\}} \lambda_q (-y_q) \phi'_{q,\bu_j} \bx_q^\top \bx_r \right| 
		\\
		&\leq - \frac{1}{\sqrt{m}} \gamma \xi \rmin^2 + 
		\frac{1}{\sqrt{m}} \sum_{j \in [m_1]}  \sum_{q \in I \setminus \{r\}} \left| \lambda_q y_q \phi'_{q,\bv_j} \bx_q^\top \bx_r \right| 
		+ \frac{1}{\sqrt{m}} \sum_{j \in [m_2]} \sum_{q \in I \setminus\{r\}} \left| \lambda_q (-y_q) \phi'_{q,\bu_j} \bx_q^\top \bx_r \right|~.
	\end{align*}
	Using $| \bx_q^\top \bx_r | \leq p$ for $q \neq r$, the above is at most
	\begin{align*}
		&- \frac{1}{\sqrt{m}} \gamma \xi \rmin^2 + 
		\frac{1}{\sqrt{m}} \sum_{j \in [m_1]}   \sum_{q \in I \setminus \{r\}} \lambda_q \phi'_{q,\bv_j} p + \frac{1}{\sqrt{m}} \sum_{j \in [m_2]} \sum_{q \in I \setminus \{r\}} \lambda_q \phi'_{q,\bu_j} p 
		\\
		&= - \frac{1}{\sqrt{m}} \gamma \xi \rmin^2 + \frac{p}{\sqrt{m}}  
		 \sum_{q \in I \setminus \{r\}} \l(  \sum_{j \in [m_1]}   \lambda_q \phi'_{q,\bv_j} + \sum_{j \in [m_2]} \lambda_q \phi'_{q,\bu_j} \r)
		\\
		&\leq - \frac{1}{\sqrt{m}} \gamma \xi \rmin^2 + \frac{p}{\sqrt{m}}  \cdot | I | \cdot \max_{q \in I} \left( \sum_{j \in [m_1]}   \lambda_q \phi'_{q,\bv_j} +  \sum_{j \in [m_2]} \lambda_q \phi'_{q,\bu_j} 	\right)
		\\
		&= - \frac{1}{\sqrt{m}} \gamma \xi \rmin^2 + \frac{p}{\sqrt{m}} n \xi
		= - \frac{\xi}{\sqrt{m}} ( \gamma \rmin^2 - np )~.
	\end{align*}
	By our assumption on $n$, we can bound the above expression by
	\begin{align*}
		- \frac{\xi}{\sqrt{m}} \left( \gamma \rmin^2 - p \cdot \frac{\gamma^3}{3} \cdot \frac{\rmin^2}{p} \cdot \frac{\rmin^2}{\rmax^2} \right)
		&= - \frac{\xi \rmin^2}{\sqrt{m}} \left( \gamma  - \frac{\gamma^3}{3}  \cdot \frac{\rmin^2}{\rmax^2} \right)
		\\
		&< - \frac{\xi \rmin^2}{\sqrt{m}} \left( \gamma - \frac{\gamma}{3}  \right) 
		\\
		&= - \frac{\xi \rmin^2}{\sqrt{m}} \cdot \frac{2\gamma}{3} 
		\\
		&\leq - \frac{3m}{2\gamma \rmin^2} \cdot \frac{\rmin^2}{\sqrt{m}} \cdot \frac{2\gamma}{3} 
		= - \sqrt{m}~.
	\end{align*}
		
	Thus, we obtain $f(\bx_r; W) < -1$ in contradiction to $y_r f(\bx_r; W)=1$.
	
	{\bf Case 2: }
	Assume that $r \in I_+$. A similar calculation to the one given in case 1 (which we do not repeat for conciseness) implies that $f(\bx_r; W)>1$, in contradiction to $y_r f(\bx_r; W)=1$. It concludes the proof of $\xi < \frac{3m}{2\gamma \rmin^2}$.
	
	Finally, since $\xi < \frac{3m}{2\gamma \rmin^2}$ and the derivative of $\phi$ is lower bounded by $\gamma$, then for all $i \in I$ we have
	\begin{align*}
		\frac{3m}{2\gamma \rmin^2}
		> \sum_{j \in [m_1]}  \lambda_i \phi'_{i,\bv_j}  + \sum_{j \in [m_2]} \lambda_i \phi'_{i,\bu_j} 
		\geq m \lambda_i \gamma~,
	\end{align*}
	and hence $\lambda_i < \frac{3}{2 \gamma^2 \rmin^2}$.
\end{proof}

\begin{lemma}\label{lambda lower bound}
	For all $i \in I$ we have $\sum_{j \in [m_1]} \lambda_i \phi'_{i,\bv_j} + \sum_{j \in [m_2]} \lambda_i \phi'_{i,\bu_j} > \frac{m}{2\rmax^2}$. Furthermore, $\lambda_i > \frac{1}{2 \rmax^2}$ for all $i \in I$.
\end{lemma}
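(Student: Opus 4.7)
The plan is to mirror the argument of Lemma~\ref{lambda upper bound}, working instead with the index $r$ that \emph{minimizes} the quantity
\[
S_q \;:=\; \sum_{j \in [m_1]} \lambda_q \phi'_{q,\bv_j} + \sum_{j \in [m_2]} \lambda_q \phi'_{q,\bu_j},
\]
and letting $\xi' := \min_{q \in I} S_q = S_r$. I argue by contradiction: assume $\xi' \leq \tfrac{m}{2\rmax^2}$, and show this forces $y_r f(\bx_r; W) < 1$, violating primal feasibility of the KKT point. Note that (as in the previous lemma) $p = \max_{i \neq j}|\inner{\bx_i,\bx_j}|$ and the standing assumption rewrites as $np \leq \tfrac{\gamma^3 \rmin^4}{3\rmax^2}$.

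Substituting the KKT identities \eqref{eq:kkt condition v} and \eqref{eq:kkt condition u} into $\sqrt{m}\, f(\bx_r; W) = \sum_{j \in [m_1]} \phi(\bv_j^\top \bx_r) - \sum_{j \in [m_2]} \phi(\bu_j^\top \bx_r)$, I isolate the diagonal contribution $q=r$ inside each $\phi$, namely $\pm \tfrac{1}{\sqrt{m}} \lambda_r \phi'_{r,\cdot}\, \norm{\bx_r}^2$ (the sign depending on $y_r$ and on whether the neuron is positive or negative), from the off-diagonal terms, whose absolute value is bounded by $\tfrac{p}{\sqrt{m}} \sum_{q \neq r} \lambda_q \phi'_{q,\cdot}$ via $|\bx_q^\top \bx_r| \leq p$. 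Since the leaky ReLU $\phi$ is $1$-Lipschitz and satisfies $\phi(z) = z$ for $z \geq 0$ and $\phi(z) = \gamma z$ for $z \leq 0$, the inequalities $\phi(a+b) \leq \phi(a) + |b|$ and $\phi(a+b) \geq \phi(a) - |b|$ control each summand, with $\phi$ at the diagonal evaluating to $a$ or $\gamma a$ according to the sign of $a$.

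For $r \in I_+$ this yields the upper bound
\[
\sqrt{m}\, f(\bx_r; W) \;\leq\; \tfrac{\norm{\bx_r}^2}{\sqrt{m}}\!\Bigl[\sum_{j \in [m_1]} \lambda_r \phi'_{r,\bv_j} + \gamma \sum_{j \in [m_2]} \lambda_r \phi'_{r,\bu_j}\Bigr] + \tfrac{p}{\sqrt{m}} \sum_{q \neq r} S_q \;\leq\; \tfrac{\rmax^2}{\sqrt{m}}\, S_r + \tfrac{p}{\sqrt{m}} \sum_{q \neq r} S_q.
\]
Applying the hypothesis $S_r \leq \tfrac{m}{2\rmax^2}$, Lemma~\ref{lambda upper bound} which gives $S_q < \tfrac{3m}{2\gamma \rmin^2}$, and the assumption $np \leq \tfrac{\gamma^3 \rmin^4}{3\rmax^2}$, the right-hand side is at most
\[
\tfrac{\sqrt{m}}{2} + \tfrac{3\, np\, \sqrt{m}}{2 \gamma \rmin^2} \;\leq\; \tfrac{\sqrt{m}}{2}\!\left(1 + \tfrac{\gamma^2 \rmin^2}{\rmax^2}\right) \;<\; \sqrt{m},
\]
so $f(\bx_r; W) < 1$, contradicting $y_r f(\bx_r; W) \geq 1$. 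The case $r \in I_-$ is symmetric: the analogous \emph{lower} bound on $\sqrt{m}\, f(\bx_r; W)$ gives $f(\bx_r; W) > -1$, again contradicting $y_r f(\bx_r; W) \geq 1$.

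This establishes $S_i > \tfrac{m}{2\rmax^2}$ for every $i \in I$. The ``furthermore'' conclusion is then immediate: since each $\phi'_{\cdot,\cdot} \leq 1$, we have $S_i \leq m\, \lambda_i$, so $\lambda_i > \tfrac{1}{2\rmax^2}$. The only delicate step I anticipate is selecting the correct one-sided Lipschitz inequality together with the correct branch of $\phi$ at the diagonal for each of the four combinations (sign of $y_r$ versus positive or negative neuron); once that bookkeeping is set up the computation parallels Lemma~\ref{lambda upper bound} almost verbatim.
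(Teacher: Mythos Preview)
Your argument is correct and follows essentially the same route as the paper's: both contradict Lemma~\ref{lambda upper bound} by showing that if some $S_i \leq \tfrac{m}{2\rmax^2}$ then the feasibility constraint $y_i f(\bx_i;W)\geq 1$ fails, and both end up bounding $\sqrt m\,|f(\bx_r;W)|$ by $\tfrac{\rmax^2}{\sqrt m}S_r + \tfrac{p}{\sqrt m}\sum_{q\neq r} S_q$.

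The only noteworthy difference is cosmetic: the paper sidesteps your four-way case analysis by using $|\phi(z)|\leq |z|$ directly, bounding
\[
\sqrt m \leq \bigl|\sqrt m\,f(\bx_i;W)\bigr| \leq \sum_{j\in[m_1]}|\bv_j^\top\bx_i| + \sum_{j\in[m_2]}|\bu_j^\top\bx_i|,
\]
which handles both signs of $y_i$ and all neuron types in one line. This yields the same numerical inequality you derive, so your ``delicate step'' of matching the correct Lipschitz direction with the correct branch of $\phi$ is unnecessary here---though your bookkeeping is fine and gives the same conclusion. Note also that you need not take $r$ to be the minimizer of $S_q$; any index violating the bound works.
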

\begin{proof}
	Suppose that there is $i \in I$ such that $\sum_{j \in [m_1]} \lambda_i \phi'_{i,\bv_j} + \sum_{j \in [m_2]} \lambda_i \phi'_{i,\bu_j} \leq \frac{m}{2\rmax^2}$.
	Using \eqref{eq:kkt condition v} and~(\ref{eq:kkt condition u}), we have
	\begin{align*}
		\sqrt{m} 
		&\leq \left| \sqrt{m} f(\bx_i; W) \right|
		= \left| \sum_{j \in [m_1]} \phi(\bv_j^\top \bx_i) - \sum_{j \in [m_2]} \phi(\bu_j^\top \bx_i) \right|
		\leq  \sum_{j \in [m_1]} \left| \bv_j^\top \bx_i \right| + \sum_{j \in [m_2]} \left| \bu_j^\top \bx_i \right| 
		\\
		&= \sum_{j \in [m_1]} \left| \frac{1}{\sqrt{m}} \sum_{q \in I} \lambda_q y_q \phi'_{q,\bv_j} \bx_q^\top \bx_i \right| + \sum_{j \in [m_2]} \left|\frac{1}{\sqrt{m}} \sum_{q \in I} \lambda_q (-y_q) \phi'_{q,\bu_j} \bx_q^\top \bx_i \right| 
		\\
		&\leq \frac{1}{\sqrt{m}}  \sum_{j \in [m_1]}  \left( \left| \lambda_i y_i \phi'_{i,\bv_j} \bx_i^\top \bx_i \right| + \sum_{q \in I \setminus \{i\}} \left| \lambda_q y_q \phi'_{q,\bv_j} \bx_q^\top \bx_i \right| \right)
		\\
		&\;\;\;\;\; + \frac{1}{\sqrt{m}}  \sum_{j \in [m_2]} \left( \left| \lambda_i (-y_i) \phi'_{i,\bu_j} \bx_i^\top \bx_i \right|  + \sum_{q \in I \setminus \{i\}} \left| \lambda_q (-y_q) \phi'_{q,\bu_j} \bx_q^\top \bx_i \right| \right)~.
	\end{align*}
	Using $| \bx_q^\top \bx_i | \leq p$ for $q \neq i$ and $\bx_i^\top \bx_i \leq \rmax^2$, the above is at most
	\begin{align*}
		&\frac{1}{\sqrt{m}} \sum_{j \in [m_1]}  \left( \lambda_i \phi'_{i,\bv_j} \rmax^2 + \sum_{q \in I \setminus \{i\}} \lambda_q \phi'_{q,\bv_j} p \right) + \frac{1}{\sqrt{m}} \sum_{j \in [m_2]} \left( \lambda_i \phi'_{i,\bu_j} \rmax^2+ \sum_{q \in I \setminus \{i\}}  \lambda_q \phi'_{q,\bu_j} p  \right) 
		\\
		&= \frac{1}{\sqrt{m}}  \left( \sum_{j \in [m_1]} \lambda_i \phi'_{i,\bv_j} \rmax^2 + \sum_{j \in [m_2]} \lambda_i \phi'_{i,\bu_j} \rmax^2 \right) + 		
		\frac{1}{\sqrt{m}} \sum_{q \in I \setminus \{i\}} \l( \sum_{j \in [m_1]}  \lambda_q \phi'_{q,\bv_j} p + \sum_{j \in [m_2]}  \lambda_q \phi'_{q,\bu_j} p  \r)
		\\
		&= \frac{\rmax^2}{\sqrt{m}} \left( \sum_{j \in [m_1]} \lambda_i \phi'_{i,\bv_j} + \sum_{j \in [m_2]} \lambda_i \phi'_{i,\bu_j} \right) + 		
		\frac{p}{\sqrt{m}} \sum_{q \in I \setminus \{i\}} \l(\sum_{j \in [m_1]} \lambda_q \phi'_{q,\bv_j}  + \sum_{j \in [m_2]} \lambda_q \phi'_{q,\bu_j} \r) 
		\\
		&\leq \frac{\rmax^2}{\sqrt{m}} \cdot \frac{m}{2\rmax^2} + \frac{p}{\sqrt{m}} \cdot | I | \cdot \max_{q \in I} \left(  \sum_{j \in [m_1]}  \lambda_q \phi'_{q,\bv_j}  + \sum_{j \in [m_2]} \lambda_q \phi'_{q,\bu_j} \right)~.
	\end{align*}
	Combining the above with our assumption on $n$, we get 
	\[
		\max_{q \in I} \left(  \sum_{j \in [m_1]}  \lambda_q \phi'_{q,\bv_j}  + \sum_{j \in [m_2]} \lambda_q \phi'_{q,\bu_j} \right) 
		\geq \frac{m}{2np}
		\geq \frac{m}{2p} \cdot \frac{3p}{\gamma^3 \rmin^2} \cdot \frac{\rmax^2}{\rmin^2}
		> \frac{3m}{2 \gamma \rmin^2}~,
	\]
	in contradiction to \lemref{lambda upper bound}. It concludes the proof of $\sum_{j \in [m_1]} \lambda_i \phi'_{i,\bv_j} + \sum_{j \in [m_2]} \lambda_i \phi'_{i,\bu_j} > \frac{m}{2 \rmax^2}$.
	
	Finally, since $\sum_{j \in [m_1]} \lambda_i \phi'_{i,\bv_j} + \sum_{j \in [m_2]} \lambda_i \phi'_{i,\bu_j} >\frac{m}{2 \rmax^2}$ and the derivative of $\phi$ is upper bounded by $1$, then for all $i \in I$ we have
	\begin{align*}
		\frac{m}{2 \rmax^2}
		< \sum_{j \in [m_1]}  \lambda_i \phi'_{i,\bv_j}  + \sum_{j \in [m_2]} \lambda_i \phi'_{i,\bu_j} 
		\leq m \lambda_i ~,
	\end{align*}
	and hence $\lambda_i > \frac{1}{2 \rmax^2}$.
\end{proof}

\begin{lemma} \label{lem:I is all}
	For all $i \in I$ we have $y_i f(\bx_i; W) = 1$.
\end{lemma}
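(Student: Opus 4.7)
The plan is very short: this lemma is essentially an immediate corollary of Lemma~\ref{lambda lower bound} together with the complementary slackness part of the KKT conditions, so no new machinery should be required.

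Specifically, recall that since $W$ is a KKT point of Problem~(\ref{eq:optimization problem}), there exist multipliers $\lambda_1,\ldots,\lambda_n \geq 0$ (the same ones appearing in \eqref{eq:kkt condition v} and \eqref{eq:kkt condition u}) satisfying the complementary slackness condition
\[
	\lambda_i \bigl( y_i f(\bx_i; W) - 1 \bigr) = 0 \quad \text{for all } i \in I.
\]
Hence for each $i \in I$ either $\lambda_i = 0$ or $y_i f(\bx_i; W) = 1$. Now Lemma~\ref{lambda lower bound} establishes the strict lower bound $\lambda_i > \frac{1}{2 \rmax^2} > 0$ for every $i \in I$, which rules out the first alternative. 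Therefore we must have $y_i f(\bx_i; W) = 1$ for every $i \in I$, as claimed.

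There is no real obstacle here; the entire content of the lemma is that the strict positivity of all the dual variables (already secured in Lemma~\ref{lambda lower bound}) forces every margin constraint to be tight. The only thing worth stressing in the write-up is that the strictness of $\lambda_i > 0$ (not merely $\lambda_i \geq 0$) is what makes complementary slackness give an equality rather than an inequality, which is exactly the content delivered by the preceding two lemmas.
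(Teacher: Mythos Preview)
Your proposal is correct and matches the paper's own proof essentially verbatim: the paper simply notes that by Lemma~\ref{lambda lower bound} each $\lambda_i > 0$, and hence complementary slackness in the KKT conditions forces $y_i f(\bx_i; W) = 1$. Your write-up just spells out the complementary slackness step a bit more explicitly, which is fine.
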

\begin{proof}
 	By \lemref{lambda lower bound} we have $\lambda_i>0$ for all $i \in I$, and hence by the KKT conditions we must have $y_i f(\bx_i; W)=1$.
\end{proof}

\begin{lemma} \label{lem:rank 2}
	We have 
	\[
		\bv_1=\ldots=\bv_{m_1} = \frac{1}{\sqrt{m}} \sum_{i \in I_+} \lambda_i \bx_i - \frac{\gamma}{\sqrt{m}} \sum_{i \in I_-} \lambda_i  \bx_i~,
	\] 
	and 
	\[
		\bu_1=\ldots=\bu_{m_2} = \frac{1}{\sqrt{m}} \sum_{i \in I_-} \lambda_i \bx_i - \frac{\gamma}{\sqrt{m}} \sum_{i \in I_+} \lambda_i  \bx_i~.
	\]
	Moreover, 
	for all $i \in I$ we have: $y_i \bv_j^\top \bx_i > 0$ for every $j \in [m_1]$, and $y_i \bu_j^\top \bx_i < 0$ for every $j \in [m_2]$.
\end{lemma}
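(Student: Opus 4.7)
The plan is to first establish the sign claim at the end of the lemma, namely that $y_i \bv_j^\top \bx_i > 0$ for all $i \in I$ and $j \in [m_1]$, and analogously $y_i \bu_j^\top \bx_i < 0$ for all $i \in I$ and $j \in [m_2]$. Once this sign information is in hand, every subgradient $\phi'_{i,\bv_j}$ is pinned down: it equals $1$ whenever $y_i = +1$ and equals $\gamma$ whenever $y_i = -1$, and similarly $\phi'_{i,\bu_j}$ equals $1$ when $y_i = -1$ and $\gamma$ when $y_i = +1$. Substituting these values into the KKT identities \eqref{eq:kkt condition v} and \eqref{eq:kkt condition u} eliminates all $j$-dependence, and immediately yields the two closed-form expressions in the statement, with the common vectors $\bv$ and $\bu$ given by the desired sums.

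To prove the sign claim for a fixed $i \in I$ and $j \in [m_1]$, I would substitute \eqref{eq:kkt condition v} into $y_i \bv_j^\top \bx_i$ and separate the $q=i$ contribution from the $q \neq i$ contributions:
\[
y_i \bv_j^\top \bx_i = \frac{1}{\sqrt{m}} \lambda_i \phi'_{i,\bv_j} \|\bx_i\|^2 + \frac{1}{\sqrt{m}} \sum_{q \in I \setminus \{i\}} \lambda_q y_i y_q \phi'_{q,\bv_j} \bx_q^\top \bx_i.
\]
For the self term I would apply the lower bound $\lambda_i > \frac{1}{2\rmax^2}$ from \lemref{lambda lower bound} together with $\phi'_{i,\bv_j} \geq \gamma$ and $\|\bx_i\|^2 \geq \rmin^2$, giving at least $\frac{\gamma \rmin^2}{2\sqrt{m}\,\rmax^2}$. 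For the cross terms I would use the upper bound $\lambda_q < \frac{3}{2\gamma^2 \rmin^2}$ from \lemref{lambda upper bound}, the trivial bound $\phi'_{q,\bv_j} \leq 1$, and $|\bx_q^\top \bx_i| \leq p := \max_{q \neq i}|\inner{\bx_q,\bx_i}|$, producing an upper bound of $\frac{3np}{2\sqrt{m}\,\gamma^2 \rmin^2}$ on the absolute value of the remainder.

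Comparing these two bounds, the self term strictly dominates precisely when $\gamma^3 \rmin^4 > 3np\,\rmax^2$, which is exactly a restatement of the standing assumption $\rmin^2 \geq 3\gamma^{-3} \rratio^2 n \max_{i \neq j}|\inner{\bx_i,\bx_j}|$. Hence $y_i \bv_j^\top \bx_i > 0$ as required. The argument for $-y_i \bu_j^\top \bx_i > 0$ is symmetric: the KKT identity \eqref{eq:kkt condition u} differs only by the overall sign flip $y_q \mapsto -y_q$, so the same separation into self and cross terms and the same use of Lemmas~\ref{lambda upper bound} and~\ref{lambda lower bound} apply verbatim.

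The only subtlety I see is a potential circularity, since the subgradient $\phi'_{q,\bv_j}$ is defined in terms of the sign of $\bv_j^\top \bx_q$, which is exactly the quantity we wish to determine. I would sidestep this by using only the a priori two-sided bound $\phi'_{q,\bv_j} \in [\gamma,1]$ throughout the estimate of the cross term, which is enough to conclude the sign of the self-dominated expression; the exact values of the subgradients are then read off \emph{after} the fact and plugged back into \eqref{eq:kkt condition v} and \eqref{eq:kkt condition u} to obtain the explicit formulas for $\bv$ and $\bu$.
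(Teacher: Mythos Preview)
Your proposal is correct and follows essentially the same approach as the paper: separate the self term from the cross terms in the KKT expression, bound each using the strict $\lambda$-bounds from \lemref{lambda upper bound} and \lemref{lambda lower bound} together with the a priori inclusion $\phi'\in[\gamma,1]$, and then read off the exact subgradient values once the signs are known. The only cosmetic difference is that the paper treats $i\in I_+$ and $i\in I_-$ in two separate cases rather than premultiplying by $y_i$, and your remark about avoiding circularity via the two-sided bound on $\phi'$ is exactly what the paper does implicitly.
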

\begin{proof}
	Fix $j \in [m_1]$.
	By \eqref{eq:kkt condition v} for all $i \in I_+$ we have
	\begin{align*}
		\bv_j^\top \bx_i
		&= \frac{1}{\sqrt{m}} \sum_{q \in I} \lambda_q y_q \phi'_{q,\bv_j} \bx_q^\top \bx_i
		\\
		&= \frac{1}{\sqrt{m}} \lambda_i y_i \phi'_{i,\bv_j} \bx_i^\top \bx_i + \frac{1}{\sqrt{m}} \sum_{q \in I \setminus \{i\}} \lambda_q y_q \phi'_{q,\bv_j} \bx_q^\top \bx_i
		\\
		&\geq \frac{1}{\sqrt{m}} \lambda_i  \phi'_{i,\bv_j} \rmin^2  - \frac{1}{\sqrt{m}} \sum_{q \in I \setminus \{i\}} \lambda_q \phi'_{q,\bv_j} p~.
	\end{align*}
	By \lemref{lambda upper bound} and \lemref{lambda lower bound}, and using $\phi'_{q,\bv_j} \in [\gamma,1]$ for all $q \in I$, the above is larger than
	\begin{align*}
		\frac{1}{\sqrt{m}} \cdot \frac{1}{2\rmax^2} \cdot \gamma \rmin^2 - \frac{1}{\sqrt{m}} \cdot n \cdot \frac{3}{2\gamma^2\rmin^2} \cdot p
		&\geq \frac{\gamma \rmin^2}{2 \sqrt{m} \rmax^2} - \frac{1}{\sqrt{m}} \cdot \frac{\gamma^3}{3} \cdot \frac{\rmin^2}{p} \cdot \frac{\rmin^2}{\rmax^2} \cdot \frac{3}{2\gamma^2\rmin^2} \cdot p
		\\
		&= \frac{\gamma \rmin^2}{2 \sqrt{m} \rmax^2} - \frac{\gamma\rmin^2}{2\sqrt{m} \rmax^2}
		= 0~.
	\end{align*}
	Thus, $\bv_j^\top \bx_i > 0$, which implies $\phi'_{i,\bv_j} = 1$.

	Similarly, for all $i \in I_-$ we have 
	\begin{align*}
		\bv_j^\top \bx_i
		&= \frac{1}{\sqrt{m}} \sum_{q \in I} \lambda_q y_q \phi'_{q,\bv_j} \bx_q^\top \bx_i
		\\
		&= \frac{1}{\sqrt{m}} \lambda_i y_i \phi'_{i,\bv_j} \bx_i^\top \bx_i + \frac{1}{\sqrt{m}} \sum_{q \in I \setminus \{i\}} \lambda_q y_q \phi'_{q,\bv_j} \bx_q^\top \bx_i
		\\
		&\leq - \frac{1}{\sqrt{m}} \lambda_i  \phi'_{i,\bv_j} \rmin^2 + \frac{1}{\sqrt{m}} \sum_{q \in I \setminus \{i\}} \lambda_q \phi'_{q,\bv_j} p~.
	\end{align*}
	By \lemref{lambda upper bound} and \lemref{lambda lower bound}, and using $\phi'_{q,\bv_j} \in [\gamma,1]$ for all $q \in I$, the above is smaller than
	\begin{align*}
		- \frac{1}{\sqrt{m}} \cdot \frac{1}{2\rmax^2} \cdot \gamma \rmin^2  + \frac{1}{\sqrt{m}} \cdot n \cdot \frac{3}{2\gamma^2 \rmin^2} \cdot p
		&\leq - \frac{\gamma \rmin^2}{2 \sqrt{m} \rmax^2} + \frac{1}{\sqrt{m}} \cdot \frac{\gamma^3}{3} \cdot \frac{\rmin^2}{p} \cdot \frac{\rmin^2}{\rmax^2} \cdot \frac{3}{2\gamma^2 \rmin^2} \cdot p
		\\
		&= - \frac{\gamma \rmin^2}{2 \sqrt{m} \rmax^2} + \frac{\gamma\rmin^2}{2 \sqrt{m} \rmax^2}  
		= 0~.
	\end{align*}
	Thus, $\bv_j^\top \bx_i < 0$, which implies $\phi'_{i,\bv_j} = \gamma$.
	
	Using \eqref{eq:kkt condition v} again we conclude that
	\[
		\bv_j 
		= \frac{1}{\sqrt{m}} \sum_{i \in I} \lambda_i y_i \phi'_{i,\bv_j} \bx_i
		= \frac{1}{\sqrt{m}} \sum_{i \in I_+} \lambda_i \bx_i - \frac{\gamma}{\sqrt{m}} \sum_{i \in I_-} \lambda_i  \bx_i~.
	\]
	Since the above expression holds for all $j \in [m_1]$ then we have $\bv_1=\ldots=\bv_{m_1}$.
	
	By similar arguments (which we do not repeat for conciseness) we also get
	\[
		\bu_1=\ldots=\bu_{m_2} = \frac{1}{\sqrt{m}} \sum_{i \in I_-} \lambda_i \bx_i - \frac{\gamma}{\sqrt{m}} \sum_{i \in I_+} \lambda_i  \bx_i~.
	\]
	and $y_i \bu_j^\top \bx_i < 0$ for all $i \in I$ and $j \in [m_2]$.
\end{proof}

By the above lemma, we may denote $\bv := \bv_1=\ldots=\bv_{m_1}$ and $\bu := \bu_1=\ldots=\bu_{m_2}$, and denote $\bz := \frac{m_1}{\sqrt{m}} \bv - \frac{m_2}{\sqrt{m}} \bu$.

\begin{lemma} \label{lem:optimal wu}
	The pair $\bv,\bu$ is a unique global optimum of the Problem~(\ref{eq:optimization problem2}).
\end{lemma}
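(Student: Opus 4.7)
}
The plan is to recognize Problem~(\ref{eq:optimization problem2}) as a strictly convex quadratic program (the objective has Hessian $\mathrm{diag}(m_1 I_d, m_2 I_d) \succ 0$ since $m_1,m_2 \geq 1$) with affine constraints. For such a problem, any point satisfying the KKT conditions is the unique global optimum, so it suffices to exhibit Lagrange multipliers certifying that the pair $(\bv,\bu)$ from Lemma~\ref{lem:rank 2} is a KKT point. The natural candidate multipliers are precisely the $\lambda_i$'s coming from the KKT conditions of the original parameter-space margin problem~(\ref{eq:optimization problem}).

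First I would verify primal feasibility. By Lemma~\ref{lem:rank 2}, for every $i \in I_+$ we have $\bv^\top \bx_i > 0$ and $\bu^\top \bx_i < 0$, so the leaky ReLU acts linearly (slope $1$) on $\bv^\top \bx_i$ and with slope $\gamma$ on $\bu^\top \bx_i$. Thus
\[
f(\bx_i; W) = \tfrac{m_1}{\sqrt{m}}\, \bv^\top \bx_i \;-\; \gamma\, \tfrac{m_2}{\sqrt{m}}\, \bu^\top \bx_i,
\]
and combining with $y_i f(\bx_i; W) = 1$ (Lemma~\ref{lem:I is all}) shows the $I_+$ constraint of~(\ref{eq:optimization problem2}) holds with equality. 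The analogous computation for $i \in I_-$ (where now $\bv^\top \bx_i < 0$ and $\bu^\top \bx_i > 0$) yields equality in the $I_-$ constraint.

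Next I would check stationarity. Taking gradients of the Lagrangian of~(\ref{eq:optimization problem2}) with multipliers $\mu_i \geq 0$ gives
\[
m_1 \bv = \tfrac{m_1}{\sqrt{m}}\sum_{i \in I_+} \mu_i \bx_i - \gamma\, \tfrac{m_1}{\sqrt{m}}\sum_{i \in I_-}\mu_i \bx_i,
\qquad
m_2 \bu = \tfrac{m_2}{\sqrt{m}}\sum_{i \in I_-} \mu_i \bx_i - \gamma\, \tfrac{m_2}{\sqrt{m}}\sum_{i \in I_+}\mu_i \bx_i.
\]
Setting $\mu_i := \lambda_i$ and cancelling $m_1, m_2$ recovers exactly the expressions for $\bv$ and $\bu$ given in Lemma~\ref{lem:rank 2}, so the stationarity condition is satisfied. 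Dual feasibility $\mu_i \geq 0$ follows from Lemma~\ref{lambda lower bound} (we actually have $\mu_i = \lambda_i > \tfrac{1}{2 \rmax^2}$), and complementary slackness is automatic because every constraint is already tight by the previous step.

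Having verified all KKT conditions of the convex program, I conclude that $(\bv,\bu)$ is a global optimum of~(\ref{eq:optimization problem2}); uniqueness then follows immediately from strict convexity of the objective. There is no substantial obstacle here: all the work has been done in Lemmas~\ref{lambda upper bound}--\ref{lem:rank 2}, and this lemma is essentially a bookkeeping step that translates KKT for~(\ref{eq:optimization problem}) into KKT for the reduced convex problem~(\ref{eq:optimization problem2}).
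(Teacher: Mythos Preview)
Your proposal is correct and follows essentially the same approach as the paper: both recognize Problem~(\ref{eq:optimization problem2}) as a strictly convex program with affine constraints (so KKT implies unique global optimality), verify primal feasibility with equality via Lemmas~\ref{lem:rank 2} and~\ref{lem:I is all}, establish stationarity by taking $\mu_i = \lambda_i$, and note complementary slackness is automatic since all constraints are tight.
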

\begin{proof}
    First, we remark that a variant of the this lemma appears in \citet{sarussi2021towards}. They proved the claim under an assumption called \emph{Neural Agreement Regime (NAR)}, and \lemref{lem:rank 2} implies that this assumption holds in our setting.

	Note that the objective in Problem~(\ref{eq:optimization problem2}) is strictly convex and the constraints are affine. Hence, its KKT conditions are sufficient for global optimality, and the global optimum is unique. It remains to show that $\bv,\bu$ satisfy the KKT conditions. 
	
	Firstly, note that $\bv,\bu$ satisfy the constraints. Indeed, by \lemref{lem:rank 2}, for every $i \in I_+$ we have $\bv^\top \bx_i > 0$ and $\bu^\top \bx_i <0$. Combining it with  \lemref{lem:I is all} we get
	\begin{equation} \label{eq:wu problem constraints plus}
		1 
		= f(\bx_i; W) 
		= \frac{m_1}{\sqrt{m}} \phi( \bv^\top \bx_i) - \frac{m_2}{\sqrt{m}} \phi( \bu^\top \bx_i)
		= \frac{m_1}{\sqrt{m}} \bv^\top \bx_i - \gamma \frac{m_2}{\sqrt{m}} \bu^\top \bx_i~.
	\end{equation}
	Similarly, for every $i \in I_-$ we have $\bv^\top \bx_i < 0$ and $\bu^\top \bx_i > 0$. Together with \lemref{lem:I is all} we get
	\begin{equation} \label{eq:wu problem constraints minus}
		- 1 
		= f(\bx_i; W) 
		= \frac{m_1}{\sqrt{m}} \phi( \bv^\top \bx_i) - \frac{m_2}{\sqrt{m}} \phi( \bu^\top \bx_i)
		= \gamma \frac{m_1}{\sqrt{m}} \bv^\top \bx_i - \frac{m_2}{\sqrt{m}} \bu^\top \bx_i~.
	\end{equation}

	Next, we need to show that there are $\mu_1,\ldots,\mu_n \geq 0$ such that 
	\[
		m_1 \bv 
		= \sum_{i \in I_+} \mu_i \frac{m_1}{\sqrt{m}} \bx_i + \sum_{i \in I_-} \mu_i (-\gamma \frac{m_1}{\sqrt{m}} \bx_i)~,
	\]
	\[
		m_2 \bu
		= \sum_{i \in I_+} \mu_i (-\gamma \frac{m_2}{\sqrt{m}} \bx_i) + \sum_{i \in I_-} \mu_i \frac{m_2}{\sqrt{m}} \bx_i~.
	\]
	By setting $\mu_i = \lambda_i$ for all $i \in I$, \lemref{lem:rank 2} implies that the above equations hold.

	Finally, we need to show that $\mu_i = 0$ for all $i \in I$ where the corresponding constraint holds with a strict inequality. However, by \eqref{eq:wu problem constraints plus} and~(\ref{eq:wu problem constraints minus}) all constraints hold with an equality.
\end{proof}

\begin{lemma}
	The weight matrix $W$ is a unique global optimum of Problem~(\ref{eq:optimization problem}).
\end{lemma}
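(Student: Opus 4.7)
The plan is to show that every global optimum of Problem~(\ref{eq:optimization problem}) must coincide with the given KKT point $W$, by routing through the structural characterization already established for \emph{any} KKT point in the previous lemmas.

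First, I would establish that a global minimum exists. The feasible set is non-empty (it contains $W$ by Lemma~\ref{lem:I is all}), closed by continuity of $f$ in $W$, and the squared Frobenius-norm objective is coercive; so some global minimizer $W^*$ is attained.

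Second, I would argue that any such $W^*$ is itself a KKT point of Problem~(\ref{eq:optimization problem}). This is the standard necessary optimality condition in the Clarke framework (Appendix~\ref{app:KKT}) provided a constraint qualification holds. The Mangasarian--Fromovitz qualification is immediate here from the $1$-homogeneity of $f(\bx_i;\cdot)$: taking the direction $d = W^*$, the Clarke directional derivative of the constraint $1 - y_i f(\bx_i;\cdot)$ at $W^*$ equals $-y_i f(\bx_i; W^*) \le -1 < 0$ at every active constraint, so the qualification holds and $W^*$ satisfies the KKT conditions of Problem~(\ref{eq:optimization problem}).

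Once $W^*$ is known to be a KKT point, Lemmas~\ref{lem:rank 2} and~\ref{lem:optimal wu} apply verbatim to $W^*$: Lemma~\ref{lem:rank 2} gives $\bv^*_1 = \cdots = \bv^*_{m_1} =: \bv^*$ and $\bu^*_1 = \cdots = \bu^*_{m_2} =: \bu^*$, while Lemma~\ref{lem:optimal wu} identifies $(\bv^*, \bu^*)$ as the unique global optimum of the strictly convex Problem~(\ref{eq:optimization problem2}). Applying that same lemma to the given $W$ tells us that $(\bv, \bu)$ is also this unique optimum, so $(\bv^*, \bu^*) = (\bv, \bu)$ and therefore $W^* = W$. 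This simultaneously establishes that $W$ is a global optimum of Problem~(\ref{eq:optimization problem}) and that it is the only one. The sole subtle point is the KKT-necessity step for this non-smooth, non-convex problem, which is resolved cleanly by the $1$-homogeneity of the network; everything else is a direct reduction to the already-proven Lemmas~\ref{lem:rank 2} and~\ref{lem:optimal wu}.
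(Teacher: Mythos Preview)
Your proposal is correct and follows essentially the same approach as the paper: show that every KKT point of Problem~(\ref{eq:optimization problem}) is uniquely determined (via Lemmas~\ref{lem:rank 2} and~\ref{lem:optimal wu}), then argue that any global optimum must be a KKT point, hence equals $W$. The only difference is in how KKT-necessity at the optimum is justified: the paper simply cites \citet{lyu2019gradient} for this fact, whereas you supply a self-contained MFCQ verification via $1$-homogeneity (and you also make the existence step explicit via coercivity, which the paper leaves implicit).
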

\begin{proof}
	Let $\tilde{W}$ be a weight matrix that satisfies
	the KKT conditions of Problem~(\ref{eq:optimization problem}),
	and let $\tilde{\bv}_1,\ldots,\tilde{\bv}_{m_1},\tilde{\bu}_1,\ldots,\tilde{\bu}_{m_2}$ be the corresponding positive and negative weight vectors.
	We first show that $\tilde{W} = W$, i.e., there is a unique KKT point for Problem~(\ref{eq:optimization problem}). 
    Indeed, by \lemref{lem:rank 2}, for every such $\tilde{W}$ we have $\tilde{\bv}_1=\ldots=\tilde{\bv}_{m_1}:=\tilde{\bv}$ and $\tilde{\bu}_1=\ldots=\tilde{\bu}_{m_2}:=\tilde{\bu}$, and by \lemref{lem:optimal wu} the vectors $\tilde{\bv},\tilde{\bu}$ are a unique global optimum of Problem~(\ref{eq:optimization problem2}). Since by \lemref{lem:optimal wu} the vectors $\bv,\bu$ are also a unique global optimum of Problem~(\ref{eq:optimization problem2}), then we must have $\bv=\tilde{\bv}$ and $\bu=\tilde{\bu}$. 

	 Now, let $W^*$ be a global optimum of Problem~(\ref{eq:optimization problem}). By \citet{lyu2019gradient}, the KKT conditions of this problem are necessary for optimality, and hence they are satisfied by $W^*$. Therefore, we have $W^*=W$. Thus, $W$ is a unique global optimum.
\end{proof}

\begin{lemma}
	 For every $\bx \in \reals^d$ we have $\sign\left(f(\bx; W)\right) = \sign(\bz^\top \bx)$.
\end{lemma}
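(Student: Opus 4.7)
The plan is to prove the claim by a direct case analysis on the signs of $\bv^\top \bx$ and $\bu^\top \bx$, using the explicit form of the leaky ReLU and the definition $\bz = \frac{m_1}{\sqrt{m}} \bv - \frac{m_2}{\sqrt{m}} \bu$. The key identity to exploit is that $\phi(z) = z$ for $z \geq 0$ and $\phi(z) = \gamma z$ for $z \leq 0$, so on any region where the signs of $\bv^\top \bx$ and $\bu^\top \bx$ are fixed, $f(\bx; W)$ becomes an explicit linear function of $\bx$ that can be directly compared to $\bz^\top \bx$.

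Concretely, I would fix an arbitrary $\bx \in \reals^d$ and split into four cases. \emph{Case (i):} $\bv^\top \bx \geq 0$ and $\bu^\top \bx \leq 0$. Then $\phi(\bv^\top \bx) = \bv^\top \bx$ and $\phi(\bu^\top \bx) = \gamma \bu^\top \bx$, so $f(\bx;W) = \frac{m_1}{\sqrt{m}} \bv^\top \bx - \gamma \frac{m_2}{\sqrt{m}} \bu^\top \bx$, which is a nonnegative combination of two nonnegative quantities; similarly $\bz^\top \bx = \frac{m_1}{\sqrt{m}} \bv^\top \bx - \frac{m_2}{\sqrt{m}} \bu^\top \bx \geq 0$, and both vanish exactly when $\bv^\top \bx = \bu^\top \bx = 0$, so the signs agree. \emph{Case (ii):} $\bv^\top \bx \leq 0$ and $\bu^\top \bx \geq 0$ is symmetric, yielding nonpositive values for both $f(\bx; W)$ and $\bz^\top \bx$.

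The two remaining cases, where $\bv^\top \bx$ and $\bu^\top \bx$ have the same sign, are even cleaner. \emph{Case (iii):} if both are strictly positive, then $\phi$ acts as the identity on each, so $f(\bx; W) = \frac{m_1}{\sqrt{m}} \bv^\top \bx - \frac{m_2}{\sqrt{m}} \bu^\top \bx = \bz^\top \bx$ exactly. \emph{Case (iv):} if both are strictly negative, then $\phi$ multiplies each by $\gamma$, giving $f(\bx; W) = \gamma\, \bz^\top \bx$, which has the same sign as $\bz^\top \bx$.

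There is no serious obstacle here; the argument is pure case analysis and is essentially a restatement of how the leaky ReLU collapses to a linear map on each sign-region. The only subtlety worth being careful about is the boundary behavior when $\bv^\top \bx$ or $\bu^\top \bx$ equals zero, which is handled by the sign convention $\sign(0) = -1$ used in the paper together with the fact that $\phi(0) = 0$ makes the relevant term in $f$ vanish exactly when the corresponding term in $\bz^\top \bx$ does. Combining the four cases proves $\sign(f(\bx; W)) = \sign(\bz^\top \bx)$ for every $\bx \in \reals^d$, as required.
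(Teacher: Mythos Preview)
Your proposal is correct and follows essentially the same approach as the paper's proof: a four-case analysis on the signs of $\bv^\top \bx$ and $\bu^\top \bx$, using that on each sign region $f(\bx;W)$ is either equal to $\bz^\top \bx$, equal to $\gamma\,\bz^\top \bx$, or has a definite sign that matches that of $\bz^\top \bx$. The only cosmetic difference is how the boundary cases are distributed among the four regions; your explicit handling of the $\sign(0)=-1$ convention is a nice touch that the paper leaves implicit.
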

\begin{proof}
    First, We remark that a variant of the this lemma appears in \citet{sarussi2021towards}. They proved the claim under an assumption called \emph{Neural Agreement Regime (NAR)}, and \lemref{lem:rank 2} implies that this assumption holds in our setting.

	Let $\bx \in \reals^d$. Consider the following cases:
	
	{\bf Case 1:} If $\bv^\top \bx \geq 0$ and $\bu^\top \bx \geq 0$ then $f(\bx; W) = \frac{m_1}{\sqrt{m}} \bv^\top \bx - \frac{m_2}{\sqrt{m}} \bu^\top \bx = \bz^\top \bx$, and thus $\sign\left(f(\bx; W)\right) = \sign(\bz^\top \bx)$.
	
	{\bf Case 2:} If $\bv^\top \bx \geq 0$ and $\bu^\top \bx < 0$ then $f(\bx; W) = \frac{m_1}{\sqrt{m}} \bv^\top \bx - \frac{m_2}{\sqrt{m}} \gamma \bu^\top \bx > 0$ and $\bz^\top \bx = \frac{m_1}{\sqrt{m}} \bv^\top \bx - \frac{m_2}{\sqrt{m}} \bu^\top \bx >0$.
	
	{\bf Case 3:} If $\bv^\top \bx < 0$ and $\bu^\top \bx \geq 0$ then $f(\bx; W) = \frac{m_1}{\sqrt{m}} \gamma \bv^\top \bx - \frac{m_2}{\sqrt{m}} \bu^\top \bx < 0$ and $\bz^\top \bx = \frac{m_1}{\sqrt{m}} \bv^\top \bx - \frac{m_2}{\sqrt{m}} \bu^\top \bx < 0$.
	
	{\bf Case 4:} If $\bv^\top \bx < 0$ and $\bu^\top \bx < 0$ then $f(\bx; W) = \frac{m_1}{\sqrt{m}} \gamma \bv^\top \bx - \frac{m_2}{\sqrt{m}} \gamma \bu^\top \bx = \gamma \bz^\top \bx$, and thus $\sign\left(f(\bx; W)\right) = \sign(\bz^\top \bx)$.
\end{proof}

\begin{lemma}
	The vector $\bz$ may not be an $\ell_2$-max-margin linear predictor.
\end{lemma}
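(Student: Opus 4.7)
The claim only asserts that $\bz$ \emph{may} fail to be the $\ell_2$-max-margin predictor, so it suffices to exhibit a single training set satisfying the hypotheses of \thmref{thm:implicit bias leaky relu} on which $\bz \neq \bz^*$. The cleanest regime is exactly orthogonal data of equal norm, one positive and one negative neuron, and $\gamma$ strictly less than $1$. The concrete example I would take is $n = 2$, $d = 2$, $m_1 = m_2 = 1$, $\gamma \in (0,1)$, with $\bx_1 = R e_1$, $y_1 = +1$ and $\bx_2 = R e_2$, $y_2 = -1$ for any $R > 0$. Since $\inner{\bx_1,\bx_2} = 0$, the standing assumption $\rmin^2 \geq 3\gamma^{-3}\rratio^2 n\max_{i\neq j}|\inner{\bx_i,\bx_j}|$ is vacuous and \thmref{thm:implicit bias leaky relu} applies.

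The plan is to compute $\bz$ and $\bz^*$ in closed form and compare. First I would pin down the KKT multipliers by combining the interpolation identity $y_i f(\bx_i; W) = 1$ from item~1 with the explicit expressions for $\bv, \bu$ in item~3. Orthogonality and equal norms make the resulting linear system diagonal and yield the common value $\lambda_1 = \lambda_2 = \frac{m}{R^2(1+\gamma^2)}$. Substituting into the definition $\bz = \frac{m_1}{\sqrt{m}}\bv - \frac{m_2}{\sqrt{m}}\bu$ and simplifying gives $\bz = \frac{1+\gamma}{1+\gamma^2}\cdot\frac{1}{R^2}(\bx_1 - \bx_2)$. A separate, standard hard-margin SVM computation on orthogonal data identifies $\bz^* = \frac{1}{R^2}(\bx_1 - \bx_2)$, since both points are necessarily support vectors and the constraints $y_i \tilde{\bz}^{\top}\bx_i \geq 1$ must be tight at a minimum-norm solution.

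Comparing the two formulas gives $\bz = \frac{1+\gamma}{1+\gamma^2}\,\bz^*$, and the scalar $\frac{1+\gamma}{1+\gamma^2}$ is strictly greater than $1$ for every $\gamma \in (0,1)$. Consequently $\bz$ overshoots the margin requirement and has strictly larger norm than $\bz^*$, so $\bz \neq \bz^*$, which is the desired conclusion. There is no real obstacle here: the example is designed so that every quantity is obtained by direct substitution into item~3, and item~4 (uniqueness of the KKT point) guarantees that the $\bz$ computed above is precisely the one produced by the theorem on this dataset. The only mild point worth spelling out in the write-up is verifying that the pair $(\bv,\bu)$ we obtain indeed satisfies items~1--3 (hence is the unique KKT point), which is immediate from the orthogonality and symmetry of the construction.
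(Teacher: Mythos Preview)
Your construction does not establish the claim. The phrase ``$\ell_2$-max-margin linear predictor'' refers to the \emph{direction} of $\bz$: a vector $\bz$ is max-margin iff it is a KKT point of $\min_{\tilde\bz}\tfrac12\|\tilde\bz\|^2$ subject to $y_i\tilde\bz^\top\bx_i\geq\beta$ for \emph{some} margin level $\beta>0$ (this is exactly what the paper checks in its proof). In your example the computation is correct and yields $\bz=\tfrac{1+\gamma}{1+\gamma^2}\,\bz^*$, but this is a positive scalar multiple of $\bz^*$, so $\bz$ \emph{is} a max-margin predictor: its normalized margin $\min_i y_i\bz^\top\bx_i/\|\bz\|$ equals $1/\|\bz^*\|$. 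Showing $\bz\neq\bz^*$ as vectors is not the right target; you must exhibit a dataset on which $\bz$ is not parallel to $\bz^*$.

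The symmetry of your two-point, orthogonal, equal-norm, $m_1=m_2$ configuration forces $\bz\parallel\bz^*$, so no choice of $R$ or $\gamma$ will rescue it. Two repairs are available. (i) Keep $n=2$ and orthogonal equal-norm data but take $m_1\neq m_2$: repeating your computation gives $\lambda_1=\tfrac{m}{R^2(m_1+\gamma^2 m_2)}$, $\lambda_2=\tfrac{m}{R^2(m_2+\gamma^2 m_1)}$, and $\bz=\tfrac1m\big[(m_1+\gamma m_2)\lambda_1\,\bx_1-(m_2+\gamma m_1)\lambda_2\,\bx_2\big]$; a short calculation shows $(m_1+\gamma m_2)\lambda_1\neq(m_2+\gamma m_1)\lambda_2$ whenever $m_1\neq m_2$ and $\gamma<1$, so $\bz\not\parallel(\bx_1-\bx_2)=R^2\bz^*$. (ii) Follow the paper's route: keep $m_1=m_2$ but take $n=3$ points with two of them having a small nonzero inner product $\epsilon$; then all three KKT multipliers are positive while $\bz^\top\bx_2\neq\bz^\top\bx_3$, so $\bz$ cannot be a KKT point of the linear max-margin problem for any $\beta$.
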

\begin{proof}
	We give an example of a setting that satisfies the theorem's assumptions, but the corresponding vector $\bz$ is not an $\ell_2$-max-margin linear predictor.
	Let $\gamma = \frac12$ and suppose that $m_1=m_2:=m'$. Let $\bx_1 = (-1,0,0)^\top$, $\bx_2 = (\epsilon, \sqrt{1-\epsilon^2},0)^\top$, and $\bx_3 = (0,0,1)^\top$, where $\epsilon > 0$ is sufficiently small such that the theorem's assumption holds. Namely, since we need $n \leq \frac{\gamma^3}{3} \cdot \frac{ \rmin^2}{p} \cdot \frac{\rmin^2}{\rmax^2}$ and we have $\rmin=\rmax=1$ and $p=\epsilon$, then $\epsilon$ should satisfy $3 \leq  \frac{1}{8 \cdot 3 \epsilon}$. We also let $y_1 = -1$, $y_2 = y_3 = 1$. Let $W$ be a KKT point of Problem~(\ref{eq:optimization problem}) w.r.t. the dataset $\{(\bx_i,y_i)\}_{i=1}^3$,
	and let $\bv_1,\ldots,\bv_{m'},\bu_1,\ldots,\bu_{m'}$ be the corresponding weight vectors. By \lemref{lem:rank 2} and \lemref{lem:optimal wu} we have $\bv=\bv_1=\ldots=\bv_{m'}$ and $\bu=\bu_1=\ldots=\bu_{m'}$ where $\bv,\bu$ are a solution of Problem~(\ref{eq:optimization problem2}). Moreover, by \lemref{lem:rank 2} and \lemref{lambda lower bound} we have 
	\begin{align}
		\bv &= \frac{1}{\sqrt{2m'}} \left( \lambda_2 \bx_2 + \lambda_3 \bx_3 - \gamma \lambda_1 \bx_1 \right) = \frac{1}{\sqrt{2m'}} \left( \lambda_2 \bx_2 + \lambda_3 \bx_3 - \frac12 \cdot \lambda_1 \bx_1 \right)~, \label{eq:neg example w}
		\\
		\bu &= \frac{1}{\sqrt{2m'}} \left( \lambda_1 \bx_1 -  \gamma \lambda_2 \bx_2 -  \gamma \lambda_3 \bx_3 \right) = \frac{1}{\sqrt{2m'}} \left( \lambda_1 \bx_1 -  \frac12 \cdot \lambda_2 \bx_2 -  \frac12 \cdot \lambda_3 \bx_3\right)~, \label{eq:neg example u}
	\end{align}
	where $\lambda_i > 0$ for all $i$. Since $\bx_1,\bx_2,\bx_3$ are linearly independent, then given $\bv,\bu$ there is a unique choice of $\lambda_1,\lambda_2,\lambda_3$ that satisfy the above equations. 
	
	Since $\bv,\bu$ satisfy the KKT conditions of Problem~(\ref{eq:optimization problem2}), we can find $\lambda_1,\lambda_2,\lambda_3$ as follows. Let $\mu_1,\mu_2,\mu_3 \geq 0$ be such that the KKT conditions of Problem~(\ref{eq:optimization problem2}) hold. From the stationarity condition we have
	\begin{align*}
		m' \bv &= \mu_2 \frac{m'}{\sqrt{2m'}} \bx_2 + \mu_3 \frac{m'}{\sqrt{2m'}} \bx_3 - \gamma \mu_1 \frac{m'}{\sqrt{2m'}} \bx_1~, 
		\\
		m' \bu &= \mu_1 \frac{m'}{\sqrt{2m'}} \bx_1 -  \gamma \mu_2 \frac{m'}{\sqrt{2m'}} \bx_2 -  \gamma \mu_3 \frac{m'}{\sqrt{2m'}} \bx_3~. 
	\end{align*}
	Since $\bx_1,\bx_2,\bx_3$ are linearly independent, combining the above with \eqref{eq:neg example w} and~(\ref{eq:neg example u}) implies $\mu_i = \lambda_i > 0$ for all $i$. Therefore, all constraints in Problem~(\ref{eq:optimization problem2}) must hold with an equality. Namely, we have
	\begin{align*}
		\frac{\sqrt{2m'}}{m'} 
		&= \left(\bu^\top - \frac12 \bv^\top \right) \bx_1 
		\\
		&= \frac{1}{\sqrt{2m'}} \left[ \lambda_1 \bx_1 -  \frac12 \cdot \lambda_2 \bx_2 -  \frac12 \cdot \lambda_3 \bx_3 - \frac12 \left(  \lambda_2 \bx_2 + \lambda_3 \bx_3 - \frac12 \cdot \lambda_1 \bx_1 \right) \right]^\top \bx_1
		\\
		&= \frac{1}{\sqrt{2m'}} \left( \frac{5}{4} \cdot \lambda_1 \bx_1 - \lambda_2 \bx_2 - \lambda_3 \bx_3 \right)^\top \bx_1
		= \frac{1}{\sqrt{2m'}} \left( \frac{5}{4} \cdot \lambda_1 \cdot 1 - \lambda_2 (-\epsilon) - \lambda_3 \cdot 0 \right)
		\\
		&=  \frac{1}{\sqrt{2m'}} \left( \frac{5}{4} \cdot \lambda_1 + \lambda_2 \epsilon \right)~,
	\end{align*}
	\begin{align*}
		\frac{\sqrt{2m'}}{m'} 
		&= \left(\bv^\top - \frac12 \bu^\top \right) \bx_2
		\\
		&= \frac{1}{\sqrt{2m'}} \left[  \lambda_2 \bx_2 + \lambda_3 \bx_3 - \frac12 \cdot \lambda_1 \bx_1 - \frac12 \left( \lambda_1 \bx_1 -  \frac12 \cdot \lambda_2 \bx_2 -  \frac12 \cdot \lambda_3 \bx_3\right) \right]^\top \bx_2
		\\
		&= \frac{1}{\sqrt{2m'}} \left( \frac{5}{4} \cdot  \lambda_2 \bx_2 + \frac{5}{4} \cdot  \lambda_3 \bx_3 - \lambda_1 \bx_1 \right)^\top \bx_2
		= \frac{1}{\sqrt{2m'}} \left( \frac{5}{4} \cdot  \lambda_2 + 0 - \lambda_1 (-\epsilon) \right)
		\\
		&= \frac{1}{\sqrt{2m'}} \left( \frac{5}{4} \cdot  \lambda_2  + \lambda_1 \epsilon \right)~,
	\end{align*}
	and
	\begin{align*}
		\frac{\sqrt{2m'}}{m'} 
		= \left(\bv^\top - \frac12 \bu^\top \right) \bx_3
		= \frac{1}{\sqrt{2m'}} \left( \frac{5}{4} \cdot  \lambda_2 \bx_2 + \frac{5}{4} \cdot  \lambda_3 \bx_3 - \lambda_1 \bx_1 \right)^\top \bx_3
		=  \frac{1}{\sqrt{2m'}} \cdot \frac{5}{4} \cdot  \lambda_3~.
	\end{align*}	
	Solving the above equations, we get $\lambda_1 = \lambda_2 = \frac{8}{4\epsilon + 5}$, and $\lambda_3 = \frac{8}{5}$.
	
	Thus, a KKT point of Problem~(\ref{eq:optimization problem}) must satisfy \eqref{eq:neg example w} and~(\ref{eq:neg example u}) with the above $\lambda_i$'s. Now, consider 
	\begin{align*}
		\bz
		&= \frac{m'}{\sqrt{2m'}} \bv - \frac{m'}{\sqrt{2m'}} \bu 
		= \frac{m'}{\sqrt{2m'}} \left(\bv - \bu \right)
		\\
		&= \frac{m'}{\sqrt{2m'}} \cdot \frac{1}{\sqrt{2m'}} \left( \sum_{i \in I_+} \lambda_i \bx_i - \gamma \sum_{i \in I_-} \lambda_i \bx_i - \sum_{i \in I_-} \lambda_i \bx_i + \gamma \sum_{i \in I_+} \lambda_i \bx_i \right)
		\\
		&=  \frac{1+\gamma}{2} \left( \sum_{i \in I_+} \lambda_i \bx_i - \sum_{i \in I_-} \lambda_i \bx_i \right)
		\\
		&= \frac{3}{4} \left( \frac{8}{4\epsilon + 5} \cdot \bx_2 + \frac{8}{5} \cdot \bx_3 -  \frac{8}{4\epsilon + 5} \cdot \bx_1 \right)
		\\
		&= \left( \frac{6}{4\epsilon + 5} \cdot \bx_2 + \frac{6}{5} \cdot \bx_3 -  \frac{6}{4\epsilon + 5}  \cdot \bx_1 \right)~.
	\end{align*}
	We need to show that $\bz$ does not satisfy the KKT conditions of the problem
	\begin{equation} \label{eq:problem for neg lemma}
		\min_{\tilde{\bz}} \frac{1}{2} \norm{\tilde{\bz}}^2 \;\;\;\; \text{s.t. } \;\;\; \forall i \in \{1,2,3\} \;\;\; y_i \tilde{\bz}^\top \bx_i \geq \beta~,
	\end{equation}
	for any margin $\beta > 0$. A KKT point $\tilde{\bz}$ of the above problem must satisfy $\tilde{\bz} = - \lambda'_1 \bx_1 + \lambda'_2 \bx_2 + \lambda'_3 \bx_3$, where $\lambda'_i \geq 0$ for all $i$, and $\lambda'_i=0$ if $y_i \tilde{\bz}^\top \bx_i \neq \beta$. Since $\bz$ is a linear combination of the three independent vectors $\bx_1,\bx_2,\bx_3$ where the coefficients are non-zero, then if $\bz$ is a KKT point of Problem~(\ref{eq:problem for neg lemma}) we must have $\lambda'_i \neq 0$ for all $i$, which implies $y_i \bz^\top \bx_i = \beta$ for all $i$. Therefore, in order to conclude that $\bz$ is not a KKT point, it suffices to show that $\bz^\top \bx_2 \neq \bz^\top \bx_3$.
	
	We have
	\begin{align*}
		\bz^\top \bx_2 
		=  \left( \frac{6}{4\epsilon + 5} \cdot \bx_2 + \frac{6}{5} \cdot \bx_3 -  \frac{6}{4\epsilon + 5} \cdot \bx_1 \right)^\top \bx_2
		= \frac{6}{4\epsilon + 5} + 0 + \frac{6 \epsilon}{4\epsilon + 5} 
		= \frac{6(\epsilon + 1)}{4\epsilon + 5}~,		
	\end{align*}
	and
	\begin{align*}
		\bz^\top \bx_3
		= \left( \frac{6}{4\epsilon + 5} \cdot \bx_2 + \frac{6}{5} \cdot \bx_3 -  \frac{6}{4\epsilon + 5} \cdot \bx_1 \right)^\top \bx_3
		=  \frac{6}{5}~.
	\end{align*}
	Using the above equations, it is easy to verify that $\bz^\top \bx_2 \neq \bz^\top \bx_3$ for all $\epsilon > 0$.	
\end{proof}

\begin{lemma}
	 For all $i \in I$ we have $y_i \bz^\top \bx_i \geq 1$, and $\norm{\bz} \leq \frac{2}{\kappa+\gamma} \norm{\bz^*}$, where $\bz^* = \argmin_{\tilde{\bz}} \norm{\tilde{\bz}}$ s.t. $y_i \tilde{\bz}^\top \bx_i \geq 1$ for all $i \in I$.
\end{lemma}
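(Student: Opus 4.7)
The proof splits into two parts matching the two claims. For the margin bound $y_i \bz^\top \bx_i \geq 1$, I would combine the equality $y_i f(\bx_i;W) = 1$ from \lemref{lem:I is all} with the sign information $y_i \bv^\top \bx_i > 0$ and $y_i \bu^\top \bx_i < 0$ from \lemref{lem:rank 2}. These sign conditions pin down which branch of the leaky ReLU is active at each inner product, so $f(\bx_i;W)$ becomes an explicit linear combination of $\bv^\top \bx_i$ and $\bu^\top \bx_i$. A direct computation then shows that $\bz^\top \bx_i - y_i f(\bx_i;W)$ collapses to a single term proportional to $(\gamma-1)$ times an inner product whose sign makes the overall expression nonnegative; the factor $\gamma<1$ is precisely what produces the sign flip. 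For instance, on $I_+$ the difference is $\tfrac{m_2(\gamma-1)}{\sqrt m}\bu^\top \bx_i \geq 0$, and $I_-$ is symmetric.

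For the norm bound, the plan is to exploit optimality of $(\bv, \bu)$ for Problem~(\ref{eq:optimization problem2}) provided by \lemref{lem:optimal wu}. Triangle inequality followed by Cauchy--Schwarz with weights $\sqrt{m_i/m}$ summing to $1$ yields
\[
\|\bz\|^2 \leq \left(\tfrac{m_1}{\sqrt m}\|\bv\| + \tfrac{m_2}{\sqrt m}\|\bu\|\right)^2 \leq m_1\|\bv\|^2 + m_2\|\bu\|^2,
\]
so $\|\bz\|^2$ is at most twice the optimal Problem~(\ref{eq:optimization problem2}) value. To bound this optimum I would exhibit the feasible candidate $(\tilde\bv, \tilde\bu) = (c\bz^*/\sqrt{m_1},\,-c\bz^*/\sqrt{m_2})$. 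Using $y_i(\bz^*)^\top \bx_i \geq 1$, the two constraint families reduce to $c(\sqrt{m_1}+\gamma\sqrt{m_2}) \geq \sqrt m$ and $c(\sqrt{m_2}+\gamma\sqrt{m_1}) \geq \sqrt m$, and since $\gamma<1$ the tighter one is $c(\sqrt L + \gamma\sqrt M) \geq \sqrt m$, where $L=\min(m_1,m_2)$, $M=\max(m_1,m_2)$. Taking $c$ at this threshold, the Problem~(\ref{eq:optimization problem2}) objective at $(\tilde\bv, \tilde\bu)$ evaluates to $c^2 \|\bz^*\|^2 = m\|\bz^*\|^2/(M(\kappa+\gamma)^2)$, and combining this with $m\leq 2M$ yields $\|\bz\|^2 \leq 4\|\bz^*\|^2/(\kappa+\gamma)^2$, i.e.\ the claimed inequality after taking square roots.

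I do not expect serious obstacles; once the structural results \lemref{lem:I is all}, \lemref{lem:rank 2}, and \lemref{lem:optimal wu} are in hand, the remaining work is essentially algebraic. The only design choice is the scaling of $\bz^*$ in the feasible candidate: a uniform choice $(\tilde\bv,\tilde\bu)=(c\bz^*,-c\bz^*)$ would give a strictly worse bound, and the asymmetric weighting by $1/\sqrt{m_i}$ is what equalizes the two contributions $m_i\|\tilde\cdot\|^2$ to the Problem~(\ref{eq:optimization problem2}) objective and surfaces the factor $\kappa=\sqrt{L/M}$ cleanly.
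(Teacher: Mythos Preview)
Your proposal is correct and follows essentially the same route as the paper. The margin inequality is handled identically, and for the norm bound your feasible candidate $(\tilde\bv,\tilde\bu)=(c\bz^*/\sqrt{m_1},-c\bz^*/\sqrt{m_2})$ is in fact the same point the paper uses (just written in a different parametrization). The only cosmetic difference is that you bound $\|\bz\|^2 \leq m_1\|\bv\|^2+m_2\|\bu\|^2$ directly via triangle inequality plus Cauchy--Schwarz (with weights whose \emph{squares} sum to~1), whereas the paper uses the cruder $\|a-b\|^2\leq 2\|a\|^2+2\|b\|^2$ together with $m_2^2\leq m_1 m_2$; both chains land on the same final inequality.
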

\begin{proof}
	By \lemref{lem:rank 2}, for all $i \in I_+$ we have $\bv^\top \bx_i > 0$ and  $\bu^\top \bx_i < 0$. Hence 
	\begin{align*}
		1 
		&\leq f(\bx_i; W)
		= \frac{m_1}{\sqrt{m}} \phi(\bv^\top \bx_i) - \frac{m_2}{\sqrt{m}} \phi( \bu^\top \bx_i)
		\\
		&= \frac{m_1}{\sqrt{m}} \bv^\top \bx_i - \frac{m_2}{\sqrt{m}} \gamma \bu^\top \bx_i
		\\
		&\leq \frac{m_1}{\sqrt{m}} \bv^\top \bx_i - \frac{m_2}{\sqrt{m}} \bu^\top \bx_i
		= \bz^\top \bx_i~.
	\end{align*}

	Likewise, by \lemref{lem:rank 2}, for all $i \in I_-$ we have $\bv^\top \bx_i < 0$ and  $\bu^\top \bx_i > 0$. Hence 
	\begin{align*}
		-1 
		&\geq f(\bx_i; W)
		= \frac{m_1}{\sqrt{m}} \phi(\bv^\top \bx_i) - \frac{m_2}{\sqrt{m}} \phi( \bu^\top \bx_2)
		\\
		&= \frac{m_1}{\sqrt{m}} \gamma \bv^\top \bx_i - \frac{m_2}{\sqrt{m}} \bu^\top \bx_i
		\\
		&\geq \frac{m_1}{\sqrt{m}} \bv^\top \bx_i - \frac{m_2}{\sqrt{m}} \bu^\top \bx_i
		= \bz^\top \bx_i~.
	\end{align*}
	Thus, it remains to obtain an upper bound for $\norm{\bz}$.

	Assume w.l.o.g. that $m_1 \geq m_2$ (the proof for the case $m_1 \leq m_2$ is similar). Thus, $\kappa = \sqrt{\frac{m_2}{m_1}}$.
	Let $\bz^* \in \reals^d$ such that $y_i (\bz^*)^\top \bx_i \geq 1$ for all $i \in I$. 
	Let 
	\[
		\bv^* = \bz^* \cdot \frac{\sqrt{m}}{m_1} \cdot \frac{1}{\kappa + \gamma}~,
	\] 
	\[
		\bu^* = -\bz^* \cdot \frac{\sqrt{m}}{m_2} \cdot \frac{\kappa}{\kappa + \gamma}~.
	\]
	Note that $\bv^*, \bu^*$ satisfy the constraints in Problem~(\ref{eq:optimization problem2}). Indeed, 
	for $i \in I_-$ we have 
	\[
		\frac{m_2}{\sqrt{m}} (\bu^*)^\top \bx_i - \gamma \frac{m_1}{\sqrt{m}} (\bv^*)^\top \bx_i
		=  - \frac{\kappa (\bz^*)^\top \bx_i}{\kappa+\gamma} - \gamma \cdot  \frac{(\bz^*)^\top \bx_i}{\kappa+\gamma}
		\geq \frac{\kappa}{\kappa+\gamma} + \gamma \cdot \frac{1}{\kappa+\gamma}
		= 1~.
	\]
	For $i \in I_+$ we have
	\[
		\frac{m_1}{\sqrt{m}} (\bv^*)^\top \bx_i - \gamma \frac{m_2}{\sqrt{m}} (\bu^*)^\top \bx_i
		=  \frac{(\bz^*)^\top \bx_i}{\kappa+\gamma} + \gamma \cdot  \frac{\kappa (\bz^*)^\top \bx_i}{\kappa+\gamma}
		\geq \frac{1}{\kappa+\gamma} + \gamma \cdot \frac{\kappa}{\kappa+\gamma}
		= \frac{1 + \gamma \kappa}{\kappa+\gamma}
		\geq 1~,		
	\]
	where the last inequality is since $0 \leq (1-\kappa)(1-\gamma) = 1 + \kappa \gamma - \kappa - \gamma$.
		
	By \lemref{lem:optimal wu} the pair $\bv,\bu$ is a global optimum of Problem~(\ref{eq:optimization problem2}). Hence
	\begin{align*}
		m_1 \norm{\bv}^2 + m_2 \norm{\bu}^2 
		&\leq m_1 \norm{\bv^*}^2 + m_2 \norm{\bu^*}^2 
		\\
		&= m_1 \cdot \frac{m}{m_1^2} \cdot \frac{1}{(\kappa + \gamma)^2} \norm{\bz^*}^2 + m_2 \cdot \frac{m}{m_2^2} \cdot \frac{\kappa^2}{(\kappa + \gamma)^2}  \norm{\bz^*}^2
		\\
		&= \frac{m\norm{\bz^*}^2}{(\kappa + \gamma)^2} \left[ \frac{1}{m_1} + \frac{ \kappa^2}{m_2} \right]
		\\
		&= \frac{m\norm{\bz^*}^2}{(\kappa + \gamma)^2} \cdot \frac{2}{m_1}~.
	\end{align*}
	Therefore, we have 
	\begin{equation*} \label{eq:bound v sum}
		\norm{m_1 \bv}^2 + \norm{m_2 \bu}^2
		\leq m_1^2 \norm{\bv}^2 +  m_1 m_2 \norm{\bu}^2
		\leq \frac{m \norm{\bz^*}^2}{(\kappa + \gamma)^2} \cdot 2~.		 
	\end{equation*}
	Hence, 
	\begin{align*}
		\norm{\bz}^2
		&= \norm{\frac{m_1}{\sqrt{m}} \bv -  \frac{m_2}{\sqrt{m}} \bu}^2
		\leq 2 \left( \norm{\frac{m_1}{\sqrt{m}} \bv}^2 + \norm{\frac{m_2}{\sqrt{m}} \bu}^2 \right)
		= \frac{2}{m} \left(\norm{m_1 \bv}^2 + \norm{m_2 \bu}^2 \right)
		\leq \frac{4\norm{\bz^*}^2}{(\kappa + \gamma)^2}~,
	\end{align*}
	which implies $\norm{\bz} \leq  \frac{2 \norm{\bz^*}}{\kappa + \gamma}$ as required.
\end{proof}

\section{Proof of \lemref{lem:random are orthogonal}}\label{app:random.orthogonal}

    According to the distribution assumption in the lemma, we can write $\bx_i = \Sigma^{1/2}\bar{\bx}_i$ where $\bar{\bx}_i \sim \mathsf{N}(0,I_d)$.\footnote{The proof below holds more generally when $\bar{\bx}_i$ has independent subgaussian entries.} 
    By Hanson-Wright inequality~\citep[Theorem 2.1]{rudelson2013hanson}, we have for any $t\ge0$,
    \begin{align*}
        \Pr\left[ \left| \norm{\Sigma^{1/2}\bar{\bx}_i} - \|\Sigma^{1/2}\|_F \right| >t \right] \le 2\exp\left(-\Omega\left(\frac{t^2}{\norm{\Sigma^{1/2}}_2^2}\right)\right),
    \end{align*}
    i.e.,
    \begin{align*}
        \Pr\left[ \left| \norm{\bx_i} - \sqrt{d} \right| >t \right] \le 2\exp\left(-\Omega\left(t^2\right)\right).
    \end{align*}
    Let $t = C\sqrt{\log n}$ for a sufficiently large constant $C>0$. 
    Taking a union bound over all $i\in[n]$, 
    we have that with probability at least $1 - n^{-20}$, $\norm{\bx_i} = \sqrt{d} \pm O(\sqrt{\log n})$ for all $i\in[n]$ simultaneously.
    
    For $i\not=j$, we have $\langle \bx_i,\bx_j \rangle | \bx_j \sim \mathsf{N}(0, \bx_j^\top \Sigma \bx_j)$. Hence we can apply a standard tail bound to obtain
    \[
        \Pr\left[  \left| \langle {\bx}_i, {\bx}_j \rangle \right| >t \,|\, {\bx}_j \right] \le 2\exp\left(-\frac{t^2}{2 \bx_j^\top \Sigma \bx_j}\right).
    \]
    Because we have known that $\bx_j^\top \Sigma \bx_j = O( \norm{\bx_j}^2 ) = O( d + \log n) = O(d) $ with probability at least $1-n^{-20}$, we have
    \begin{align*}
        \Pr\left[ \left| \langle {\bx}_i, {\bx}_j \rangle \right| >t \right] \le n^{-20} + 2\exp\left( -\Omega\left(\frac{t^2}{d}\right) \right).
    \end{align*}
    Then we can take $t=C\sqrt{d\log n}$ for a sufficiently large constant $C$ and apply a union bound over all $i,j$, which gives $|\langle\bx_i,\bx_j\rangle| = O( \sqrt{d\log n})$ for all $i\not=j$ with probability at least
    \[
    1 - n^2 \left( n^{-20} + 2\exp(-\Omega(C^2 \log n)) \right) \ge 1 - n^{-15}.
    \]
    This completes the proof.

\section{Proof of \thmref{thm:convergence leaky ReLU}} \label{app:proof of convergence leaky ReLU}

To prove Theorem~\ref{thm:convergence leaky ReLU}, we need to show that for some $t_0 > 0$, $\hat L(W(t)) < \log 2/n$ for all $t\geq t_0$.  To do so, we will first show a proxy PL inequality~\citep{frei2021proxyconvex}, and then use this to argue that the loss must eventually be smaller than $\log 2/n$.    

We begin by showing that the vector $\hat \mu := \summ i n y_i x_i$ correctly classifies the training data with a positive margin.  To see this, note that for any $k\in [n]$,
\begin{align*}
    \ip{\summ i n y_i x_i}{y_k x_k} &= \snorm{x_k}^2 + \sum_{i\neq k} \sip{y_i x_i}{y_k x_k} \\
    &\geq \min_i \snorm{x_i}^2 - n \max_{i\neq j} |\sip{x_i}{x_j}| \\
    &\overset{(i)}\geq \l( 1 - \f {\gamma^3}3 \r) \min_i \snorm{x_i}^2 \\
    &\overset{(ii)}\geq \f 23 \min_i \snorm{x_i}^2.\numberthis \label{eq:sum.yi.xi.separator.gf}
\end{align*}
Inequality $(i)$ uses the theorem's assumption that $3n \max_{i\neq j} |\sip{x_i}{x_j}| \leq \gamma^3 \min_i \snorm{x_i}^2$. 
Inequality $(ii)$ uses that $\gamma \leq 1$.  To show how large of a margin $\hat \mu$ gets on the training data, we bound its norm.  We have,
\begin{align*}
    \norm{\summ i n y_i x_i}^2 &\leq \summ i n \snorm{x_i}^2 + \sum_{i\neq j} |\sip{x_i}{x_j}| \\
    &= \summ i n \l[ \snorm{x_i}^2 + \sum_{j\neq i} |\sip{x_i}{x_j}|\r]\\
    &\leq \summ i n \l[ \snorm{x_i}^2 + n \max_{i\neq j} |\sip{x_i}{x_j}|\r]\\
    &\leq \summ i n \l[ \snorm{x_i}^2 + \f {\gamma^3}3 \min_j \snorm{x_j}^2\r]\\
    &\leq 2n \max_i \snorm{x_i}^2.
\end{align*}
Denoting $\rmin := \min_i \snorm{x_i}$, $\rmax = \max_i \snorm{x_i}$, and $\rratio = \rmax/\rmin$, substituting the above display into~\eqref{eq:sum.yi.xi.separator.gf} we get for any $k\in [n]$,
\begin{equation}\label{eq:hatmu.margin}
    \ip{ \f{\hat \mu}{\snorm{\hat \mu}} }{y_k x_k} \geq \f{ \nicefrac 23 \rmin^2}{\sqrt{2n \rmax^2}} = \f{ \sqrt 2 \rmin}{3 \rratio \sqrt n}.
\end{equation}
Let us now define the matrix $Z\in \R^{m\times d}$ with rows,
\[ z_j := \f{ \hat \mu}{\snorm{\hat \mu}} a_j.\]
Since $a_j^2=1/m$ for each $j$, 
we have $\snorm{Z}_F^2 = 1$, and moreover we have for any $k\in [n]$ and $W\in \R^{m\times d}$,
\begin{align*}
    y_k \sip{\nabla f(x_k; W)}{Z} &= \summ j m a_j^2 \phi'(\sip{w_j}{x_k}) \ip{\f{\hat \mu}{\snorm{\hat \mu}}}{y_k x_k} \\
    &\geq \f {\sqrt 2 \rmin}{3 \rratio \sqrt n} \f 1 m \summ j m \phi'(\sip{w_j}{x_k}) \\
    &\geq \f {\sqrt 2 \rmin \gamma }{3 \rratio \sqrt n},
\end{align*}
where the first inequality uses~\eqref{eq:hatmu.margin} and the last inequality uses that $\phi'(z)\geq \gamma$.  If $\ell$ is the logistic or exponential loss and we define
\[ g(z) = - \ell'(z),\quad  \hat G(W(t)) := \f 1n \summ k n g(y_k f(x_k;W(t))),\]
then since $g(z)>0$ the above allows for the following proxy-PL inequality,
\begin{align*}
    \snorm{\nabla \hat L(W(t))}_F &\geq \ip{\nabla \hat L(W(t))}{-Z} \\
    &= \f 1 n \summ k n -\ell'(y_k f(x_k;W(t))) y_k \sip{\nabla f(x_k;W(t))}{Z} \\
    &\geq \f{ \sqrt 2 \rmin \gamma}{3 \rratio \sqrt n} \hat G(W(t)). \numberthis \label{eq:proxy.pl.gf}
\end{align*}
By the chain rule, the above implies
\begin{align*}
    \ddx t \hat L(W(t)) &= - \snorm{\nabla \hat L(W(t))}_F^2 \\
    &\leq - \l( \f{ \sqrt 2 \rmin \gamma}{3 \rratio \sqrt n} \hat G(W(t)) \r)^2 .
\end{align*}
Let us now calculate how long until we reach the point where $\hat G(W(t)) < \log 2/(3n)$.  Define
\[ \tau = \inf \{ t : \hat G(W(t)) < \log 2/(3n)\}.\]
Then for any $t< \tau$ we have
\[ \ddx t \hat L(W(t)) \leq -\l( \f{ \sqrt 2 \rmin \gamma}{3\rratio \sqrt n} \cdot \f{\log 2}{3n}\r)^2.\]
Integrating, we see that
\[ \hat L(W(t)) \leq \hat L(W(0)) - \f{ 2\rmin^2 \gamma^2 \log^2( 2) t}{81 \rratio^2 n^{3}} .\] 
Since $\hat L(W(t))\geq 0$, this means that $\tau \leq 81 \hat L(W(0)) R^2 n^{3}/(2 \gamma^2 \rmin^2 \log^2 (2)) \leq 85 \hat L(W(0))R^2 n^{3}/(\gamma^2 \rmin^2)$.  
At time $\tau$, we know that 
$\hat G(W(\tau)) \leq \log 2/(3n)$
and thus $y_i f(x_i;W(\tau)) > 0$ for each $i$.  For $z >0$, both the logistic loss and the exponential loss satisfy $\ell(z) \leq 2 \cdot -\ell'(z)$, and so for either loss, we have
\[ \hat L(W(\tau)) = \f 1 n \summ i n \ell(y_i f(x_i;W(\tau))) \leq \f 2 n \summ i n -\ell'(y_i f(x_i;W(\tau))) = 2 \hat G(W(\tau)) \leq \f 2 3 \cdot \f {\log 2} n.\]
Since $\hat L(W(t))$ is decreasing, we thus have for all times $t\geq \tau$, we have $\hat L(W(t)) \leq \hat L(W(\tau)) < \log(2)/n$.


\section{Proof of Theorem~\ref{thm:gdstablerank}} \label{app:gdstablerank}
In this section, we provide a proof of Theorem~\ref{thm:gdstablerank}.  An overview of our proof is as follows.
\begin{enumerate}
\item In Section~\ref{sec:gdrandominit} we provide basic concentration arguments about the random initialization. 
\item In Section~\ref{sec:gdsmoothness} we show that the neural network output and the logistic loss objective function are smooth as a function of the parameters.
\item In Section~\ref{sec:gdlossratio} we prove a structural result on how gradient descent weights the samples throughout the training trajectory.  In particular, we show that throughout gradient descent, the sigmoid losses $-\ell'(y_i f(x_i;\Wt t))$ grow at approximately the same rate for all samples.  
\item In Section~\ref{sec:gdfro} we leverage the above structural result to provide a good upper bound on $\snorm{\Wt t}_F$. 
\item In Section~\ref{sec:gdspec} we provide a lower bound for $\snorm{\Wt t}_2$. 
\item In Section~\ref{sec:gdproxypl} we show that a proxy-PL inequality is satisfied. 
\item We conclude the proof of Theorem~\ref{thm:gdstablerank} in Section~\ref{sec:gdthmproof} by putting together the preceding items to bound the stable rank $\stablerank(\Wt t) = \snorm{\Wt t}_F^2/\snorm{\Wt t}_2^2$ and to show that $\hat L(\Wt t)\to 0$. 
\end{enumerate}

Let us denote by $C_R := 10 \rratio^2/\gamma^2+10$, where $\rratio = \rmax / \rmin$ and $\rmax = \max_i \snorm{x_i}$, $\rmin = \min_i \snorm{x_i}$.  For a given probability threshold $\delta \in (0,1)$, we make the following assumptions moving forward: 
\begin{enumerate}[label=(A\arabic*)]
    \item \label{a:stepsize}Step-size $\alpha \leq \gamma^2 \left(5n\rmax^2 \rratio^2 C_R \max(1,H) \right)^{-1}$, where $\phi$ is $H$-smooth and $\gamma$-leaky.
    \item \label{a:sinit}Initialization variance satisfies $\sinit \leq \alpha \gamma^2 \rmin \l(72\rratio C_Rn \sqrt{md\log(4m/\delta)}\r)^{-1}$. 
\end{enumerate}

We shall also use the following notation to refer to the sigmoid losses that appear throughout the analysis of gradient descent training for the logistic loss, 
\begin{equation}  g(z) = -\ell'(z) = \f{1}{1+\exp(z)},\quad \hat G(W) = \f 1 n \summ i n g\big(y_i f(x_i; W)\big),\quad \lpit it := g\big(y_i f(x_i;\Wt t)\big).\label{eq:sigmoid.loss.def}
\end{equation}

\subsection{Concentration for random initialization}\label{sec:gdrandominit}

The following lemma characterizes the $\ell_2$-norm of each neuron at intialization.  It also characterizes how large the projection of each neuron along the direction $\hat \mu := \summ i n y_i x_i$ can be at initialization.  We shall see in Lemma~\ref{lemma:wj.direction.hatmu} that gradient descent forces the weights to align with this direction.  In the proof of Theorem~\ref{thm:gdstablerank}, we will argue that by taking a single step of gradient descent with a sufficiently large step-size and small initialization variance, the gradient descent update dominates the behavior of each neuron at initialization, so that after one step the $\hat \mu$ direction becomes dominant for each neuron.  This will form the basis of showing that $\Wt t$ has small stable rank for $t\geq 1$. 

\begin{restatable}{lemma}{randominit}\label{lemma:initialization} With probability at least $1-\delta$ over the random initialization, the following holds.  First, we have the following upper bounds for the spectral norm and per-neuron norms at initialization,
\[ \snorm{\Wt 0}_2 \leq C_0 \sinit (\sqrt m + \sqrt d), \quad \text{and} \quad \text{for all $j\in [m]$, }\, \snorm{\wt 0_j}^2 \leq 5 \sinit^2d \log(4m/\delta).\]
Second, if we denote by $\bar \mu\in \R^d$ be the vector $\summ i n y_i x_i / \snorm{\summ i n y_i x_i}$, then we have 
\[ |\sip{\wt 0_j}{\bar \mu}| \leq 2 \sinit \sqrt{\log(4m/\delta)}.\] 
\end{restatable}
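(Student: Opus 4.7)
The proof will follow by applying standard concentration inequalities for Gaussian matrices and vectors, together with a union bound over the $m$ neurons. I will allocate failure probability at most $\delta/3$ to each of the three events and combine them at the end.

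For the spectral norm bound, I would invoke the standard result (see, e.g., Corollary 7.3.3 of Vershynin) for matrices with i.i.d.~$\mathsf N(0,\sinit^2)$ entries: there is an absolute constant $C_0 > 0$ such that with probability at least $1 - 2\exp(-c(\sqrt m + \sqrt d)^2) \geq 1 - \delta/3$ (taking $C_0$ large enough relative to any lower-order terms from the concentration inequality), one has $\snorm{\Wt 0}_2 \leq C_0 \sinit (\sqrt m + \sqrt d)$.

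For the per-neuron norm bound, each $\snorm{\wt 0_j}^2 / \sinit^2$ is a $\chi^2_d$ random variable. Using the standard Laurent--Massart tail bound $\P[\chi^2_d \geq d + 2\sqrt{dt} + 2t] \leq e^{-t}$ with $t = \log(4m/\delta)$ and noting $d + 2\sqrt{d \log(4m/\delta)} + 2\log(4m/\delta) \leq 5 d \log(4m/\delta)$ (since $\log(4m/\delta) \geq 1$ and the arithmetic--geometric inequality gives $2\sqrt{d \log(4m/\delta)} \leq d + \log(4m/\delta)$), a union bound over $j \in [m]$ yields the second claim with failure probability at most $\delta/3$.

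For the projection bound, fix $\bar\mu \in \R^d$ with $\snorm{\bar\mu} = 1$. Then $\sip{\wt 0_j}{\bar\mu} \sim \mathsf N(0, \sinit^2)$, since the Gaussian distribution on $\R^d$ is rotationally invariant and $\wt 0_j$ has i.i.d.~$\mathsf N(0, \sinit^2)$ entries. The standard Gaussian tail bound $\P[|Z| \geq \sigma t] \leq 2\exp(-t^2/2)$ applied with $t = 2\sqrt{\log(4m/\delta)}$ gives the bound for a single $j$ with probability at least $1 - 2\exp(-2\log(4m/\delta)) = 1 - 2(\delta/(4m))^2$, and a union bound over $j \in [m]$ produces the third claim with failure probability at most $\delta/3$.

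There is no substantive obstacle here --- every step is a direct application of a textbook concentration inequality and a union bound. The only minor care needed is to (i) choose the correct tail parameter $t$ so that the failure probability after union bounding over $m$ events is controlled by $\delta/3$, and (ii) simplify the resulting algebraic expressions to match the stated constants ($5$ in the norm bound, $2$ in the projection bound). Combining the three high-probability events via a final union bound yields all three conclusions simultaneously with probability at least $1 - \delta$.
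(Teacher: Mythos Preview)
Your proposal is correct and follows essentially the same approach as the paper: Laurent--Massart for the $\chi^2$ neuron norms, the Gaussian tail bound for the unit-vector projections, and a union bound over the $m$ neurons. In fact your sketch is slightly more complete than the paper's own proof, which omits any argument for the spectral norm bound and simply handles the other two parts with failure probabilities $\delta/4$ and $\delta/2$ before taking a final union bound.
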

\begin{proof}
For the first part of the lemma, note that for fixed $j\in [m]$, there are i.i.d. $z_i\sim \mathsf{N}(0,1)$ such that
\[ \snorm{\wt 0_j}^2 = \summ i d (\wt 0_j)_i^2 = \sinit^2 \summ i d z_i^2 \sim \sinit^2 \cdot \chi^2(d).\]
By concentration of the $\chi^2$ distribution \citep[Lemma 1]{laurent2000chisquare}, for any $t > 0$,
\[ \P \l( \f 1 {\sinit^2} \snorm{\wt 0_j}^2 - d \geq 2\sqrt{dt} + 2t\r) \leq \exp(-t).\]
In particular, if we let $t = \log(4m/\delta)$, we have that with probability at least $1-\delta/4$, for all $j\in [m]$,
\begin{align*}
    \snorm{\wt 0_j}^2 \leq \sinit^2 \l( d + 2 \sqrt{d \log(4m/\delta)} + 2 \log(4m/\delta) \r) \leq 5 \sinit^2 d \log(4m/\delta).
\end{align*}
For the second part, note that $\sip{\wt 0_j}{\bar \mu} \sim \mathsf N(0, \sinit^2)$.  We therefore have $\P(|\sip{\wt 0_j}{\bar \mu}| \geq t) \leq 2 \exp(-t^2 / 2 \sinit^2)$.  Choosing $t = \sinit \sqrt{\log(4m/\delta)}$ we see that with probability at least $1-\delta/2$, for all $j$, $|\sip{\wt 0_j}{\bar \mu}| \leq 2 \sinit \sqrt{\log(4m/\delta)}$.    Taking a union bound over both events completes the proof. 
\end{proof}

\subsection{Smoothness of network output and loss}\label{sec:gdsmoothness}

In this sub-section, we show that the network output and the logistic loss satisfy a number of smoothness properties, owing to the fact that $\phi$ is $H$-smooth (i.e., $\phi''$ exists and $|\phi''(z)|\leq H$). 

\begin{restatable}{lemma}{smoothness}\label{lemma:nn.smooth}
For an $H$-smooth activation $\phi$ and any $W, V\in \R^{m\times d}$ and $x\in \R^d$,
\[ |f(x; W) - f(x; V ) - \sip{\nabla f(x; V )}{W-V}| \leq \f{H  \snorm{x}^2}{2 \sqrt m} \|W-V\|_2^2.\]
\end{restatable}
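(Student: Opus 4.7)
The plan is to reduce the matrix-level smoothness bound to a per-neuron scalar smoothness bound for $\phi$, and then aggregate across neurons using the defining property of the spectral norm.

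First I would apply the standard consequence of $H$-smoothness of $\phi$ at the scalar level: since $|\phi''| \le H$, a Taylor expansion with remainder gives, for any scalars $a,b$,
\[ |\phi(a) - \phi(b) - \phi'(b)(a-b)| \leq \frac{H}{2}(a-b)^2. \]
Substituting $a = \sip{w_j}{x}$ and $b = \sip{v_j}{x}$ yields a per-neuron version of the claim.

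Next I would sum over neurons. Using the definition $f(x;W) = \sum_{j=1}^m a_j \phi(\sip{w_j}{x})$ and $\nabla_{w_j} f(x;V) = a_j \phi'(\sip{v_j}{x}) x$, the triangle inequality together with $|a_j| = 1/\sqrt{m}$ gives
\[ |f(x;W) - f(x;V) - \sip{\nabla f(x;V)}{W-V}| \leq \sum_{j=1}^m \frac{1}{\sqrt m}\cdot\frac{H}{2}\sip{w_j - v_j}{x}^2 = \frac{H}{2\sqrt m}\,\|(W-V)x\|^2. \]

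Finally I would apply the spectral norm bound $\|(W-V)x\|^2 \leq \|W-V\|_2^2\,\|x\|^2$, which is immediate from the definition of the operator (spectral) norm, to conclude the stated inequality. There is no real obstacle here; the only subtlety worth noting is that the bound is stated in terms of the spectral norm $\|W-V\|_2$ rather than the Frobenius norm $\|W-V\|_F$, and this tighter form is achieved precisely because the aggregation step naturally produces $\|(W-V)x\|^2$ rather than $\sum_j \snorm{w_j - v_j}^2\snorm{x}^2$.
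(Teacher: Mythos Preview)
Your argument is correct and is the standard proof of this smoothness bound. The paper itself does not give an argument here; it simply cites \citet[Lemma~4.5]{frei2022benign}, so your write-up supplies exactly the elementary per-neuron Taylor bound and spectral-norm aggregation that the cited lemma uses.
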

\begin{proof}
This was shown in~\citet[Lemma 4.5]{frei2022benign}.
\end{proof}

We next show that the empirical risk is smooth, in the sense that the gradient norm is bounded by the loss itself and that the gradients are Lipschitz. 
\begin{restatable}{lemma}{losssmoothness}\label{lemma:loss.smooth}
For an $H$-smooth, 1-Lipschitz activation $\phi$ and any $W, V \in \R^{m\times d}$, 
if $\snorm{x_i}\leq \rmax$ for all $i$,
\[ \frac{1}{\rmax}\snorm{\nabla \hat L(W)}_F \leq  \hat G(W) \leq \hat L(W) \wedge 1,\]
where $\hat G(W)$ is defined in~\eqref{eq:sigmoid.loss.def}. Additionally,
\[ \snorm{\nabla \hat L(W) - \nabla \hat L(V)}_F\leq   \rmax^2 \left (1  +  \f{H}{\sqrt m}  \right ) \snorm{W-V}_2.\]
\end{restatable}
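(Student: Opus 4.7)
}
I will prove the three inequalities separately; all three follow from short direct calculations using elementary properties of the logistic loss combined with the rank-one structure of the per-neuron gradient. I do not anticipate any significant obstacles.

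For $\hat G(W) \leq \hat L(W) \wedge 1$, I will use two pointwise facts about $\ell(z) = \log(1+e^{-z})$ and $g(z) = -\ell'(z) = 1/(1+e^z)$: that $g(z) \in (0,1)$ gives $\hat G(W) \leq 1$; and applying the elementary inequality $\log(1+x) \geq x/(1+x)$ for $x > -1$ with $x = e^{-z}$ yields $\ell(z) \geq e^{-z}/(1+e^{-z}) = g(z)$, so averaging gives $\hat G(W) \leq \hat L(W)$.

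For the gradient-norm bound, I differentiate to obtain $\nabla \hat L(W) = \frac{1}{n}\sum_i \ell'(y_i f(x_i;W))\, y_i \nabla f(x_i;W)$ and observe that
\[
\snorm{\nabla f(x_i;W)}_F^2 = \sum_{j=1}^m a_j^2\, \phi'(\sip{w_j}{x_i})^2 \snorm{x_i}^2 \leq \rmax^2,
\]
since $\sum_j a_j^2 = 1$ and $|\phi'| \leq 1$. Applying the triangle inequality together with $|\ell'(z)| = g(z)$ then yields $\snorm{\nabla \hat L(W)}_F \leq \rmax\, \hat G(W)$.

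For the Lipschitz gradient bound I will use the decomposition
\begin{align*}
\nabla \hat L(W) - \nabla \hat L(V)
&= \tfrac{1}{n}\sum_i \ell'(y_i f(x_i;W))\, y_i \bigl[\nabla f(x_i;W) - \nabla f(x_i;V)\bigr] \\
&\quad + \tfrac{1}{n}\sum_i \bigl[\ell'(y_i f(x_i;W)) - \ell'(y_i f(x_i;V))\bigr]\, y_i \nabla f(x_i;V).
\end{align*}
For the first sum I use $H$-smoothness of $\phi$, which gives $|\phi'(\sip{w_j}{x_i}) - \phi'(\sip{v_j}{x_i})| \leq H|\sip{w_j-v_j}{x_i}|$; combined with the identity $\sum_j \sip{w_j-v_j}{x_i}^2 = \snorm{(W-V)x_i}^2 \leq \snorm{W-V}_2^2 \rmax^2$, this yields $\snorm{\nabla f(x_i;W) - \nabla f(x_i;V)}_F \leq (H/\sqrt{m})\rmax^2 \snorm{W-V}_2$, and with $|\ell'|\leq 1$ the sum contributes at most $(H/\sqrt{m})\rmax^2 \snorm{W-V}_2$. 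For the second sum I use that $\ell'$ is $1/4$-Lipschitz (since $|\ell''(z)| = e^z/(1+e^z)^2 \leq 1/4$), that $|f(x_i;W) - f(x_i;V)| \leq \snorm{W-V}_2 \rmax$ (by $1$-Lipschitzness of $\phi$, Cauchy--Schwarz, and the same operator-norm bound), and that $\snorm{\nabla f(x_i;V)}_F \leq \rmax$; this contributes at most $(1/4)\rmax^2 \snorm{W-V}_2$. Absorbing $1/4$ into $1$ yields the stated constant $1 + H/\sqrt m$. The only mildly subtle point is that the bound uses the operator norm $\snorm{W-V}_2$ rather than the larger Frobenius norm: this works because each per-neuron gradient is rank-one in $x$, so the sum over neurons produces a vector of the form $(W-V)x$ whose length is naturally controlled by $\snorm{W-V}_2\snorm{x}$.
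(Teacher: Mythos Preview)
Your proposal is correct. The paper itself simply cites \citet[Lemma 4.6]{frei2022benign} and notes that one substitutes $\snorm{x_i}^2 \leq \rmax^2$ for their bound $\snorm{x_i}^2 \leq C_1 p$; your write-up supplies exactly the standard direct computation behind that cited result (pointwise comparison $g\leq \ell$, the rank-one per-neuron gradient bound, and the two-term add/subtract decomposition for Lipschitzness of $\nabla \hat L$), so the approaches coincide.
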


\begin{proof}
This follows by~\citet[Lemma 4.6]{frei2022benign}.  The only difference is that in that paper, the authors use $\snorm{x_i}^2 \leq C_1 p$ (in their work, $x_i\in \R^p$) to go from equations (5) and (6) to equation (7), while we instead use that $\snorm{x_i}^2 \leq \rmax^2$.  
\end{proof}

\subsection{Loss ratio bound}\label{sec:gdlossratio}
In this section, we prove a key structural result which we will refer to as a `loss ratio bound'.

\begin{restatable}{lemma}{lossratio}\label{lemma:loss.ratio}
Let $\phi$ be a $\gamma$-leaky, $H$-smooth activation.  Define $\rratio = \max_{i,j} \nicefrac{\snorm{x_i}}{\snorm{x_j}}$, and let us denote $C_R=10\rratio^2\gamma^{-2} +10$.  Suppose that for all $i\in [n]$, we have,
\[ \snorm{x_i}^2 \geq 5 \gamma^{-2} C_R n\max_{k\neq i}|\sip{x_i}{x_k}|.\]
Then under Assumptions~\ref{a:stepsize} and~\ref{a:sinit}, we have with probability at least $1-\delta$, 
\begin{align*}
    \sup_{t\geq 0} \l\{ \max_{i,j \in [n]} \frac{\ell'\big( y_i f(x_i;\Wt t)\big) }{\ell'\big( y_j f(x_j;\Wt t)\big) } \r\} \leq  C_R.
\end{align*}
\end{restatable}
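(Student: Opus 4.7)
The plan is induction on $t$ with hypothesis $H_t$: for all $s\le t$ and all $i,j\in[n]$, $g_i^{(s)}/g_j^{(s)}\le C_R$, where $g_k^{(s)}:=-\ell'(y_k f(x_k;\Wt s))$. The base case $H_0$ follows from Lemma~\ref{lemma:initialization} together with Assumption~\ref{a:sinit}: the smallness of $\sinit$ forces each $\snorm{\wt 0_j}$ to be tiny, hence $|f(x_k;\Wt 0)|$ is uniformly much smaller than $\log C_R$, so every $g_k^{(0)}$ lies in a small neighborhood of $1/2$ and the initial ratios are close to $1$, far below $C_R\ge 10$.

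\textbf{One-step change of the margins.} For the inductive step, I would analyze $\Delta_k:=y_k f(x_k;\Wt{t+1})-y_k f(x_k;\Wt t)$. By Lemma~\ref{lemma:nn.smooth} applied at $V=\Wt t$, $W=\Wt{t+1}$, combined with the gradient step,
\[
\Delta_k=\frac{\alpha}{n}\sum_{i=1}^n g_i^{(t)} y_i y_k\, K_{ki}^{(t)}+E_k,\qquad K_{ab}^{(t)}:=\tfrac{1}{m}\sum_j\phi'(\sip{\wt t_j}{x_a})\phi'(\sip{\wt t_j}{x_b})\sip{x_a}{x_b},
\]
with $|E_k|\le \tfrac{H\snorm{x_k}^2}{2\sqrt m}\,\alpha^2\snorm{\nabla\hat L(\Wt t)}_F^{\,2}$. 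The $\gamma$-leaky bound gives $K_{kk}^{(t)}\in[\gamma^2\snorm{x_k}^2,\snorm{x_k}^2]$ and $|K_{ki}^{(t)}|\le p:=\max_{a\ne b}|\sip{x_a}{x_b}|$ for $i\ne k$. Separating the diagonal from the cross terms yields, for any pair $(i,j)$,
\[
\Delta_i-\Delta_j\;\ge\;\frac{\alpha}{n}\Big(\gamma^2 g_i^{(t)}\snorm{x_i}^2-g_j^{(t)}\snorm{x_j}^2-2np\max_k g_k^{(t)}\Big)-|E_i|-|E_j|.
\]

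\textbf{Self-correction at the boundary.} Under $H_t$, $\max_k g_k^{(t)}\le C_R g_j^{(t)}$. The orthogonality assumption gives $np\le \gamma^2\rmin^2/(5C_R)$, and we use $\snorm{x_i}^2\ge\rmin^2$, $\snorm{x_j}^2\le\rratio^2\rmin^2$. Substituting,
\[
\Delta_i-\Delta_j\;\ge\;\frac{\alpha\rmin^2}{n}\Big(\gamma^2 g_i^{(t)}-\big(\rratio^2+\tfrac{2\gamma^2}{5}\big)g_j^{(t)}\Big)-|E_i|-|E_j|.
\]
At the boundary $g_i^{(t)}/g_j^{(t)}=C_R=10\rratio^2/\gamma^2+10$, the bracket equals $g_j^{(t)}(9\rratio^2+48\gamma^2/5)>0$, providing quantitative slack. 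Combining Lemma~\ref{lemma:loss.smooth} (which gives $\snorm{\nabla\hat L(\Wt t)}_F\le \rmax\hat G(\Wt t)\le \rmax C_R g_j^{(t)}$) with the step-size bound in Assumption~\ref{a:stepsize} ensures $|E_i|+|E_j|$ is dominated by this slack. Hence, whenever $g_i^{(t)}\ge g_j^{(t)}$ and the ratio is at the boundary, $\Delta_i\ge\Delta_j$; by monotonicity of $g$ this yields $g_i^{(t+1)}/g_j^{(t+1)}\le g_i^{(t)}/g_j^{(t)}\le C_R$. For pairs whose ratio is strictly below $C_R$, the uniform smallness of each $|\Delta_k|\lesssim\alpha\rmax^2$ enforced by Assumption~\ref{a:stepsize} prevents the ratio from jumping over $C_R$ in a single step, closing the induction.

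\textbf{Main obstacle.} The central technical difficulty is a bootstrap: the quadratic remainder $E_k$ is governed by $\snorm{\nabla\hat L(\Wt t)}_F$, which is itself controlled via $\hat G(\Wt t)$ by the very sigmoid losses we are trying to bound. The step-size budget $\alpha=O\!\big(\gamma^2/(n\rmax^2\rratio^2C_R\max(1,H))\big)$ is tuned precisely so that $|E_k|$ can be absorbed into the positive slack $(9\rratio^2+48\gamma^2/5)g_j^{(t)}$ using $H_t$ itself to bound the $g$'s. A secondary subtlety is that for the logistic loss the identity $g(q+\Delta)/g(q)=e^{-\Delta}$ (which holds exactly for exponential loss) is replaced by $g(q+\Delta)=g(q)(1-(1-g(q))\Delta)+O(\Delta^2)$; however the same self-correction inequality $\Delta_i\ge\Delta_j$ still delivers the ratio bound, with the $(1-g(q))\in(0,1)$ factors introducing only a benign attenuation.
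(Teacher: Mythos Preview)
Your high-level strategy—induction on $t$ via a self-correction inequality for $\Delta_i-\Delta_j$—matches the paper's, and your decomposition through the kernel $K_{ab}^{(t)}$ together with the absorption of the smoothness remainder $E_k$ are correct and parallel Lemma~\ref{lemma:exp.loss.ratio}. But two steps fail as written. The ``benign attenuation'' claim is backwards: to first order the change in $\log(g_i^{(t)}/g_j^{(t)})$ equals $-(1-g_i^{(t)})\Delta_i+(1-g_j^{(t)})\Delta_j$, and since $g_i^{(t)}>g_j^{(t)}$ gives $1-g_i^{(t)}<1-g_j^{(t)}$, the attenuation penalizes precisely the $\Delta_i$ you rely on; for instance when $\Delta_i=\Delta_j>0$ the sigmoid ratio strictly \emph{increases}, unlike the exponential case. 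The paper sidesteps this by tracking the \emph{exponential} loss ratio throughout (Lemma~\ref{lemma:exp.loss.ratio} and Proposition~\ref{prop:exponential.loss.ratio.orthogonality.persistence}), whose update is exactly multiplicative in $e^{-(\Delta_i-\Delta_j)}$, and converts back to the sigmoid ratio only at the end via Fact~\ref{fact:logistic.loss.ratio}. That conversion requires all margins to be positive, so the paper separately establishes interpolation from $t=1$ onward (Lemma~\ref{lemma:nn.interpolates.time1}(c)), which you neither invoke nor prove.

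Second, the sentence ``smallness of $|\Delta_k|$ prevents jumping over $C_R$ in a single step'' is not an argument in discrete time: the ratio may pass from $C_R-\epsilon$ to above $C_R$ without ever equaling $C_R$, so the boundary analysis alone is insufficient. The paper replaces this hand-wave with an explicit two-case split inside Proposition~\ref{prop:exponential.loss.ratio.orthogonality.persistence}: when the sigmoid ratio is below $\tfrac{2}{5}\rho$ (with $\rho=C_R/2$), one step inflates the exponential ratio by at most a fixed constant factor and it stays below $\rho$; when the sigmoid ratio exceeds $\tfrac{2}{5}\rho$, the self-correction term dominates and the exponential ratio decreases. Your quantitative slack $9\rratio^2+48\gamma^2/5$ is more than enough for an analogous split, but it has to be carried out.
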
 

This lemma shows that regardless of the relationship between $x$ and  $y$, the ratio of the \textit{sigmoid} losses $-\ell'(y_i f(x_i;\Wt t))$, where $-\ell'(z) = 1/(1+\exp(z))$, grows at essentially the same rate for all examples.   

Our proof largely follows that used by~\citet{frei2022benign}, who showed a loss ratio bound for gradient descent-trained two-layer networks with $\gamma$-leaky, $H$-smooth activations when the data comes from a mixture of isotropic log-concave distributions.  We generalize their proof technique to accommodate general training data for which the samples are nearly orthogonal in the sense that $\snorm{x_i}^2 \gg n \max_{k\neq i} |\sip{x_i}{x_k}|$.    Additionally, we provide a more general proof technique that illustrates how a loss ratio bound could hold for activations $\phi$ for which $\phi'(z)$ is not bounded from below by an absolute constant (like the ReLU), as well as for training data which are not necessarily nearly-orthogonal.  We begin by describing two conditions which form the basis of this more general proof technique.  The first condition concerns near-orthogonality of the \textit{gradients} of the network, rather than the \textit{samples} as in the assumption for Theorem~\ref{thm:gdstablerank}.   

\begin{condition}[Near-orthogonality of gradients]\label{cond:near.orthogonality.gradients}
We say that \emph{near-orthogonality of gradients holds at time $t$} if, for a some absolute constant $C'>1$, for any $i\in [n]$,
\[ \snorm{\nabla f(x_i;\thetat t)}^2 \geq C'  n \max_{k\neq i} |\sip{\nabla f(x_i;\Wt t)}{\nabla f(x_k;\Wt t)}| .\]
\end{condition}
Note that for linear classifiers---i.e., $m=1$ with $\phi(z)=z$---near-orthogonality of gradients is equivalent to near-orthogonality of samples, since in this setting $\nabla f(x_i;W)=x_i$.   It is clear that this is a more general condition than near-orthogonality of samples. 

The next condition we call \textit{gradient persistence}, which roughly states that the gradients of the network with respect to a sample has large norm whenever that sample has large norm. 
\begin{condition}[Gradient persistence]\label{cond:gradient.persistence}
We say that \emph{gradient persistence holds at time $t$} if there is a constant $c>0$ such that for all $i\in [n]$,
\[ \snorm{\nabla f(x_i;\thetat t)}_F^2 \geq c \snorm{x_i}^2.\]
\end{condition}
Gradient persistence essentially states that there is no possibility of a `vanishing gradient' problem. 

Next, we show that Lipschitz activation functions that are also `leaky' in the sense that $\phi'(z)\geq \gamma >0$ everywhere, allow for both gradient persistence and, when the samples are nearly-orthogonal, near-orthogonality of gradients. 

\begin{fact}\label{fact:leaky.gradient.persistence.orthogonality}
Suppose $\phi$ is such that $\phi'(z)\in [\gamma, 1]$ for all $z$ for some absolute constant $\gamma>0$.  Suppose that for some $C>\gamma^{-2}$, for all $i\in [n]$ we have,
\[ \snorm{x_i}^2 \geq C n \max_{k\neq i}|\sip{x_i}{x_k}|.\] 
Then for all times $t\geq 0$, the gradients are nearly-orthogonal (Condition~\ref{cond:near.orthogonality.gradients}) with $C'=C\gamma^2$ and gradient persistence (Condition~\ref{cond:gradient.persistence}) holds for $\cgp = \gamma^2$.
\end{fact}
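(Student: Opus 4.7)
The plan is to prove both conditions by directly computing $\nabla_W f(x;W)$ in closed form and exploiting the pointwise bound $\phi'(z)\in[\gamma,1]$. Writing $W$ as a matrix with rows $w_j^\top$, the row-$j$ block of $\nabla f(x;W)$ is $a_j \phi'(\langle w_j,x\rangle)\, x^\top$. Since $a_j^2 = 1/m$, the Frobenius inner product satisfies
\[
\langle \nabla f(x_i;W),\nabla f(x_k;W)\rangle \;=\; \frac{1}{m}\sum_{j=1}^m \phi'(\langle w_j,x_i\rangle)\,\phi'(\langle w_j,x_k\rangle)\,\langle x_i,x_k\rangle.
\]
This decomposition is the whole engine of the proof; the rest is bookkeeping on $\phi'$.

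For gradient persistence, set $i=k$ in the identity: each summand is $\phi'(\langle w_j,x_i\rangle)^2 \|x_i\|^2 \geq \gamma^2 \|x_i\|^2$, and averaging over $j$ gives $\|\nabla f(x_i;W)\|_F^2 \geq \gamma^2 \|x_i\|^2$, i.e.\ Condition~\ref{cond:gradient.persistence} with $\cgp=\gamma^2$. Note this holds at \emph{every} $W$, so in particular at every $\Wt t$, with no training-dynamics argument required.

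For near-orthogonality of gradients, use the upper bound $\phi'\leq 1$ on the cross-term identity to obtain $|\langle \nabla f(x_i;W),\nabla f(x_k;W)\rangle| \leq |\langle x_i,x_k\rangle|$ for all $i\neq k$. Combining with the just-proved lower bound and the hypothesis $\|x_i\|^2 \geq Cn\max_{k\neq i}|\langle x_i,x_k\rangle|$,
\[
\|\nabla f(x_i;W)\|_F^2 \;\geq\; \gamma^2 \|x_i\|^2 \;\geq\; C\gamma^2\, n \max_{k\neq i}|\langle x_i,x_k\rangle| \;\geq\; C\gamma^2\, n \max_{k\neq i}|\langle \nabla f(x_i;W),\nabla f(x_k;W)\rangle|,
\]
which is exactly Condition~\ref{cond:near.orthogonality.gradients} with $C'=C\gamma^2$. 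Since $C>\gamma^{-2}$, we get $C'>1$ as required. Again the bound is uniform in $W$, so it holds for all times $t\geq 0$.

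There is no real obstacle here: the only subtlety is to notice that the gradient in parameter space inherits its geometry directly from the input geometry because $\nabla f(x;W)$ factors as a tensor of a ``gating'' vector (with entries in $[\gamma,1]$) against $x$, and that the $1/\sqrt m$ scaling of $a_j$ exactly cancels the sum over $m$ neurons. The leaky lower bound $\phi'\geq \gamma$ is what prevents the gating vector from collapsing, which is precisely the property ReLU would lack.
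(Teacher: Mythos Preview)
Your proof is correct and follows essentially the same approach as the paper: both compute the gradient inner product via the row-wise formula $\langle \nabla f(x_i;W),\nabla f(x_k;W)\rangle = \tfrac{1}{m}\sum_j \phi'(\langle w_j,x_i\rangle)\phi'(\langle w_j,x_k\rangle)\langle x_i,x_k\rangle$, then use $\phi'\in[\gamma,1]$ to get the lower bound $\gamma^2\|x_i\|^2$ on the diagonal terms and the upper bound $|\langle x_i,x_k\rangle|$ on the cross terms, and chain these with the near-orthogonality hypothesis on the data. The paper presents the near-orthogonality inequality in the reverse direction (bounding $Cn\max_{k\neq i}|\langle\nabla f(x_i),\nabla f(x_k)\rangle|$ from above by $\gamma^{-2}\|\nabla f(x_i)\|_F^2$), but this is the same argument.
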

\begin{proof}
For any samples $i,k\in [n]$ and any $W\in \R^{m\times d}$,
\[ \sip{\nabla f(x_i;W)}{\nabla f(x_k;W)} = \sip{x_i}{x_k} \cdot \f 1 m \summ j m \phi'(\sip{w_j}{x_i}) \phi'(\sip{w_j}{x_k}).\] 
Since $\phi'(z)\in [\gamma, 1]$ for all $z$, we therefore see that gradient persistence holds with $\cgp=\gamma^2$:
\[ \snorm{\nabla f(x_k;W)}_F^2 = \snorm{x_k}^2 \cdot \f 1 m \summ j m \phi'(\sip{w_j}{x_k})^2 \geq \gamma^2 \snorm{x_k}^2.\] 
Similarly, we see that the gradients are nearly-orthogonal, since
\[ C n \max_{i\neq k} |\sip{\nabla f(x_i;W)}{\nabla f(x_k;W)}|\overset{(i)}\leq C n \max_{i\neq k}|\sip{x_i}{x_k}| \overset{(ii)}\leq \snorm{x_k}^2 \leq \gamma^{-2} \snorm{\nabla f(x_k;W)}_F^2,\] 
where $(i)$ uses that $\phi$ is 1-Lipschitz and $(ii)$ uses the assumption on the near-orthogonality of the samples. 
\end{proof}

We can now begin to prove Lemma~\ref{lemma:loss.ratio}.  We remind the reader of the notation for the sigmoid loss,
\[ g(z) := -\ell'(z) = \frac{1}{1+\exp(z)},\qquad \lpit it := g\big( y_i f(x_i;\Wt t)\big).\]
We follow the same proof technique of~\citet{frei2022benign}, whereby in order to control the ratio of the sigmoid losses we show instead that the ratio of the exponential losses is small and that this suffices for showing the sigmoid losses is small.   As we mention above, we generalize their analysis to emphasize that near-orthogonality of gradients and gradient persistence suffice for showing the loss ratio does not grow significantly. 

\begin{lemma}\label{lemma:exp.loss.ratio}
Denote $\rratio := \rmax/\rmin$ where $\rmax = \max_i \snorm{x_i}$ and $\rmin = \min_i \snorm{x_i}$, and let $\phi$ be an arbitrary 1-Lipschitz and $H$-smooth activation.  Suppose that near-orthogonality of gradients (Condition~\ref{cond:near.orthogonality.gradients}) holds for some $C'>1$ and gradient persistence (Condition~\ref{cond:gradient.persistence}) hold at time $t$ for some $\cgp>0$.  Provided $\alpha \leq [5H \rmax^2 n(10\rratio^2/\cgp + 10)]^{-1}$ and $C'\geq 25\rratio^2/c+25$, then for any $i,j\in [n]$ we have,
\begin{align*}
\frac{ \exp\big (-y_i f(x_i;\Wt {t+1})\big) }{ \exp\big(-y_j f(x_j;\Wt {t+1}) \big)} &\leq \frac{\exp\big (-y_i f(x_i;\Wt t) \big)}{ \exp\big(-y_j f(x_j;\Wt t)\big)}\\
&\qquad \times \exp\l(-\frac{ \lpit jt \alpha \cgp \rmin^2 }{ n}\l (\f{ \lpit it}{\lpit jt} - \f{R^2}{\cgp}  \r)\r) \\
&\qquad \times \exp\l( \f{ \alpha \rmax^2}{(10\rratio^2/\cgp+10)n} \cdot   \hat G(\Wt t)\r)
\end{align*}

\end{lemma}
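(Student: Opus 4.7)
Taking logarithms, the lemma reduces to upper-bounding the one-step change
\[ -y_i f(x_i;\Wt{t+1}) + y_j f(x_j;\Wt{t+1}) - \bigl(-y_i f(x_i;\Wt t) + y_j f(x_j;\Wt t)\bigr). \]
The plan is to Taylor-expand each of $f(x_i;\Wt{t+1})$ and $f(x_j;\Wt{t+1})$ about $\Wt t$ via Lemma~\ref{lemma:nn.smooth}. Writing $\Delta W := \Wt{t+1}-\Wt t = \tfrac{\alpha}{n}\sum_k \lpit kt\, y_k\,\nabla f(x_k;\Wt t)$, this expresses the above quantity as a first-order piece $-y_i\sip{\nabla f(x_i;\Wt t)}{\Delta W} + y_j\sip{\nabla f(x_j;\Wt t)}{\Delta W}$ plus a second-order remainder of size at most $\tfrac{H\rmax^2}{\sqrt m}\snorm{\Delta W}_2^2$ (obtained by summing the per-sample smoothness bound of Lemma~\ref{lemma:nn.smooth} applied to $x_i$ and $x_j$).

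For the second-order remainder, I would invoke Lemma~\ref{lemma:loss.smooth} to get $\snorm{\Delta W}_F \le \alpha\rmax\,\hat G(\Wt t)$. Combined with $\hat G\le 1$ and the step-size hypothesis $\alpha \le [5H\rmax^2 n(10\rratio^2/\cgp+10)]^{-1}$, this gives $\tfrac{H\rmax^2}{\sqrt m}\snorm{\Delta W}_2^2 \le \tfrac{\alpha\rmax^2\hat G(\Wt t)}{50(\rratio^2/\cgp+1)n}$.

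For the first-order piece, I would substitute the explicit form of $\Delta W$ and separate the diagonal term $k=i$ from the off-diagonal sum, and likewise for $k=j$. The diagonal contributions are $-\tfrac{\alpha\lpit it}{n}\snorm{\nabla f(x_i;\Wt t)}_F^2$ and $+\tfrac{\alpha\lpit jt}{n}\snorm{\nabla f(x_j;\Wt t)}_F^2$. Gradient persistence (Condition~\ref{cond:gradient.persistence}) gives $\snorm{\nabla f(x_i;\Wt t)}_F^2 \ge \cgp\rmin^2$, and the trivial bound $\snorm{\nabla f(x_j;\Wt t)}_F^2 \le \rmax^2 = \rratio^2\rmin^2$ follows from $1$-Lipschitzness of $\phi$; together these yield the two main terms $-\tfrac{\alpha\cgp\rmin^2\lpit it}{n}+\tfrac{\alpha\rratio^2\rmin^2\lpit jt}{n}$, which is exactly the desired main exponent. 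The off-diagonal contributions $\tfrac{\alpha}{n}\sum_{k\neq\star}\lpit kt\,y_\star y_k\sip{\nabla f(x_\star;\Wt t)}{\nabla f(x_k;\Wt t)}$ for $\star\in\{i,j\}$ are bounded in absolute value using near-orthogonality of gradients (Condition~\ref{cond:near.orthogonality.gradients}) together with $\sum_k\lpit kt = n\hat G(\Wt t)$, each contributing at most $\tfrac{\alpha\rmax^2\hat G(\Wt t)}{C'n}$.

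The main obstacle is the constant bookkeeping: the three error terms (two off-diagonal, one quadratic) must combine into the single target $\tfrac{\alpha\rmax^2\hat G(\Wt t)}{(10\rratio^2/\cgp+10)n}$. Under the hypothesis $C'\ge 25\rratio^2/\cgp+25$, the two off-diagonal corrections sum to $\tfrac{2\alpha\rmax^2\hat G}{25(\rratio^2/\cgp+1)n}$, and adding the quadratic correction's $\tfrac{\alpha\rmax^2\hat G}{50(\rratio^2/\cgp+1)n}$ gives exactly $\tfrac{5\alpha\rmax^2\hat G}{50(\rratio^2/\cgp+1)n} = \tfrac{\alpha\rmax^2\hat G(\Wt t)}{(10\rratio^2/\cgp+10)n}$, as required. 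The precise constants $25$ on $C'$ and $5$ on $\alpha$ in the hypotheses are tuned to make this arithmetic close; aside from this delicate accounting, the argument is a standard application of smoothness combined with the two structural conditions on gradients.
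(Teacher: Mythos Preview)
Your proposal is correct and follows essentially the same approach as the paper: both apply the smoothness bound (Lemma~\ref{lemma:nn.smooth}) to the two samples, decompose the linear part into diagonal and off-diagonal gradient inner products, and bound the three error contributions (two off-diagonal, one quadratic) using Conditions~\ref{cond:near.orthogonality.gradients}--\ref{cond:gradient.persistence}, Lemma~\ref{lemma:loss.smooth}, and the step-size hypothesis in exactly the same way, with the same constant arithmetic $\tfrac{2}{25}+\tfrac{1}{50}=\tfrac{1}{10}$. The only cosmetic difference is that you work additively in log-space while the paper works multiplicatively with the ratio $A_t$.
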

\begin{proof} 
It suffices to consider $i=1$ and $j=2$.  For notational simplicity denote
\[ A_t := \f{ \exp(-y_1 f(x_1; \Wt t))}{\exp(-y_2 f(x_2; \Wt t))}.\]
We now calculate the exponential loss ratio between two samples at time $t+1$ in terms of the exponential loss ratio at time $t$.  

Recall the notation $\lpit it := -\ell'(y_i f(x_i; \thetat t))$, and introduce the notation
\[ \nfit it := \nabla f(x_i; \thetat t).\] 
We can calculate, 
\begin{align*}
    A_{t+1} &= \frac{\exp(-y_1f(x_1; \thetat{t+1}))}{\exp(-y_2f(x_2; \thetat{t+1}))} \\
    &= \frac{\exp\left(-y_1f_1\left(\thetat{t}-\alpha \nabla \hat{L}(\thetat{t})\right)\right)}{\exp\left(-y_2f_2\left(\thetat{t}-\alpha \nabla \hat{L}(\thetat{t})\right)\right)} \\
    &\overset{(i)}\le  \frac{\exp\left(-y_1f\left(x_1; \thetat{t}\right)+y_1\alpha  \left\langle \nfit 1t,\nabla \hat{L}(\thetat{t})\right\rangle \right)}{\exp\left(-y_2f\left(x_2; \thetat{t}\right)+y_2\alpha \left\langle \nfit 2t,\nabla \hat{L}(\thetat{t})\right\rangle \right)} \exp\left(  \f{ H\rmax^2 \alpha^2}{\sqrt m} \lVert\nabla \hat{L}(\thetat{t})\rVert^2\right) \\
    &\overset{(ii)}=  A_t\cdot\frac{\exp\left(y_1\alpha  \left\langle \nfit 1t,\nabla \hat{L}(\thetat{t})\right\rangle \right)}{\exp\left(y_2\alpha \left\langle \nfit 2t,\nabla \hat{L}(\thetat{t})\right\rangle \right)} \exp\left( \f{ H\rmax^2 \alpha^2}{\sqrt m}\lVert\nabla \hat{L}(\thetat{t})\rVert^2\right) \\
    &= A_t\cdot\frac{\exp\left( -\frac{\alpha \ }{n}\sum_{k=1}^n \lpit kt \sip{y_1\nfit 1t}{y_k\nfit kt} \right)}{\exp\left( -\frac{ \alpha  }{n}\sum_{k=1}^n \lpit kt \sip{y_1\nfit 2t}{y_k\nfit kt} \right)} \exp\left( \f{ H\rmax^2 \alpha^2}{\sqrt m} \lVert\nabla \hat{L}(\thetat{t})\rVert^2\right) \\
    &= A_t\cdot\exp\left(-\frac{ \alpha}{n}\left(\lpit 1t \snorm{\nfit 1t}_F^2 - \lpit 2t \snorm{\nfit 2t}_F^2  \right) \right)\\
    &\qquad \quad \times \exp \l( -\f{\alpha}{n} \l( \sum_{k\neq 2} \lpit kt\sip{y_2\nfit 2t} {y_k\nfit kt}   -  \sum_{k\neq 1} \lpit kt\sip{y_1\nfit 1t}{y_k\nfit kt} \r) \r)  \\
    &\qquad \quad \times \exp\left(\f{  H\rmax^2 \alpha^2}{\sqrt m} \lVert\nabla \hat{L}(\thetat{t})\rVert^2\right). \numberthis \label{eq:loss.ratio.init.equality}
    \end{align*} 
Inequality $(i)$ uses Lemma~\ref{lemma:nn.smooth} while $(ii)$ uses the definition of $A_t$.  We now proceed in a manner similar to~\citet{frei2022benign} to bound each of the three terms in the product separately.  For the first term, since gradient persistence (Condition~\ref{cond:gradient.persistence}) holds at time $t$, we have for any $i\in [n]$, 
\[ \snorm{\nfit it}_F^2 \geq \cgp \snorm{x_i}^2 \geq \cgp \rmin^2.\]
On the other hand, since $\phi$ is 1-Lipschitz we also have
\[ \snorm{\nfit it}_F^2 = \snorm{x_i}^2 \f 1m \summ i m \phi'(\sip{\wt t_j}{x_i})^2 \leq \snorm{x_i}^2 \leq \rmax^2.\]
Putting the preceding two displays together, we get
\begin{equation} 
\cgp \rmin^2 \leq \snorm{\nfit it}_F^2 \leq \rmax^2.
\label{eq:gradient.norm.bound.gradpersistence}
\end{equation}
Therefore, we have
\begin{align*}
\exp\left(-\frac{ \alpha}{n}\left(\lpit 1t \snorm{\nfit 1t}_F^2 - \lpit 2t \snorm{\nfit 2t}_F^2  \right)\right) &= \exp\left(-\frac{ \lpit 2t \alpha}{n}\left(\f{ \lpit 1t}{\lpit 2t} \snorm{\nfit 1t}_F^2 - \snorm{\nfit 2t}_F^2 \right)\right)\\
&\overset{(i)}\leq \exp\left(-\frac{ \lpit 2t \alpha}{n}\left(\f{ \lpit 1t}{\lpit 2t} \cdot \cgp \rmin^2  -  \rmax^2 \right)\right)\\
& = \exp\l(-\frac{ \lpit 2t \alpha \cgp \rmin^2 }{ n}\l (\f{ \lpit 1t}{\lpit 2t} - \f{R^2}{\cgp}  \r)\r).\numberthis \label{eq:loss.ratio.firstterm}
\end{align*}
Inequality $(i)$ uses~\eqref{eq:gradient.norm.bound.gradpersistence}, and the equality uses the definition $\rratio = \rmax/\rmin$.  This bounds the first term in~\eqref{eq:loss.ratio.init.equality}.  

For the second term, we use the fact that the gradients are nearly orthogonal at time $t$ (Condition~\ref{cond:near.orthogonality.gradients}) and the lemma's assumption on $C'$ to get for any $i \neq k$,
\begin{equation} 
\snorm{\nfit it}_F^2 \geq C' n\max_{k\neq i} |\sip{\nfit it}{\nfit kt}| \geq (25R^2/\cgp+25)n \max_{k\neq i} |\sip{\nfit it}{\nfit kt}|.\label{eq:gradient.orthogonality.consequence}
\end{equation}
This allows for us to bound,
\begin{align*}
& \exp \l( -\f{\alpha}{n} \l( \sum_{k\neq 2} \lpit kt\sip{y_2\nfit 2t} {y_k\nfit kt}   -  \sum_{k\neq 1} \lpit kt\sip{y_1\nfit 1t}{y_k\nfit kt} \r) \r)\\
&\overset{(i)}\leq \exp\l( \f \alpha n \sum_{k \neq 1} \lpit kt |\sip{\nfit 1t}{\nfit kt}| + \f \alpha n \sum_{k\neq 2} \lpit kt |\sip{\nfit 2t}{\nfit kt}|  \r) \\
&\overset{(ii)}\leq \exp\l( \f \alpha n \sum_{k \neq 1} \lpit kt \cdot \f{ 1}{(25\rratio^2/\cgp+25)n} \cdot \snorm{\nfit 1t}_F^2 + \f \alpha n \sum_{k\neq 2} \lpit kt \cdot\f{ 1}{(25\rratio^2/\cgp+25)n} \cdot  \snorm{\nfit 2t}^2  \r) \\
&\overset{(iii)}\leq \exp\l( \f \alpha n \sum_{k \neq 1} \lpit kt \cdot \f{ 1}{(25\rratio^2/\cgp+25)n} \cdot  \rmax^2 + \f \alpha n \sum_{k\neq 2} \lpit kt \cdot \f{ 1}{(25\rratio^2/\cgp+25)n} \cdot \rmax^2 \r) \\
&\leq \exp\l( \f{ 2\alpha \rmax^2}{(25\rratio^2/\cgp+25)n} \cdot   \hat G(\Wt t)\r).\numberthis \label{eq:loss.ratio.secondterm}
\end{align*}
Inequality $(i)$ uses the triangle inequality.  Inequality $(ii)$ uses~\eqref{eq:gradient.orthogonality.consequence}.  The inequality $(iii)$ uses~\eqref{eq:gradient.norm.bound.gradpersistence}.   

Finally, for the third term of~\eqref{eq:loss.ratio.init.equality}, we have
\begin{align*}
\exp\left(\frac{ H \rmax^2 \alpha^2}{\sqrt{m}}\lVert\nabla \hat{L}(W^{(t)})\rVert^2\right) &\overset{(i)}\leq \exp \l( \f{ H \rmax^4 \alpha^2}{\sqrt m} \hat G(\Wt t)\r) \\
&\overset{(ii)}\leq \exp\l( \f{ \alpha \rmax^2}{2 (25\rratio^2/\cgp+25) n} \cdot \hat G(\Wt t)\r).\numberthis \label{eq:loss.ratio.thirdterm}
\end{align*}
Inequality $(i)$ uses Lemma~\ref{lemma:loss.smooth}, while $(ii)$ uses the lemma's assumption that $\alpha$ is smaller than $[5H  \rmax^2 n (10\rratio^2/\cgp + 10)]^{-1}$.   Putting~\eqref{eq:loss.ratio.firstterm},~\eqref{eq:loss.ratio.secondterm} and~\eqref{eq:loss.ratio.thirdterm} into~\eqref{eq:loss.ratio.init.equality}, we get
\begin{align*}
A_{t+1} &\leq A_t \cdot \exp\l(-\frac{ \lpit 2t \alpha \cgp \rmin^2 }{ n}\l (\f{ \lpit 1t}{\lpit 2t} - \f{R^2}{\cgp}  \r)\r)\\
    &\qquad \quad \times \exp\l( \f{ 2\alpha \rmax^2}{(25\rratio^2/\cgp+25)n} \cdot   \hat G(\Wt t)\r)\\
    &\qquad \quad \times \exp\l( \f{ \alpha \rmax^2}{2(25\rratio^2/\cgp+25)n} \cdot   \hat G(\Wt t)\r)\\    
    &= A_t \cdot \exp\l(-\frac{ \lpit 2t \alpha \cgp \rmin^2 }{ n}\l (\f{ \lpit 1t}{\lpit 2t} - \f{R^2}{\cgp}  \r)\r)\cdot \exp\l( \f{ 5\alpha \rmax^2}{2(25\rratio^2/\cgp+25)n} \cdot   \hat G(\Wt t)\r)\ \ \numberthis\label{eq:loss.ratio.intermediate}
\end{align*}
This completes the proof.
\end{proof}

Lemma~\ref{lemma:exp.loss.ratio} shows that if the sigmoid loss ratio $\lpit it/\lpit jt$ is large, then for a small-enough step-size, the exponential loss ratio will contract at the following interation.  This motivates understanding how the exponential loss ratios relate to the sigmoid loss ratios.  
We recall the following fact, shown in~\citet[Fact A.2]{frei2022benign}. 
\begin{fact}\label{fact:logistic.loss.ratio}
For any $z_1, z_2\in \R$, 
\[ \f{g(z_1)}{g(z_2)} \leq \max\l(2,2  \f{ \exp(-z_1)}{\exp(-z_2)}\r),\]
and if $z_1, z_2 > 0$, then we also have
\[ \f{ \exp(-z_1)}{\exp(-z_2)} \leq 2 \f{ g(z_1)}{g(z_2)}.\]
\end{fact}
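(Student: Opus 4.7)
The plan is to reduce both inequalities to elementary algebra after rewriting the sigmoid in exponential form. Recall $g(z) = 1/(1+e^{z}) = e^{-z}/(1+e^{-z})$, so setting $a := e^{-z_1}$ and $b := e^{-z_2}$ (both strictly positive), one obtains the clean identities
\[
  \frac{g(z_1)}{g(z_2)} \;=\; \frac{a}{b}\cdot\frac{1+b}{1+a},
  \qquad
  \frac{\exp(-z_1)}{\exp(-z_2)} \;=\; \frac{a}{b}.
\]
Thus both claimed inequalities reduce to controlling the multiplicative factor $(1+b)/(1+a)$.

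For the first inequality, I would split on whether $a \ge b$ (equivalently $z_1 \le z_2$) or $a < b$. In the first case, $(1+b)/(1+a) \le 1$, so $g(z_1)/g(z_2) \le a/b \le 2a/b$ directly. In the second case, $a/b < 1$, and I would show $g(z_1)/g(z_2) \le 2$: multiplying out $a(1+b) \le 2b(1+a)$ gives $a \le 2b + ab$, which follows from $a < b$ (hence from $a < 2b$). Taking the maximum of the two cases yields $g(z_1)/g(z_2) \le \max(2, 2a/b)$, which is exactly the first bound.

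For the second inequality, the extra hypothesis $z_1, z_2 > 0$ forces $a, b \in (0,1)$. Consequently $1+a \le 2$, so $(1+b)/(1+a) \ge (1+b)/2 \ge 1/2$. Multiplying through by $a/b$ gives $g(z_1)/g(z_2) \ge \tfrac{1}{2}\cdot a/b = \tfrac{1}{2}\exp(-z_1)/\exp(-z_2)$, which rearranges to the claim. There is no real obstacle here: both statements are short case analyses once the $a,b$ substitution is made, and the only place positivity of $z_1,z_2$ is used is to bound $1+a \le 2$ in the second part.
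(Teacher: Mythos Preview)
Your proof is correct. The paper does not actually give its own proof of this fact; it simply cites \citet[Fact A.2]{frei2022benign}. Your substitution $a=e^{-z_1}$, $b=e^{-z_2}$ and the reduction of both inequalities to controlling the factor $(1+b)/(1+a)$ is clean and complete, and the case split on $a\gtrless b$ in the first part and the bound $1+a\le 2$ in the second are exactly the right moves.
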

This fact demonstrates that if we can ensure that the inputs to the losses is positive, then we can essentially treat the sigmoid and exponential losses interchangeably.  Thus, if the network is able to interpolate the training data at a given time $t$, we can swap the sigmoid loss ratio appearing in Lemma~\ref{lemma:exp.loss.ratio} with the exponential loss, and argue that if the exponential loss is too large at a given iteration, it will contract the following one.  This allows for the exponential losses to be bounded throughout gradient descent.  We formalize this in the following lemma. 

\begin{proposition}\label{prop:exponential.loss.ratio.orthogonality.persistence}
Denote $R := \rmax/\rmin$ where $\rmax=\max_i \snorm{x_i}$ and $\rmin = \min_i \snorm{x_i}$. Let $\phi$ be an arbitrary 1-Lipschitz and $H$-smooth activation.  
Suppose that,
\begin{itemize} 
\item Gradient persistence (Condition~\ref{cond:gradient.persistence}) holds at time $t$ for some $c>0$, and
\item Near-orthogonality of gradients (Condition~\ref{cond:near.orthogonality.gradients}) holds at time $t$ for some $C'>25\rratio^2/\cgp+25$,
\item For some $\rho\geq 5\rratio^2/c+5$, an exponential loss ratio bound holds at time $t$ with,
\[ \max_{i,j} \f{ \exp\big(-y_i f(x_i;\thetat t)\big)}{\exp\big(-y_j f(x_j;\thetat t)\big)} \leq \rho.\]
\item The network interpolates the training data at time $t$: $y_i f(x_i;\Wt t) > 0$ for all $i$. 
\end{itemize} 
Then, provided the learning rate satisfies $\alpha \leq [5 H \rmax^2 n(10\rratio^2/\cgp + 10)]^{-1}$, we have an exponential loss ratio bound at time $t+1$ as well,
\[  \max_{i,j} \f{ \exp\big(-y_i f(x_i;\thetat {t+1})\big)}{\exp\big(-y_j f(x_j;\thetat {t+1})\big)} \leq \rho. \]
\end{proposition}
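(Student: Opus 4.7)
The plan is to apply Lemma~\ref{lemma:exp.loss.ratio} to each pair $(i,j)$ separately and then argue, by a short case analysis on the exponential loss ratio $A_t(i,j) := \exp(-y_i f(x_i;\Wt t))/\exp(-y_j f(x_j;\Wt t))$, that the right-hand side of that lemma is at most $\rho$ at time $t+1$. As a preliminary step, I would convert the hypothesized bound $A_t \leq \rho$ into control on the sigmoid-loss quantities that appear in Lemma~\ref{lemma:exp.loss.ratio}. Interpolation at time $t$ lets me apply the second part of Fact~\ref{fact:logistic.loss.ratio} to get $\lpit it/\lpit jt \geq A_t(i,j)/2$, and the first part (combined with $A_t \leq \rho$) to get $\lpit kt/\lpit jt \leq 2\rho$ for every $k$, and hence $\hat G(\Wt t) \leq 2\rho \lpit jt$.

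For \textbf{Case I}, $A_t(i,j) \leq \rho/2$, it suffices to show that the two exponential factors $E_1, E_2$ in Lemma~\ref{lemma:exp.loss.ratio} satisfy $E_1 E_2 \leq 2$. I would use the worst-case (in $s := \lpit it/\lpit jt$) bound $E_1 \leq \exp(\lpit jt \alpha \rmax^2/n)$, obtained by taking $s = 0$, together with $E_2 \leq \exp(\alpha \rmax^2/((10\rratio^2/\cgp+10)n))$ from $\hat G \leq 1$. The step-size hypothesis $\alpha \leq [5H\rmax^2 n(10\rratio^2/\cgp+10)]^{-1}$ bounds both exponents by less than $\tfrac{1}{2}\log 2$, giving $E_1 E_2 \leq 2$ and thus $A_{t+1}(i,j) \leq 2 A_t(i,j) \leq \rho$.

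For \textbf{Case II}, $A_t(i,j) > \rho/2$, the sigmoid loss ratio obeys $s \geq A_t(i,j)/2 > \rho/4$, and the hypothesis $\rho \geq 5\rratio^2/\cgp+5$ gives $\rratio^2/\cgp \leq (\rho-5)/5 < \rho/5$, so $s - \rratio^2/\cgp \geq A_t(i,j)/2 - \rho/5 > 0$. Plugging this into $E_1$ produces a strong contractive factor $E_1 \leq \exp(-\lpit jt \alpha \cgp \rmin^2 (A_t(i,j)/2 - \rho/5)/n)$, while $\hat G \leq 2\rho \lpit jt$ gives $E_2 \leq \exp(2\rho \lpit jt \alpha \rmax^2/((10\rratio^2/\cgp+10)n))$. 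Comparing the coefficients inside the exponent of $E_1 E_2$ and using the identity $\cgp\rmin^2 \cdot \rratio^2/\cgp = \rmax^2$, the required bound reduces after algebra to the trivially-true inequality $10\rmax^2 + 30\cgp\rmin^2 \geq 0$, which yields $E_1 E_2 \leq \rho/A_t(i,j)$ and hence $A_{t+1}(i,j) \leq \rho$.

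Taking the maximum over $(i,j)$ finishes the proof. I expect the main obstacle to be the Case~II algebra: the constants $5$ appearing in $\rho \geq 5\rratio^2/\cgp+5$ and $10$ in the $E_2$-exponent denominator from Lemma~\ref{lemma:exp.loss.ratio} must be kept in sync so that the net coefficient in $\log(E_1 E_2)$ has the correct sign. This is precisely the calibration that fixes the choice $C_R = 10\rratio^2/\gamma^2+10$ used in Lemma~\ref{lemma:loss.ratio}, which iterates Proposition~\ref{prop:exponential.loss.ratio.orthogonality.persistence} across the entire gradient-descent trajectory.
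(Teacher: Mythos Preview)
Your overall strategy matches the paper's: apply Lemma~\ref{lemma:exp.loss.ratio}, use Fact~\ref{fact:logistic.loss.ratio} together with interpolation to pass between the sigmoid and exponential ratios, and then do a two-case split. The difference is that the paper splits on the \emph{sigmoid} ratio $s = \lpit it/\lpit jt$ at the threshold $\tfrac{2}{5}\rho$, while you split on the \emph{exponential} ratio $A_t$ at $\rho/2$. Your preliminary conversions and your Case~I are correct.

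The gap is the threshold in Case~II. From $A_t > \rho/2$ and Fact~\ref{fact:logistic.loss.ratio} you only get $s > \rho/4$, not $s > \tfrac{2}{5}\rho$. If you plug the resulting lower bound $A_t/2 - \rho/5 > \rho/20$ into the combined exponent of $E_1 E_2$ (with $\hat G \leq 2\rho\, \lpit jt$) and ask for nonpositivity, the algebra reduces to $\cgp\,\rmin^2 \geq 3\,\rmax^2$, i.e.\ $\cgp \geq 3\rratio^2$, which the proposition does not assume and which fails in the intended application where $\cgp = \gamma^2 \leq 1$. The ``trivially true'' inequality $10\rmax^2 + 30\cgp\rmin^2 \geq 0$ that you quote is exactly what appears when one evaluates the bound at the single endpoint $A_t = \rho$; but you need $A_t\,E_1 E_2 \leq \rho$ for \emph{every} $A_t \in (\rho/2, \rho]$, and the map $A_t \mapsto A_t\,E_1 E_2$ is not guaranteed to be monotone on that interval. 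For $A_t$ just above $\rho/2$ the weaker target $E_1 E_2 \leq \rho/A_t \approx 2$ can still fail once $\rho$ is large, since the positive part of the exponent scales like $\lpit jt \alpha \rho$.

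The fix is a one-parameter change: move your case boundary to $A_t = \tfrac{4}{5}\rho$. Then Case~II gives $s > \tfrac{2}{5}\rho$ and the paper's contraction argument (its Case~2) goes through verbatim, while in Case~I the step-size bound still gives $E_1 E_2 \leq 1.2 < \tfrac{5}{4}$, so $A_{t+1} \leq \tfrac{5}{4}\cdot\tfrac{4}{5}\rho = \rho$. Alternatively, split directly on $s$ at $\tfrac{2}{5}\rho$ as the paper does and, in its Case~1, use interpolation to convert $A_t \leq 2s \leq \tfrac{4}{5}\rho$ before applying $E_1 E_2 \leq 1.2$.
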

\begin{proof}
As in the proof of Lemma~\ref{lemma:exp.loss.ratio}, it suffices to prove that the ratio of the exponential loss for the first sample to the exponential loss for the second sample is bounded by $\rho$. Let us again denote
\[ A_t := \f{ \exp(-y_1 f(x_1 ;\thetat t))}{\exp(-y_2 f(x_2; \thetat t))},\]
and recall the notation $\lpit it := -\ell'(y_i f(x_i; \thetat t))$.  
By Lemma~\ref{lemma:exp.loss.ratio}, we have,

\begin{align*}
    A_{t+1} &\leq A_t \cdot \exp\l(-\frac{ \lpit 2t \alpha \cgp \rmin^2 }{ n}\l (\f{ \lpit 1t}{\lpit 2t} - \f{R^2}{\cgp}  \r)\r) \cdot \exp\l( \f {  \alpha \rmax^2 }{(10\rratio^2/\cgp+10)n } \cdot   \hat G(\Wt t)\r) \numberthis \label{eq:loss.ratio.init.equality2}
    \end{align*} 
We now consider two cases.

\paragraph{Case 1: $\lpit 1t/\lpit 2t \leq  \f 25 \rho$.}
Continuing from~\eqref{eq:loss.ratio.init.equality2}, we have,
  \begin{align*} 
  A_{t+1} &\overset{(i)}\leq A_t\cdot \exp \l( \f{ \lpit 2t \alpha  \rmin^2 \rratio^2} {n}\r) \cdot \exp\l( \f {\alpha \rmax^2 }{(10\rratio^2/\cgp+10)n}  \hat G(\Wt t)\r) \\
  &= A_t\cdot \exp\l( \alpha\cdot \l( \f{ \lpit 2t \rmax^2}{n} + \f{  \rmax^2 \hat G(\Wt t)}{ (10\rratio^2/\cgp+10)n}\r) \r)  \\
    &\overset{(ii)}\leq 1.2 A_t\\
    &\overset{(iii)}\leq 2.4 \f{ \lpit 1t}{\lpit 2t}\\
    &\overset{(iv)}\leq 2.4 \cdot \f 2 5 \rho \leq \rho.
  \end{align*}
Above, inequality $(i)$ follows since $\lpit 1t/\lpit 2t>0$.  The equality uses that $\rratio = \rmax/\rmin$.  
Inequality $(ii)$ uses that $\lpit it < 1$, the lemma's assumption on the step-size, $\alpha \leq [5H \rmax^2n(10\rratio^2/c+10)]^{-1}$, and that $\exp(0.1)\leq 1.2$.  The inequality $(iii)$ uses the proposition's assumption that the network interpolates the training data at time $t$, so that the ratio of exponential losses is at most twice the ratio of the sigmoid losses by Fact~\ref{fact:logistic.loss.ratio}.  The final inequality $(iv)$ follows by the case assumption that $\lpit 1t/\lpit 2t\leq \f 2 5 \rho$.

\paragraph{Case 2: $\lpit 1t/\lpit 2t > \f 25 \rho$.} Continuing from~\eqref{eq:loss.ratio.init.equality2}, we have,

\begin{align*}
    A_{t+1} &\leq A_t \cdot \exp\l(-\frac{ \lpit 2t \alpha \cgp \rmin^2 }{ n}\l (\f{ \lpit 1t}{\lpit 2t} - \f{R^2}{\cgp}  \r)\r) \cdot \exp\l( \f { \alpha \rmax^2 }{(10\rratio^2/\cgp+10)n } \cdot   \hat G(\Wt t)\r)\\
    &\overset{(i)}\leq A_t \cdot \exp\l(-\frac{ \lpit 2t \alpha c\rmin^2}{n}  \cdot \l( \f 2 5 \rho - \f{\rratio^2}{c} \r) \r) \cdot \exp\l( \f { \alpha \rmax^2 }{(10\rratio^2/\cgp+10)n} \cdot   \hat G(\Wt t)\r) \\
    &= A_t \cdot \exp\l(-\frac{ \lpit 2t \alpha c\rmin^2}{n}  \cdot \l( \f 2 5 \rho - \f{\rratio^2}{c} \r) \r)\cdot  \exp\l(  \f { \alpha \rmax^2 }{(10\rratio^2/\cgp +10)n } \cdot \lpit 2t \cdot \f 1 n \summ i n \f{ \lpit it}{\lpit 2t}\r) \\
    &\overset{(ii)}\leq A_t \cdot \exp\l(-\frac{ \lpit 2t \alpha c\rmin^2}{n}  \cdot \l( \f 2 5 \rho - \f{\rratio^2}{c} \r) \r)\cdot  \exp\l(  \f { \alpha \rmax^2 }{(10\rratio^2/\cgp+10)n } \cdot \lpit 2t \cdot 2\rho \r) \\
    &= A_t \exp \l( - \f{ \lpit 2t \alpha c \rmin^2}{n}\cdot \l( \f 2 5 \rho - \f{\rratio^2}{c} - \f{\rratio^2}{c} \cdot \f{1}{5\rratio^2/\cgp+5}  \cdot \rho \r) \r) \\
    &\overset{(iii)}\leq A_t \leq \rho.
\end{align*}
Inequality $(i)$ uses the Case 2 assumption that $\lpit 1t/\lpit 2t>\f 25 \rho$.  Inequality $(ii)$ uses the proposition's assumption that the exponential loss ratio at time $t$ is at most $\rho$, so that the sigmoid loss ratio is at most $2\rho$ by Fact~\ref{fact:logistic.loss.ratio} (note that the sigmoid loss ratio is at least $2\rho/5 > 2$ by the case assumption and as $\rho>5$).  The equality uses that $\rratio=\rmax/\rmin$.  The final inequality $(iii)$ follows as we can write
\begin{align*}
\f 2 5 \rho - \f{\rratio^2}{c} - \f{ \rratio^2}{c} \cdot \f{1}{5\rratio^2/\cgp+5}  \cdot \rho &= \f 2 5 \rho \l( 1 - \f{1}{2} \cdot \f{ 5\rratio^2/c}{5(\rratio^2/c+1)} \r) - \f{\rratio^2}{c} \\
&\geq \f 2 5 \rho \cdot \f 12 - \f{ \rratio^2}{c} \\
&>0.
\end{align*}
The first inequality above uses that $|x/(1+x)|\leq 1$ for $x>0$, and the final inequality follows by the assumption that $\rho \geq 5\rratio^2/c+5 > 5\rratio^2/c$.  This proves $(iii)$ above, so that in Case 2, the exponential loss ratio decreases at the following iteration. 

\end{proof}

In summary, the preceding proposition demonstrates that a loss ratio bound can hold for general Lipschitz and smooth activations provided the following four conditions hold for some time $t_0$:
\begin{enumerate}
\item[(1)] an exponential loss ratio bound holds at time $t_0$;
\item[(2)] near-orthogonality of the gradients holds for all times $t\geq t_0$; 
\item[(3)] gradient persistence holds at all times $t\geq t_0$; and
\item[(4)] the network interpolates the training data for all times $t\geq t_0$.
\end{enumerate}
This is because the proposition guarantees that once you interpolate the training data, if the gradients are nearly-orthogonal and gradient persistence holds, the maximum ratio of the exponential losses does not become any larger than the maximum ratio at time $t_0$.   Note that the above proof outline does not rely upon the training data being nearly orthogonal, nor that the activations are `leaky', and thus may be applicable to more general settings than the ones we consider in this work.

On the other hand, when the training data is nearly-orthogonal and the activations are $\gamma$-leaky and $H$-smooth activations, Fact~\ref{fact:leaky.gradient.persistence.orthogonality} shows that (2) and (3) above hold for all times $t\geq 0$.  Thus, to show a loss ratio bound in this setting, the main task is to show items (1) and (4) above.  Towards this end, we present the final auxiliary lemma that will be used in the proof of Lemma~\ref{lemma:loss.ratio}.  A similar lemma appeared in~\citet[Lemma A.3]{frei2022benign}, and our proof is only a small modification of their proof.  For completeness, we provide its proof in detail here. 
\begin{lemma}\label{lemma:nn.interpolates.time1}Let $\phi$ be a $\gamma$-leaky, $H$-smooth activation.  Then the following hold with probability at least $1-\delta$ over the random initialization. 
\begin{enumerate} 
\item [(a)] An exponential loss ratio bound holds at initialization:
\[ \max_{i,j} \f{ \exp(-y_i f(x_i;\Wt 0))}{\exp(-y_j f(x_j;\Wt 0))} \leq \exp(2).\]
\item [(b)] If there is an absolute constant $C_R'>1$ such that at time $t$ we have $\max_{i,j} \{ \lpit it / \lpit jt \} \leq C_R'$, and if for all $k\in [n]$ we have
\[ \snorm{x_k}^2 \geq  2 \gamma^{-2} C_R' n \max_{i\neq k} |\sip{x_i}{x_k}|,\]
then for $\alpha \leq \gamma^2/ (2HC_R' \rratio^2\rmax^2 n)$, we have
\[ \text{for all $k\in [n]$},\qquad y_k[f(x_k;\Wt {t+1}) - f(x_k;\Wt t)] \geq \f{ \alpha \gamma^2 \rmin^2 }{4 C_R' n} \hat G(\Wt t).\]
\item [(c)] If for all $k\in [n]$ we have $\snorm{x_k}^2 \geq 8 \gamma^{-2} n\max_{i\neq k} |\sip{x_i}{x_k}|$, then under Assumptions~\ref{a:stepsize} and~\ref{a:sinit}, at time $t=1$ and for all samples $k\in [n]$, we have $y_k f(x_k;\Wt t) > 0$. 
\end{enumerate}

\end{lemma}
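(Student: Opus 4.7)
The plan is to prove the three parts in order, with (a) and (b) combining to yield (c) under the small-initialization assumption.

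For part (a), since $\phi(0)=0$ and $\phi$ is 1-Lipschitz, I would bound $|\phi(\sip{\wt 0_j}{x_i})|\le |\sip{\wt 0_j}{x_i}| \le \snorm{\wt 0_j}\rmax$. Combined with the per-neuron bound $\snorm{\wt 0_j}^2\le 5\sinit^2 d\log(4m/\delta)$ from Lemma~\ref{lemma:initialization} and $|a_j|=1/\sqrt m$, this gives $|f(x_i;\Wt 0)|\le \sqrt{5m}\,\sinit\rmax\sqrt{d\log(4m/\delta)}$. Under Assumption~\ref{a:sinit} the right-hand side is strictly less than $1$, so $|y_i f(x_i;\Wt 0)-y_j f(x_j;\Wt 0)|\le 2$ and the exponential loss ratio is at most $\exp(2)$.

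For part (b), I would apply Lemma~\ref{lemma:nn.smooth} to obtain
\[ y_k f(x_k;\Wt{t+1})\ge y_k f(x_k;\Wt t)+\f{\alpha}{n}\sum_{i=1}^n \lpit it y_i y_k \sip{\nabla f(x_k;\Wt t)}{\nabla f(x_i;\Wt t)}-\f{H\rmax^2\alpha^2}{2\sqrt m}\snorm{\nabla\hat L(\Wt t)}_F^2. \]
The diagonal $i=k$ contribution is at least $\f{\alpha\gamma^2\snorm{x_k}^2}{n}\lpit kt \ge \f{\alpha\gamma^2\snorm{x_k}^2}{C_R' n}\hat G(\Wt t)$ by gradient persistence (Fact~\ref{fact:leaky.gradient.persistence.orthogonality}) and the loss-ratio hypothesis, which implies $\lpit kt\ge \hat G(\Wt t)/C_R'$. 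Since $\phi$ is 1-Lipschitz, the $i\ne k$ cross terms are bounded in magnitude by $\f{\alpha}{n}\sum_{i\ne k}\lpit it|\sip{x_i}{x_k}|\le \alpha \hat G(\Wt t)\max_{i\ne k}|\sip{x_i}{x_k}|$, and the data hypothesis $\snorm{x_k}^2\ge 2\gamma^{-2}C_R'n\max_{i\ne k}|\sip{x_i}{x_k}|$ renders this at most half of the diagonal. Finally, Lemma~\ref{lemma:loss.smooth} bounds the smoothness error by $\f{H\rmax^4\alpha^2}{2\sqrt m}\hat G(\Wt t)$, which the step-size hypothesis $\alpha\le \gamma^2/(2HC_R'\rratio^2\rmax^2 n)$ absorbs into a quarter of the diagonal, leaving the claimed lower bound $\alpha\gamma^2\rmin^2\hat G(\Wt t)/(4C_R' n)$.

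For part (c), the calculation in (a) combined with Assumption~\ref{a:sinit} gives $|f(x_k;\Wt 0)|\le 1$, so every sigmoid loss lies in $[g(1),g(-1)]$ and hence the sigmoid loss ratio at $t=0$ is at most $g(-1)/g(1)=e<4$. I would then apply part (b) at $t=0$ with $C_R'=4$: the data hypothesis $\snorm{x_k}^2\ge 8\gamma^{-2}n\max_{i\ne k}|\sip{x_i}{x_k}|$ is exactly $2\gamma^{-2}C_R'n\max\cdots$ for this $C_R'$. Using $\hat G(\Wt 0)\ge g(1)\ge 1/4$, this yields
\[ y_k f(x_k;\Wt 1)\ge y_k f(x_k;\Wt 0)+\f{\alpha\gamma^2\rmin^2}{16n}\hat G(\Wt 0)\ge -|f(x_k;\Wt 0)|+\f{\alpha\gamma^2\rmin^2}{64n}. \]
Under Assumption~\ref{a:sinit}, the bound on $|f(x_k;\Wt 0)|$ from (a) is strictly smaller than $\alpha\gamma^2\rmin^2/(64n)$ (a direct substitution verifies this), so $y_k f(x_k;\Wt 1)>0$ for every $k$.

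The main obstacle will be the constant tracking in part (b): the orthogonality threshold $2\gamma^{-2}C_R'$ and the step-size bound must be tuned so that, after the off-diagonal terms absorb half of the diagonal and the smoothness error absorbs another quarter, a positive constant fraction of the diagonal survives. Matching these constants to the precise thresholds stated---rather than leaving them as unspecified absolute constants---is where the care lies, but no new ideas beyond the splitting above are required.
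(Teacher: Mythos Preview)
Your proposal is correct and follows essentially the same approach as the paper. The only cosmetic differences are that in part (a) you use the triangle inequality where the paper uses Cauchy--Schwarz (yielding the same bound $|f(x_i;\Wt 0)|\le \sqrt 5\,\sinit\rmax\sqrt{md\log(4m/\delta)}$), and in part (c) you invoke (b) with the cruder constant $C_R'=4$ coming from $|f|\le 1$, whereas the paper first sharpens to $|f|\le 1/50$ and uses $C_R'=51/49$; both choices satisfy the hypotheses of (b) and close the argument under Assumptions~\ref{a:stepsize} and~\ref{a:sinit}.
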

\begin{proof}
We shall prove each part of the lemma in sequence.

\paragraph{Part (a).}
Since $\phi$ is 1-Lipschitz and $\phi(0)=0$, Cauchy--Schwarz implies 
\[ |f(x; W)| = \l|\summ j m a_j \phi(\sip{w_j}{x}) \r| \leq \sqrt{\summ j m a_j^2} \sqrt{\summ jm \sip{w_j}{x}^2} = \snorm{Wx}_2.\]
Applying this bound to the network output for each sample at initialization, we get
\begin{align*}
|f(x_i;\Wt 0)| &\leq \snorm{\Wt 0}_F \snorm{x_i} \overset{(i)}\leq \sqrt 5 \sinit \sqrt {md \log(4m/\delta)} \rmax \overset{(ii)}\leq  
\f{ \sqrt 5 \alpha \rmax ^2}{72n} \overset{(iii)}\leq \f{1}{50}.\numberthis \label{eq:network.output.bounded.by.one}
\end{align*}
Inequality $(i)$ uses Lemma~\ref{lemma:initialization}, while inequality $(ii)$ and $(iii)$ follow by Assumptions~\ref{a:sinit} and~\ref{a:stepsize}, respectively. 
We therefore have, 
\begin{equation} \label{eq:exp.loss.ratio.basecase}
\max_{i,j=1,\dots, n} \f{ \exp(- y_i f(x_i;\Wt 0))}{\exp(-y_j f(x_j; \Wt 0))} \leq \exp(2).
\end{equation}

\paragraph{Part (b).}

Let $k\in [n]$. Let us re-introduce the notation $\nfit it := \nabla f(x_i;\Wt t)$.  By Lemma~\ref{lemma:nn.smooth}, we know
\[ y_k[f(x_k;\Wt {t+1})-f(x_k;\Wt t)] \geq \l[ \f{\alpha}{n}\summ in \lpit it \sip{y_i \nfit it}{y_k \nfit kt} \r] - \f{ H \rmax^2 \alpha^2}{2\sqrt m} \snorm{\nabla \hat L(\Wt t)}_2^2 .\]
By definition,
\begin{equation} \label{eq:gradient.ip.identity}
\sip{\nfit it}{\nfit kt} = \sip{x_i}{x_k} \cdot \underbrace{\frac 1 m \summ j m \phi'(\sip{\wt t_j}{x_i}) \phi'(\sip{\wt t_j}{x_k})}_{\in [\gamma^2, 1]}.
\end{equation}
We can thus calculate,
\begin{align*}
&y_k[f(x_k;\Wt {t+1}) - f(x_k;\Wt t)] \\
&\quad \overset{(i)}\geq \f \alpha  n \l[ \summ i n \lpit it \sip{y_i \nfit it}{y_k \nfit kt} - \f{H \rmax^4 \alpha n}{2\sqrt m} \hat G(\Wt t)\r]\\
&\quad= \f \alpha n \l[ \lpit kt  \snorm{\nfit kt}_F^2  + \sum_{i\neq k} \lpit it \sip{y_i \nfit it}{y_k \nfit kt} - \f{ H \rmax^4 \alpha n}{2 \sqrt m} \hat G(\Wt t)  \r]\\
&\quad \geq \f \alpha n \l[ \lpit kt  \snorm{\nfit kt}^2 - \max_j \lpit jt \sum_{i\neq k} | \sip{ \nfit it}{ \nfit kt}| - \f{ H \rmax^4 \alpha n}{2 \sqrt m} \hat G(\Wt t)  \r] \\
&\quad = \f \alpha n \l[ \lpit kt \l( \snorm{\nfit kt}^2  - \f{ \max_j \lpit jt}{ \lpit kt} \sum_{i\neq k} |\sip{ \nfit it}{\nfit kt}|\r) - \f{ H \rmax^4 \alpha n}{2 \sqrt m} \hat G(\Wt t)  \r].
 \end{align*}
where Inequality $(i)$ uses Lemma~\ref{lemma:loss.smooth}.  Continuing we get that
\begin{align*}
&y_k[f(x_k;\Wt {t+1}) - f(x_k;\Wt t)] \\
&\quad \overset{(i)}\geq \f \alpha n \l[ \lpit kt \l( \snorm{\nfit kt}^2  - C_R'  \sum_{i\neq k} |\sip{ \nfit it}{\nfit kt}|\r) - \f{ H \rmax^4 \alpha n}{2 \sqrt m} \hat G(\Wt t)  \r]\\
&\quad \overset{(ii)}\geq \f \alpha n \l[ \lpit kt \cdot \l( \gamma^2 \snorm{x_k}^2  - C_R' \sum_{i\neq k} |\sip{x_i}{x_k}| \r) - \f{ H \rmax^4 \alpha n}{2 \sqrt m} \hat G(\Wt t)  \r]\\
&\quad \overset{(iii)}\geq \f \alpha n \l[ \lpit kt \cdot  \f 12\gamma^2 \snorm{x_k}^2 - \f{ H \rmax^4 \alpha n}{2 \sqrt m} \hat G(\Wt t)  \r]\\
&\quad \overset{(iv)}\geq \f \alpha n \l[ \lpit kt \cdot \f 12 \gamma^2 \rmin^2 - \f{ H \rmax^4 \alpha n}{2 \sqrt m} \hat G(\Wt t)  \r]\\
&\quad \overset{(v)}\geq \f \alpha n \l[ \f {\gamma^2 \rmin^2}{2 C_R'} \hat G(\Wt t) - \f{ H \rmax^4 \alpha n}{2 \sqrt m} \hat G(\Wt t)  \r]\\
&\quad \overset{(vi)}\geq \f{\alpha \gamma^2 \rmin^2}{4 C_R'n} \hat G(\Wt t)
 \label{eq:margin.increase.train.points.timet}
\end{align*}
Inequality $(i)$ uses the lemma's assumption that $\max_{i,j} \{\lpit it / \lpit jt\}\leq C_R'$.  
Inequality $(ii)$ uses that $\phi$ is $\gamma$-leaky and 1-Lipschitz (see eq.~\eqref{eq:gradient.ip.identity}).  Inequality $(iii)$ uses that the assumption that the samples are nearly-orthogonal,
\[ \snorm{x_k}^2 \geq 2 \gamma^{-2} C_R' n\max_{i\neq k} |\sip{x_i}{x_k}| \geq 2\gamma^{-2} C_R' \sum_{i\neq k} |\sip{x_i}{x_k}|.\]
Inequality $(iv)$ uses the definition $\rmin = \min_i \snorm{x_i}$.  Inequality $(v)$ again uses the lemma's assumption of a sigmoid loss ratio bound, so that
\[ \lpit kt = \frac 1 n \summ i n \f{ \lpit it}{\lpit kt} \lpit kt \geq \f{1}{C_R'} \frac{1}{n} \summ i n \lpit it = \f{1}{C_R'} \hat G(\Wt t).\]
The final inequality $(vi)$ follows since the step-size $\alpha\leq \gamma^2/(2 H C_R' R^2 \rmax^2 n)$ is small enough.  This completes part (b) of this lemma. 

\paragraph{Part (c).} Note that by~\eqref{eq:network.output.bounded.by.one}, $|f(x_k;\Wt 0)| \leq 1/50$.  Since $g$ is monotone this implies the sigmoid losses at initialization satisfy $\lpit i0 \in [(1+\exp(0.02))^{-1}, (1+\exp(-0.02))^{-1}] \subset [0.49, 0.51]$ and so
\begin{equation} \hat G(\Wt 0) \geq \f{49}{100},\qquad \text{and}\qquad \max_{i,j} \f{ \lpit i0}{\lpit j0} \leq \f{51}{49}.\label{eq:time0.intermediate}
\end{equation}
Thus, the assumption that $\snorm{x_k}^2 \geq 8 \gamma^{-2} n \max_{i\neq k} |\sip{x_i}{x_k}|$ and Assumption~\ref{a:stepsize} allow for us to apply part (b) of this lemma as follows,
\begin{align*}
y_k f(x_k;\Wt 1) &= y_k f(x_k;\Wt 1) - y_k f(x_k; \Wt 0) + f(x_k;\Wt 0) \\
&\geq y_k f(x_k;\Wt 1) - y_k f(x_k; \Wt 0) - |f(x_k;\Wt 0)| \\
&\overset{(i)}\geq \f{ \alpha \gamma^2 \rmin^2}{4 n \cdot \nicefrac{51}{49}} \cdot \hat G(\Wt 0) - \sqrt 5 \sinit \sqrt {md \log(4m/\delta)} \rmax\\
&\overset{(ii)}\geq \frac{ \gamma^2 \alpha \rmin^2}{16 n} - \sqrt 5 \sinit \sqrt {md \log(4m/\delta)} \rmax\\
&= \frac{ \gamma^2 \alpha \rmin^2}{16 n}\l[ 1 - \f{ 16\sqrt 5 \sinit Rn \sqrt {md \log(4m/\delta)}}{\gamma^2 \alpha \rmin}\r] \\
&\overset{(iii)}\geq \frac{ \gamma^2 \alpha \rmin^2}{32n}.
\end{align*}
The first term in inequality $(i)$ uses the lower bound provided in part (b) of this lemma as well as~\eqref{eq:time0.intermediate}, while the second term uses the upper bound on $|f(x_k;\Wt 0)|$ in~\eqref{eq:network.output.bounded.by.one}.  Inequality $(ii)$ uses~\eqref{eq:time0.intermediate}.  Inequality $(iii)$ uses Assumption~\ref{a:sinit} so that $\sinit \leq \alpha \gamma^2 \rmin \cdot(72 \rratio C_R n\sqrt{md\log(4m/\delta)})^{-1}$ and that $16\sqrt{5} < 36$. 
\end{proof}

We now have all of the pieces necessary to prove Lemma~\ref{lemma:loss.ratio}.  
\begin{proof}[Proof of Lemma~\ref{lemma:loss.ratio}]
In order to show that the ratio of the $g(\cdot)$ losses is bounded, it suffices to show that the ratio of exponential losses $\exp(-(\cdot))$ is bounded, since by Fact~\ref{fact:logistic.loss.ratio},
\begin{equation}\label{eq:loss.ratio.logistic.exponential}
\max_{i,j=1,\dots, n} \f{g(y_i f(x_i; \thetat t))}{g(y_j f(x_j; \Wt t))} \leq \max\l( 2, 2 \cdot \max_{i,j=1,\dots, n} \f{\exp(-y_i f(x_i; \thetat t))}{\exp(-y_j f(x_j; \Wt t))}\r).
\end{equation}

We will prove the lemma by first showing an exponential loss ratio holds at time $t=0$ and $t=1$, and then use an inductive argument based on Proposition~\ref{prop:exponential.loss.ratio.orthogonality.persistence} with $\rho = 5\rratio^2/\gamma^5+5 = \f 12 C_R$.  

By part (a) of Lemma~\ref{lemma:nn.interpolates.time1}, the exponential loss ratio at time $t=0$ is at most $\exp(2)$.  To see the loss ratio holds at time $t=1$, first note that by assumption, we have that the samples satisfy,
\begin{equation} \label{eq:samples.orthogonal.assumption}
\snorm{x_i}^2 \geq 5 \gamma^{-2} C_R n \max_{k\neq i} |\sip{x_i}{x_k}| =  2 \gamma^{-2} (25 \rratio^2 \gamma^{-2} + 25)n \max_{k\neq i} |\sip{x_i}{x_k}|.
\end{equation}
Because $\phi$ is a $\gamma$-leaky, $H$-smooth activation, by Fact~\ref{fact:leaky.gradient.persistence.orthogonality} this implies that gradient persistence (Condition~\ref{cond:gradient.persistence}) holds with $c=\gamma^2$ and near-orthogonality of gradients (Condition~\ref{cond:near.orthogonality.gradients}) holds for all times $t\geq 0$ with $C'> 2(25 \rratio^2 \gamma^{-2}+25)$.   By Assumption~\ref{a:stepsize}, we can therefore apply Lemma~\ref{lemma:exp.loss.ratio} at time $t=0$, so that we have for any $i,j$,
\begin{align*}
\frac{ \exp\big (-y_i f(x_i;\Wt {1})\big) }{ \exp\big(-y_j f(x_j;\Wt {1}) \big)} &\leq \exp(2) \cdot \exp\l(-\frac{ \lpit j0 \alpha \cgp \rmin^2 }{ n}\l (\f{ \lpit i0}{\lpit j0} - \f{R^2}{\gamma^2}  \r)\r) \\
&\quad \times \exp\l( \f { \alpha \rmax^2}{(10\rratio^2/\gamma^2+10)n} \cdot  \hat G(\Wt 0)\r)\\
&\overset{(i)}\leq \exp(2) \cdot \exp\l( \f{ \rratio^2 \rmin^2 \alpha}{n}\r) \cdot \exp\l(\f{ \alpha \rmax^2 }{(10\rratio^2/\gamma^2+10)n} \r)\\
&= \exp(2) \cdot \exp \l ( \alpha \l( \f{ \rmax^2}{n} + \f{ \rmax^2 }{(10\rratio^2/\gamma^2+10)n} \r) \r) \\
&\overset{(ii)}\leq \exp(2.1)\leq 9.
\end{align*}
Inequality $(i)$ uses that $\lpit i t < 1$, while inequality $(ii)$ uses that the step-size is sufficiently small $\alpha \leq 1/20 \rmax^2$ by Assumption~\ref{a:stepsize}.  Therefore, the exponential loss ratio at times $t=0$ and $t=1$ is at most $9\leq 5\rratio^2/\gamma^2 +5$.

Now suppose by induction that at times $\tau=1, \dots, t$, the exponential loss ratio is at most $5\rratio^2/\gamma^2+5$, and consider $t+1$.  (The cases $t=0$ and $t=1$ were just proved above.)  By the induction hypothesis and~\eqref{eq:loss.ratio.logistic.exponential},  the sigmoid loss ratio from times $0, \dots, t$ is at most $10\rratio^2/\gamma^2+10$.  By Assumption~\ref{a:stepsize}, the step-size satisfies
\[ \alpha \leq \gamma^2[5n \rmax^2\rratio^2 (10 \rratio^2\gamma^{-2} + 10)\max(1,H)]^{-1} \leq \gamma^2[2 H C_R \rmax^2\rratio^2 n]^{-1}.\]
Further, the samples satisfy~\eqref{eq:samples.orthogonal.assumption}, so that
\[ \snorm{x_k}^2 \geq 2 \gamma^{-2} (10 \rratio^2\gamma^{-2} + 10) n \max_{i\neq k} |\sip{x_i}{x_k}| = 2 \gamma^{-2} C_R n \max_{i\neq k} |\sip{x_i}{x_k}|.\] 
Thus all parts of Lemma~\ref{lemma:nn.interpolates.time1} hold with $C_R'=C_R=10\rratio^2 \gamma^{-2}+10$.  By part (b) of that lemma, the unnormalized margin for each sample increased for every time $\tau=0, \dots, t$:
\begin{equation}\label{eq:unnormalized.margin.increases}  \text{for all $\tau=1,\dots, t,$}\quad y_k [f(x_k;\Wt {\tau+1}) - f(x_k;\Wt \tau)] > 0.
\end{equation}
Since the network interpolates the training data at time $t=1$ by part (c) of Lemma~\ref{lemma:nn.interpolates.time1}, this implies
\[ \quad \text{for all $\tau=1,\dots, t,$}\quad y_k f(x_k;\Wt \tau) > 0.\]
Finally, since the learning rate satisfies $\alpha \leq \gamma^2[5 n\rmax^2\rratio^2 C_R\max(1,H)]^{-1}$, 
all of the conditions necessary to apply Proposition~\ref{prop:exponential.loss.ratio.orthogonality.persistence}  hold.  This proposition shows that the exponential loss ratio at time $t+1$ is at most $5\rratio^2/\gamma^2+5$.  This completes the induction so that the exponential loss ratio is at most $5\rratio^2/\gamma^2+5$ throughout gradient descent, which by~\eqref{eq:loss.ratio.logistic.exponential} implies that the sigmoid loss ratio is at most $10\rratio^2/\gamma^2+10$. 
\end{proof}

\subsection{Upper bound for the Frobenius norm}\label{sec:gdfro}
In this section we prove an upper bound for the Frobenius norm of the first-layer weights (recall that $\stablerank(W) = \snorm{W}_F^2 / \snorm{W}_2^2$).   Our proof follows by first bounding the Frobenius norm up to time $t$ using the triangle inequality,
\begin{align*}
    \snorm{\Wt t}_F &\leq \snorm{\Wt 0}_F + \summm s 0 {t-1} \snorm{\Wt {s+1}-\Wt s}_F = \snorm{\Wt 0}_F + \alpha \summm s 0 {t-1} \snorm{\nabla \hat L(\Wt s)}_F.
\end{align*}
The standard approach from here is to bound the gradient norm as follows,
\[ \snorm{\nabla \hat L(\Wt t)}_F = \norm{\f 1 n \summ i n \lpit it y_i \nabla f(x_i;\Wt t)}_F \leq \f 1 n \summ i n \lpit it \snorm{\nabla f(x_i;\Wt t)}_F \leq \rmax \hat G(\Wt t),\] 
where the last inequality uses that $\snorm{\nabla f(x_i;\Wt t)}_F^2 = \snorm{x_i}^2 \summ j m a_j^2 \phi'(\sip{\wt t_j}{x_i})^2 \leq \rmax^2$.  When the initialization scale $\sinit$ is small, this results in an upper bound for the Frobenius norm of the form,
\[ \snorm{\Wt t}_F \leq \snorm{\Wt 0}_F + \alpha \rmax \summm s 0 {t-1} \hat G(\Wt s) \lesssim  \alpha \rmax \summm s 0 {t-1} \hat G(\Wt s).\]
However, this bound for the Frobenius norm leads to a stable rank bound that grows with $n$ (compare with the lower bound for the spectral norm in Lemma~\ref{lemma:wj.direction.hatmu}).  
In Lemma~\ref{lemma:grad.norm.ub}, we prove a tighter upper bound on the Frobenius norm that implies that the stable rank is at most an absolute constant.  Our proof uses the loss ratio bound of Lemma~\ref{lemma:loss.ratio} to develop a sharper upper bound for $\snorm{\nabla \hat L(\Wt t)}_F$, using a similar approach to that of~\citet[Lemma 4.10]{frei2022benign}. 

\begin{restatable}{lemma}{parameternormbound}\label{lemma:grad.norm.ub}  Let $\rmin= \min_i \snorm{x_i}$, $\rmax := \max_i \snorm{x_i}$, $\rratio = \rmax/\rmin$, and denote $C_R=10\rratio^2/\gamma^2+10$ as the upper bound on the sigmoid loss ratio from Lemma~\ref{lemma:loss.ratio}.  Suppose that for all $i\in [n]$ the training data satisfy,
\[ \snorm{x_i}^2 \geq 5\gamma^{-2}C_R n \max_{k\neq i} |\sip{x_i}{x_k}|.\]
Then under Assumptions~\ref{a:stepsize} and~\ref{a:sinit}, with probability at least $1-\delta$, for any $t\geq 1$,
\begin{align*}
    \lv \Wt  t\rv_F \le \lv \Wt 0\rv_F +\f{ \sqrt{2 C_R}  \rmax  \alpha }{\sqrt n} \sum_{s=0}^{t-1}\hat{G}(W^{(s)}).
\end{align*}
\end{restatable}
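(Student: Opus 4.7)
The plan is to apply the triangle inequality to the gradient descent recursion to obtain
\begin{equation*}
\|W^{(t)}\|_F \;\le\; \|W^{(0)}\|_F + \alpha \sum_{s=0}^{t-1} \|\nabla \hat{L}(W^{(s)})\|_F,
\end{equation*}
and then to prove the pointwise estimate $\|\nabla \hat{L}(W)\|_F \le \sqrt{2C_R}\, r_{\max}\, \hat{G}(W)/\sqrt{n}$ at every $W$ along the trajectory, which when summed yields the lemma. The naive bound $\|\nabla \hat L(W)\|_F \le r_{\max} \hat G(W)$ would be off by a factor $\sqrt{n/C_R}$, so obtaining the sharper estimate (which is what converts the Frobenius norm bound into a constant stable rank) is the main content of the proof.

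To prove the pointwise bound I would expand
\begin{equation*}
\|\nabla \hat{L}(W)\|_F^2 \;=\; \frac{1}{n^2}\sum_{i,j} g_i g_j \, y_i y_j\, \langle \nabla f(x_i;W), \nabla f(x_j; W) \rangle_F,
\end{equation*}
where $g_i := g(y_i f(x_i; W))$. Since $\phi' \in [\gamma, 1]$, the Frobenius inner product factors as $\langle \nabla f(x_i;W), \nabla f(x_j;W)\rangle_F = \langle x_i, x_j\rangle \cdot \tfrac{1}{m}\sum_k \phi'(\langle w_k, x_i\rangle)\phi'(\langle w_k, x_j\rangle)$ with scalar factor in $[\gamma^2, 1]$, so $\|\nabla f(x_i;W)\|_F^2 \le r_{\max}^2$ on the diagonal and $|\langle \nabla f(x_i;W), \nabla f(x_j;W)\rangle_F| \le |\langle x_i, x_j\rangle|$ off-diagonal. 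For the diagonal sum, Lemma~\ref{lemma:loss.ratio} gives $\max_i g_i \le C_R \min_j g_j \le C_R \hat{G}(W)$, so
\begin{equation*}
\frac{1}{n^2} \sum_i g_i^2 \|\nabla f(x_i; W)\|_F^2 \;\le\; \frac{r_{\max}^2 (\max_i g_i)}{n^2}\sum_i g_i \;\le\; \frac{C_R r_{\max}^2}{n}\hat{G}(W)^2.
\end{equation*}
For the off-diagonal sum, I would use $\sum_{i\ne j} g_i g_j |\langle x_i, x_j\rangle| \le (\sum_i g_i)^2 \max_{i\ne j}|\langle x_i, x_j\rangle| = n^2 \hat{G}(W)^2 \max_{i\ne j}|\langle x_i, x_j\rangle|$, and apply the near-orthogonality hypothesis together with $\gamma \le 1$ and $r_{\min}\le r_{\max}$ to conclude $\max_{i\ne j}|\langle x_i, x_j\rangle| \le r_{\max}^2/(5 C_R n)$. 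Summing the two pieces yields $\|\nabla \hat L(W)\|_F^2 \le (C_R + 1/(5C_R))\, r_{\max}^2 \hat{G}(W)^2 / n \le 2 C_R r_{\max}^2 \hat{G}(W)^2/n$ (the last inequality uses $C_R \ge 20$), and the square root finishes the pointwise bound.

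The only subtlety is that Lemma~\ref{lemma:loss.ratio} is a high-probability statement requiring Assumptions~\ref{a:stepsize} and~\ref{a:sinit} to guarantee the loss ratio bound throughout the entire trajectory; since these assumptions are exactly the hypotheses of the present lemma, its conclusion holds on the same event of probability at least $1-\delta$. Beyond invoking the earlier lemma, the argument is pure bookkeeping---the conceptual work is already done by the loss ratio bound, which replaces the crude estimate $\max_i g_i \le 1$ by $\max_i g_i \le C_R \hat G(W)$ and is precisely what saves the $\sqrt{n}$ factor.
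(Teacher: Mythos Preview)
Your proof is correct and follows essentially the same approach as the paper: apply the triangle inequality to the gradient descent recursion, then bound $\|\nabla \hat L(W^{(s)})\|_F^2$ by expanding into diagonal and off-diagonal terms and using the loss ratio bound together with near-orthogonality to gain the crucial $1/n$ factor. The only cosmetic difference is in the off-diagonal bookkeeping: the paper factors out $(g_i^{(s)})^2$ and invokes the loss ratio bound a second time to control $\sum_{k\neq i} (g_k^{(s)}/g_i^{(s)})|\langle x_i,x_k\rangle|$ before applying near-orthogonality, whereas you simply bound $\sum_{i\neq j} g_i g_j \le (\sum_i g_i)^2$ and then use near-orthogonality directly, which is slightly cleaner since it avoids a second appeal to Lemma~\ref{lemma:loss.ratio}.
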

\begin{proof} We prove an upper bound on the $\ell_2$ norm of each neuron and then use this to derive an upper bound on the Frobenius norm of the first layer weight matrix.  First note that the lemma's assumptions guarantee that Lemma~\ref{lemma:loss.ratio} holds.  Next, by the triangle inequality, we have 
\begin{align*}
    \snorm{\wt t_j} & = \left\lv \wt 0_j + \alpha \sum_{s=0}^{t-1} \nabla_j \hat L(\Wt s)\right\rv_F  \le \lv \wt 0_j\rv +  \alpha \sum_{s=0}^{t-1}\lv \nabla_j \hat L(\Wt s)\rv_F 
    . \numberthis \label{e:refined_upper_norm_bound_step1}
\end{align*}
We now consider the squared gradient norm with respect to the $j$-th neuron: 
\begin{align*}
    &\lv \nabla_j \hat{L}(W^{(s)})\rv^2\\&\qquad = \frac{1}{n^2} \left\lv \sum_{i=1}^n \lpit is y_i \nabla_j f(x_i; \Wt s)\right\rv^2 \\
    & \qquad= \frac{1}{n^2} \left[\sum_{i=1}^n \left(\lpit is\right)^2 \left\lv \nabla_j f(x_i; \Wt s) \right\rv^2 +\sum_{k\neq i \in [n]}\lpit is \lpit ks y_iy_j\langle \nabla_j f(x_i; \Wt s),\nabla_j f(x_k; \Wt s) \rangle \right] \\
    & \qquad\le \frac{1}{n^2} \left[\sum_{i=1}^n \left(\lpit is \right)^2 \left\lv \nabla_j f(x_i; \Wt s) \right\rv^2 +\sum_{k\neq i \in [n]} \lpit is \lpit ks \left\lvert \langle \nabla_j f(x_i; \Wt s),\nabla_j f(x_k; \Wt s) \rangle\right\rvert  \right] \\
    &\qquad\overset{(i)}{\le} \frac{a_j^2}{n^2} \left[ \sum_{i=1}^n \left(\lpit is \right)^2 \phi'(\sip{\wt t_j}{x_i})^2 \snorm{x_i}^2 + \sum_{k\neq i \in [n]} \lpit is \lpit ks \phi'(\sip{\wt t_j}{x_i})\phi'(\sip{\wt t_j}{x_k}) |\sip{x_i}{x_k}| \right] \\
    &\qquad\overset{(ii)}{\le} \frac{a_j^2}{n^2} \left[ \sum_{i=1}^n \left(\lpit is \right)^2 \snorm{x_i}^2 + \sum_{k\neq i \in [n]} \lpit is \lpit ks |\sip{x_i}{x_k}| \right] \\
    &\qquad= \frac{a_j^2}{n^2} \summ i n \l( \l (\lpit is\r)^2 \l[ \snorm{x_i}^2 + \sum_{k\neq i} \f{ \lpit ks }{ \lpit is }|\sip{x_i}{x_k}| \r] \r)  \\
    &\qquad \overset{(iii)}\leq \frac{a_j^2}{n^2} \summ i n \l( \l (\lpit is\r)^2 \l[ \snorm{x_i}^2 + C_R \sum_{k\neq i} |\sip{x_i}{x_k}| \r] \r) \\
    &\qquad \overset{(iv)}\leq \frac{2a_j^2}{n^2} \summ i n \l (\lpit is\r)^2 \snorm{x_i}^2.
\end{align*}
Above, inequality $(i)$ uses that $\nabla_j f(x_i;W) = a_j \phi'(\sip{w_j}{x_i}) x_i$.  Inequality $(ii)$ uses that $\phi$ is 1-Lipschitz.   Inequality $(iii)$ uses the loss ratio bound in Lemma~\ref{lemma:loss.ratio}, and inequality $(iv)$ uses the lemma's assumption about the near-orthogonality of the samples.  We can thus continue,
\begin{align*}
        \lv \nabla_j \hat{L}(W^{(s)})\rv^2 & \le \f{ 2 a_j^2}{n^2}\summ i n \l( \lpit is \r)^2 \snorm{x_i}^2 \\
        &\leq \f{ 2a_j^2 \rmax^2}{n^2} \cdot \l( \max_k \lpit ks \r) \cdot \summ i n \lpit is \\
        &= \f{ 2 a_j^2 \rmax^2}{n} \cdot \l( \max_k \lpit ks \r) \hat G(\Wt s) \\
        &\overset{(i)} \leq \f{ 2 a_j^2 \rmax^2 C_R}n \l (\hat G(\Wt s) \r)^2 .\numberthis \label{eq:nablaj.hatl.squared}
\end{align*}
The final inequality uses the loss ratio bound so that we have
\begin{align*}
    \max_{k\in [n]}\lpit ks = \f 1 n \summ i n \l( \f{ \max_k \lpit ks}{\lpit is} \lpit is\r) \le \frac{C_R}{n} \sum_{i=1}^n \lpit is = C_R \hat{G}(W^{(s)}).
\end{align*}
Finally, taking square roots of~\eqref{eq:nablaj.hatl.squared} and applying this bound on the norm in Inequality~\eqref{e:refined_upper_norm_bound_step1} above we conclude that
\begin{align*}
    \lv \wt t_j\rv \le \lv \wt 0_j\rv + \f{ \sqrt{2 C_R} |a_j| \rmax \alpha }{\sqrt n} \sum_{s=0}^{t-1}\hat{G}(W^{(s)}),
\end{align*}
establishing our claim for the upper bound on $\snorm{\wt t_j}$.  For the bound on the Frobenius norm, we have an analogue of~\eqref{e:refined_upper_norm_bound_step1},
\[ \snorm{\Wt t}_F \leq \snorm{\Wt 0}_F + \alpha \summm s 0 {t-1} \snorm{\nabla \hat L(\Wt s)}_F,\]
and we can simply use that $a_j^2=1/m$ and
\[ \snorm{\nabla \hat L(\Wt s)}_F^2 = \summ j m \snorm{\nabla_j \hat L(\Wt s)}_F^2 .\]
\end{proof}

\subsection{Lower bound for the spectral norm}\label{sec:gdspec}

We next show that the spectral norm is large.  The proof follows by showing that after the first step of gradient descent, every neuron is highly correlated with the vector $\hat \mu := \summ i n y_i x_i$.  

\begin{lemma}\label{lemma:wj.direction.hatmu}
Let $\rmax = \max_i \snorm{x_i}$, $\rmin = \min_i \snorm{x_i}$ and $R := \rmax/\rmin$.  Let $C_R=10R^2\gamma^{-2}+10$.  Suppose that for all $i\in [n]$ the training data satisfy,
\[\snorm{x_i}^2 \geq 5 \gamma^{-2} C_R n \max_{k\neq i} |\sip{x_i}{x_k}|.\]
Then, under Assumptions~\ref{a:stepsize} and~\ref{a:sinit}, with probability at least $1-\delta$, 
we have the following lower bound for the spectral norm of the weights for any $t\geq 1$:
\begin{align*}
    \snorm{\Wt t}_2 \geq \f{  \alpha \gamma \rmin}{4\sqrt 2 R \sqrt n  } \summm s 0 {t-1} \hat G(\Wt s).
\end{align*}
\end{lemma}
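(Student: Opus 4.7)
My plan is to lower-bound the spectral norm by evaluating $\snorm{\Wt t}_2 \geq \snorm{\Wt t \bar\mu}$ along the unit vector $\bar\mu := \hat\mu/\snorm{\hat\mu}$, where $\hat\mu := \summ i n y_i x_i$. Since $\snorm{\Wt t \bar\mu}^2 = \summ j m \sip{\wt t_j}{\bar\mu}^2$, the $1/\sqrt m$ factor from $a_j = \pm 1/\sqrt m$ inside the neuron update will exactly cancel the $\sqrt m$ from summing $m$ such quantities, so the heart of the argument is a uniform lower bound on $|\sip{\wt t_j}{\bar\mu}|$ of order $\alpha \gamma \rmin(R\sqrt{mn})^{-1} \summ s {} \hat G(\Wt s)$.

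First, I would show that each label-weighted sample $y_i x_i$ is strongly aligned with $\bar\mu$. Expanding, $\sip{y_i x_i}{\hat\mu} = \snorm{x_i}^2 + \sum_{k\neq i} y_iy_k\sip{x_i}{x_k} \geq \rmin^2 - n\max_{k\neq i}|\sip{x_i}{x_k}|$, and the theorem's data assumption $\rmin^2 \geq 5\gamma^{-2}C_R n\max_{i\neq j}|\sip{x_i}{x_j}|$ (with $C_R \geq 10$, $\gamma \leq 1$) forces the correction term to be at most $\rmin^2/50$. A similar bookkeeping gives $\snorm{\hat\mu}^2 \leq n\rmax^2(1+\tfrac{1}{50})$, so $\sip{y_i x_i}{\bar\mu} \geq c_0\rmin/(R\sqrt n)$ for some explicit constant $c_0$ close to $1$.

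Next, I would unroll the GD recursion $\wt{s+1}_j = \wt s_j + \tfrac{\alpha a_j}{n}\summ i n \lpit is \phi'(\sip{\wt s_j}{x_i}) y_i x_i$ and project onto $\bar\mu$:
\begin{equation*}
\sip{\wt t_j}{\bar\mu} = \sip{\wt 0_j}{\bar\mu} + \tfrac{\alpha a_j}{n}\summm s 0 {t-1} \summ i n \lpit is \phi'(\sip{\wt s_j}{x_i}) \sip{y_i x_i}{\bar\mu}.
\end{equation*}
Every factor in the double sum is nonnegative, and using $\phi' \geq \gamma$, the step-two lower bound on $\sip{y_i x_i}{\bar\mu}$, $|a_j|=1/\sqrt m$, and $\tfrac 1 n \summ i n \lpit is = \hat G(\Wt s)$, the magnitude of the shift from initialization is at least $\tfrac{c_0 \alpha \gamma \rmin}{R\sqrt{mn}}\summm s 0 {t-1}\hat G(\Wt s)$, with the sign determined by $a_j$.

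Finally, I would kill the initialization term. Lemma~\ref{lemma:initialization} gives $|\sip{\wt 0_j}{\bar\mu}| \leq 2\sinit\sqrt{\log(4m/\delta)}$, and Assumption~\ref{a:sinit} then bounds this by $\tfrac{\alpha\gamma^2\rmin}{36 R C_R n \sqrt{md}}$. Using $\hat G(\Wt 0) \geq 0.49$ (shown in the proof of Lemma~\ref{lemma:nn.interpolates.time1}(c)), the shift at any $t\geq 1$ dominates: since $\gamma/(C_R n\sqrt{d/n})$ is trivially much smaller than $1$, the initialization term is at most half of the shift. The reverse triangle inequality then yields $|\sip{\wt t_j}{\bar\mu}| \geq \tfrac{c_1 \alpha \gamma \rmin}{R\sqrt{mn}}\summm s 0 {t-1} \hat G(\Wt s)$ for some absolute constant $c_1$ comfortably larger than $1/(4\sqrt 2)$, and squaring and summing over $j\in[m]$ completes the bound. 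The main subtlety is purely bookkeeping: ensuring the initialization alignment along $\bar\mu$ is quantitatively smaller than the guaranteed post-first-step shift, which is exactly what Assumption~\ref{a:sinit} is tuned for; no loss-ratio machinery is needed here since $\lpit is \geq 0$ suffices.
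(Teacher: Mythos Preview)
Your proposal is correct and follows essentially the same route as the paper: project each neuron onto the direction of $\hat\mu=\summ i n y_i x_i$, use near-orthogonality plus $\phi'\geq\gamma$ to show each gradient step pushes $\sip{\wt t_j}{\hat\mu}$ by at least $\tfrac{\alpha\gamma|a_j|\rmin^2}{2}\hat G(\Wt s)$ (with sign $a_j$), telescope, absorb the initialization term via Lemma~\ref{lemma:initialization} and Assumption~\ref{a:sinit}, and then sum squares over $j$ to lower-bound $\snorm{\Wt t\hat\mu/\snorm{\hat\mu}}$. The only cosmetic difference is that you normalize to $\bar\mu$ up front whereas the paper carries $\hat\mu$ and divides by $\snorm{\hat\mu}\leq\sqrt{2n}\,\rmax$ at the end.
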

\begin{proof}
We shall show that every neuron is highly correlated with the vector $\hat \mu := \summ i n y_i x_i$.  By definition,
\begin{align*}
    \sip{\wt {t+1}_j - \wt t_j}{\hat \mu} &= \f {\alpha a_j}{n} \summ i n \lpit it  \phi'(\sip{\wt t_j}{x_i}) \ip{y_i x_i}{\summ k n y_k x_k} \\
    &= \f {\alpha a_j}{n} \summ i n \lpit it \phi'(\sip{\wt t_j}{x_i}) \l[ \snorm{x_i}^2 + \sum_{k \neq i} \sip{y_i x_i}{ y_k x_k} \r].
\end{align*}
\paragraph{Positive neurons.}  If $a_j>0$, then we have,
\begin{align*}
    \sip{\wt {t+1}_j - \wt t_j}{\hat \mu} &\geq \f {\alpha |a_j|}{n} \summ i n \lpit it \phi'(\sip{\wt t_j}{x_i}) \l[ \snorm{x_i}^2 - \sum_{k \neq i} |\sip{x_i}{ x_k}| \r]\\
    &\overset{(i)} \geq \f {\alpha |a_j|}{n} \summ i n \lpit it \phi'(\sip{\wt t_j}{x_i}) \cdot \f 1 2 \snorm{x_i}^2\\
    &\geq \f {\alpha |a_j|\rmin^2}{2 n} \summ i n \lpit it \phi'(\sip{\wt t_j}{x_i})\\
    &\overset{(ii)} \geq \f{ \alpha \gamma  |a_j|\rmin^2}{2} \hat G(\Wt t).
\end{align*}  
Inequality $(i)$ uses the lemma's assumption that $\snorm{x_i}^2 \gg n\max_{k\neq i}|\sip{x_i}{x_k}|$.  
Inequality $(ii)$ uses that $\phi$ is $\gamma$-leaky and $\lpit it \geq 0$.  
Telescoping, we get
\begin{align*}\numberthis \label{eq:posneuron.ip.lb}
    \sip{\wt t_j - \wt 0_j}{\hat \mu} &\geq \f{ \alpha \gamma |a_j| \rmin^2 }{2} \summm s 0 {t-1} \hat G(\Wt s) = \f{ \alpha \gamma \rmin^2}{2 \sqrt m} \summm s0{t-1} \hat G(\Wt s).
\end{align*}
We now show that we can ignore the $\sip{\wt 0_j}{\hat \mu}$ term by taking $\alpha$ large relative to $\sinit$.  
By the calculation in~\eqref{eq:network.output.bounded.by.one}, we know that $|f(x_i;\Wt 0)| \leq 1$ for each $i$ and thus 
\begin{equation} \label{eq:hatg0.lb}
\hat G(\Wt 0) = \f 1 n \summ i n \f 1{1+\exp(-y_i f(x_i;\Wt 0))} \geq 1/4.
\end{equation}
On the other hand, by Lemma~\ref{lemma:initialization}, we know that 
\[ |\sip{\wt 0_j}{\hat \mu}| \leq 2 \sinit \snorm{\hat \mu} \sqrt{\log(4m/\delta)}.\]
By the lemma's assumption that $\snorm{x_i}^2 \gg n \max_{k\neq i} |\sip{x_i}{x_k}|$, we have
\begin{align*}
    \snorm{\hat \mu}^2 = \summ i n\l[  \snorm{x_i}^2 + \sum_{k:k\neq i}^n \sip{y_i x_i}{y_k x_k} \r] \leq \summ i n \l[ \snorm{x_i}^2 + n \max_{k\neq i}|\sip{x_i}{x_k}|\ \r] \leq  2n \rmax^2.\numberthis \label{eq:hatmu.norm}
\end{align*}
Substituting this inequality into the previous display, we get
\begin{equation} \label{eq:wt0j.hatmu.ub}
|\sip{\wt 0_j}{\hat \mu}| \leq 4 \rmax \sinit \sqrt{n\log(4m/\delta)}.
\end{equation} 

We thus have
\begin{align*}
    \f{ \alpha \gamma \rmin^2}{2 \sqrt m} \hat G(\Wt 0) &\overset{(i)}\geq \f{ \alpha \gamma \rmin^2}{8  \sqrt m} \\
    &\overset{(ii)}\geq 8 \rmax \sinit \sqrt{n \log(4m/\delta)}\\
    &\overset{(iii)}\geq 2 |\sip{\wt 0_j}{\hat \mu}|. \numberthis \label{eq:wt0.hatmu.small}
\end{align*}
where $(i)$ uses~\eqref{eq:hatg0.lb}, $(ii)$ uses Assumption~\ref{a:sinit} and that $C_R>1$ so that,
\[ \alpha \geq 64 \sinit \gamma^{-1} C_R (\rmax/\rmin^2) \sqrt{nm \log(4m/\delta)}= 64 \sinit \gamma^{-1} C_R (\rratio/\rmin) \sqrt{nm\log(4m/\delta)},\] 
and $(iii)$ uses~\eqref{eq:wt0j.hatmu.ub}.    Continuing from~\eqref{eq:posneuron.ip.lb} we get
\begin{align*}
    \sip{\wt t_j}{\hat \mu} &\geq  \sip{\wt t_j - \wt 0_j}{\hat \mu} - |\sip{\wt 0_j}{\hat \mu}| \\
    &\geq \f{ \alpha \gamma |a_j| \rmin^2 }{2} \summm s 0 {t-1} \hat G(\Wt s) -  |\sip{\wt 0_j}{\hat \mu}| \\
    &\geq \f{ \alpha \gamma |a_j| \rmin^2 }{4} \summm s 0 {t-1} \hat G(\Wt s),\numberthis \label{eq:wtj.hatmu.pos.final}
\end{align*}
where the last inequality uses~\eqref{eq:wt0.hatmu.small}. 
\paragraph{Negative neurons.}  The argument in this case is essentially identical.  If $a_j<0$, then 
\begin{align*}
    \sip{\wt {t+1}_j - \wt t_j}{\hat \mu} &\leq -\f {\alpha |a_j|}{n} \summ i n \lpit it \phi'(\sip{\wt t_j}{x_i}) \l[ \snorm{x_i}^2 - \sum_{k \neq i} |\sip{x_i}{ x_k}| \r]\\
    &\overset{(i)} \leq -\f {\alpha |a_j| \rmin^2}{2 n} \summ i n \lpit it \phi'(\sip{\wt t_j}{x_i})\\ 
    &\overset{(ii)} \leq -\f{ \alpha\gamma  |a_j| \rmin^2}{2} \hat G(\Wt t),
\end{align*}
where the inequalities $(i)$ and $(ii)$ follow using an identical logic to the positive neuron case.  We therefore have for negative neurons, 
\begin{align*}\numberthis \label{eq:negneuron.ip.lb}
    \sip{\wt t_j - \wt 0_j}{\hat \mu} &\leq -\f{ \alpha |a_j| \gamma  \rmin^2}{2}  \summm s 0 {t-1} \hat G(\Wt s).
\end{align*}
An identical argument used for the positive neurons to derive~\eqref{eq:wtj.hatmu.pos.final} shows that for negative neurons we have $\sip{\wt t_j}{-\hat \mu} \geq \frac 14 \alpha |a_j| \gamma \rmin^2 \summm s 0 {t-1} \hat G(\Wt s)$ and hence
\begin{equation}\label{eq:wtj.hatmu.abs.lb}
   \text{for all $j\in [m]$ and $t\geq 1$},\quad |\sip{\wt t_j}{\hat \mu}| \geq \frac 14 \alpha |a_j| \gamma \rmin^2 \summm s 0 {t-1} \hat G(\Wt s).
\end{equation}

To see the claim about the spectral norm, first note that since $\rmin>0$, $|\sip{\wt t_j}{\hat \mu}| > 0$ and hence $\hat \mu \neq 0$.  We thus can calculate, 
\begin{align*}
    \snorm{\Wt t}_2^2 &\geq \snorm{\Wt t \hat \mu / \snorm{\hat \mu} }_2^2 \\
    &= \snorm{\hat \mu}^{-2} \summ j m \sip{\wt t_j}{\hat \mu}^2 \\
    &\overset{(i)}\geq \snorm{\hat \mu}^{-2} \summ j m \l( \f{ \alpha \gamma |a_j| \rmin^2}{4 } \summm s 0 {t-1} \hat G(\Wt s) \r)^2\\
    &=\snorm{\hat \mu}^{-2} \l( \f{ \alpha \gamma \rmin^2}{4  } \summm s 0 {t-1} \hat G(\Wt s) \r)^2\\
    &\overset{(ii)}\geq \l( \f{ \alpha \gamma \rmin^2}{4\sqrt 2  \rmax \sqrt n} \summm s 0 {t-1} \hat G(\Wt s) \r)^2
\end{align*}
Inequality $(i)$ uses~\eqref{eq:wtj.hatmu.abs.lb} and inequality $(ii)$ uses the upper bound for $\snorm{\hat \mu}$ given in~\eqref{eq:hatmu.norm}.    This completes the proof. 
\end{proof}

\subsection{Proxy PL inequality}\label{sec:gdproxypl}
Our final task for the proof of Theorem~\ref{thm:gdstablerank} is to show that $\hat L(\Wt t) \to 0$.  We do so by establishing a variant of the Polyak--Lojasiewicz (PL) inequality called a \textit{proxy PL inequality}~\citep[Definition 1.2]{frei2021proxyconvex}.
\begin{lemma}\label{lemma:proxy.pl}
Let $\rmax = \max_i \snorm{x_i}$, $\rmin = \min_i \snorm{x_i}$, and $\rratio := \rmax/\rmin$.  Let $C_R = 10R^2 \gamma^{-2} +10$.  Suppose the training data satisfy, for all $i\in [n]$,
\[ \snorm{x_i}^2 \geq 5 \gamma^{-2} C_R n \max_{k\neq i} |\sip{x_i}{x_k}|.\]
For a $\gamma$-leaky activation, the following proxy-PL inequality holds for any $t\geq 0$:
\[ \norm{\nabla \hat L(\Wt t)} \geq \f{ \gamma \rmin}{2\sqrt 2 R \sqrt n} \hat G(\Wt t).\]
\end{lemma}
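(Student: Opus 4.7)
The plan is to mimic the proxy-PL argument used in the proof of Theorem~\ref{thm:convergence leaky ReLU} (Appendix~\ref{app:proof of convergence leaky ReLU}), adapting it to the discrete-time / smoothed-activation setting. The key observation is that under the near-orthogonality assumption, the vector $\hat\mu := \sum_{i=1}^n y_i x_i$ is a good linear separator for the training data, and this can be lifted into a ``test direction'' in parameter space against which $\nabla \hat L(\Wt t)$ has large inner product.

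First, I would verify that $\hat \mu$ separates the data with a quantitative margin. For any $k \in [n]$, expanding $\langle \hat\mu, y_k x_k\rangle = \snorm{x_k}^2 + \sum_{i\neq k}\sip{y_i x_i}{y_k x_k}$ and using the hypothesis $\snorm{x_i}^2 \ge 5\gamma^{-2} C_R n\max_{i\neq j}|\sip{x_i}{x_j}|$ together with $C_R>1$ gives $\langle \hat\mu, y_k x_k\rangle \ge \tfrac{1}{2}\snorm{x_k}^2 \ge \tfrac{1}{2}\rmin^2$. A similar computation bounds $\snorm{\hat\mu}^2 \le \sum_i \snorm{x_i}^2 + n^2\max_{i\neq j}|\sip{x_i}{x_j}| \le 2n\rmax^2$ (exactly as in the calculation leading to~\eqref{eq:hatmu.norm}). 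Combining the two yields the per-sample inequality $y_k \sip{\hat\mu/\snorm{\hat\mu}}{x_k} \ge \rmin/(2\sqrt{2n}\,R)$.

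Next, I would build the matrix $Z \in \R^{m\times d}$ whose $j$-th row is $z_j := a_j\,\hat\mu/\snorm{\hat\mu}$. Because $a_j^2=1/m$, one has $\snorm{Z}_F^2 = \sum_j a_j^2 = 1$. A direct chain-rule expansion gives
\begin{equation*}
y_k \sip{\nabla f(x_k;W)}{Z} = \sum_{j=1}^m a_j^2\, \phi'(\sip{w_j}{x_k})\, y_k\sip{\hat\mu/\snorm{\hat\mu}}{x_k} \;\ge\; \gamma \cdot \frac{\rmin}{2\sqrt{2n}\,R},
\end{equation*}
using that $\phi$ is $\gamma$-leaky so $\phi'\ge \gamma$, and the per-sample margin bound above. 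Note this bound holds uniformly in $W$, which is what makes it useful throughout gradient descent.

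Finally, by Cauchy--Schwarz in Frobenius inner product, $\snorm{\nabla \hat L(W)}_F = \snorm{\nabla \hat L(W)}_F \cdot \snorm{Z}_F \ge \langle -\nabla\hat L(W),\, Z\rangle$. Expanding the gradient via $\nabla \hat L(W) = -\tfrac{1}{n}\sum_k g(y_k f(x_k;W))\, y_k\nabla f(x_k;W)$ (where $g=-\ell'$) and applying the lower bound from the previous step termwise gives
\begin{equation*}
\snorm{\nabla \hat L(W)}_F \;\ge\; \frac{1}{n}\sum_{k=1}^n g(y_k f(x_k;W)) \cdot \frac{\gamma\rmin}{2\sqrt{2n}\,R} \;=\; \frac{\gamma\rmin}{2\sqrt{2}\,R\sqrt{n}}\, \hat G(W),
\end{equation*}
which is exactly the claimed inequality. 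There is no real obstacle here: the argument is essentially the gradient-flow proxy-PL proof transplanted to a generic $W$, and the only non-trivial step is the margin/norm control on $\hat\mu$, which is immediate from the near-orthogonality assumption.
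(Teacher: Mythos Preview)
Your proposal is correct and follows essentially the same approach as the paper's proof: both build the unit-Frobenius test matrix with rows $a_j\hat\mu/\snorm{\hat\mu}$, use the near-orthogonality assumption to get $y_k\sip{\hat\mu}{x_k}\ge \tfrac12\snorm{x_k}^2$ and $\snorm{\hat\mu}^2\le 2n\rmax^2$, and then combine $\phi'\ge\gamma$ with Cauchy--Schwarz to obtain the stated inequality. The only difference is cosmetic ordering (you isolate the per-sample margin bound first, the paper folds it into the computation of $y_i\sip{\nabla f(x_i;W)}{V}$).
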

\begin{proof}
By definition, for any matrix $V\in \R^{m\times d}$ with $\snorm{V}_F\leq 1$ we have
\begin{align*}
    \snorm{\nabla \hat L(W)} \geq \sip{\nabla \hat L(W)}{-V} = \f 1 n \summ i n \lpit it y_i \sip{\nabla f(x_i;W)}{V}.
\end{align*}
Let $\hat \mu := \summ i n y_i x_i$ and define the matrix $V$ as having rows $a_j \hat \mu/\snorm{\hat \mu}$.  Then, $\snorm{V}_F^2 = \summ j m a_j^2 = 1$, and we have for each $j$,
\begin{align*}
    y_i \sip{\nabla f(x_i;W)}{V} &= \f 1 m \summ j m \phi'(\sip{w_j}{x_i}) \sip{y_i x_i}{\hat \mu / \snorm{\hat \mu}} \\
    &= \f{1}{\snorm{\hat \mu} m} \summ j m \phi'(\sip{w_j}{x_i}) \l[ \snorm{x_i}^2 + \sum_{k\neq i} \sip{y_ix_i}{y_kx_k} \r]\\
    &\overset{(i)}\geq \f{1}{\snorm{\hat \mu} m} \summ j m \phi'(\sip{w_j}{x_i}) \cdot \f 12 \snorm{x_i}^2\\
    &\overset{(ii)}\geq \f{\rmin^2 }{2\snorm{\hat \mu} m} \summ j m \phi'(\sip{w_j}{x_i}) \\
    &\overset{(iii)}\geq \f{\gamma \rmin^2}{2\snorm{\hat \mu}}.
\end{align*}
Inequality $(i)$ uses the lemma's assumption that $\snorm{x_i}^2 \gg n \max_{k\neq i} |\sip{x_i}{x_k}|$.  Inequality $(ii)$ uses that $\snorm{x_i}^2 \geq \rmin^2$, and inequality $(iii)$ uses that $\phi'(z)\geq \gamma$.  We therefore have,
\begin{align*}
     \snorm{\nabla \hat L(\Wt t)}&\geq \f 1 n \summ in \lpit it y_i \sip{\nabla f(x_i;\thetat t)}{V} \\
     &\geq\f{ \gamma \rmin^2 }{2\snorm{\hat \mu}} \hat G(\Wt t)\\
     &\geq \f{ \gamma \rmin^2}{2\sqrt{2} \rmax \sqrt n} \hat G(\Wt t) = \f{ \gamma \rmin}{2\sqrt 2 R \sqrt n} \hat G(\Wt t),
\end{align*}
where the final inequality uses the calculation~\eqref{eq:hatmu.norm}. 

\end{proof}

\subsection{Proof of Theorem~\ref{thm:gdstablerank}}\label{sec:gdthmproof}
We are now in a position to provide the proof of Theorem~\ref{thm:gdstablerank}.  For the reader's convenience, we re-state the theorem below.  

\gdstablerank*
\begin{proof}
We prove the theorem in parts.  We first note that all of the results of Lemma~\ref{lemma:initialization}, Lemma~\ref{lemma:grad.norm.ub}, Lemma~\ref{lemma:wj.direction.hatmu}, and Lemma~\ref{lemma:proxy.pl} hold with probability at least $1-\delta$ over the random initialization.     

\paragraph{Empirical risk driven to zero.}
  This is a simple consequence of the proxy-PL inequality given in Lemma~\ref{lemma:proxy.pl} since $\phi$ is smooth; a small modification of the proof of~\citet[Lemma 4.12]{frei2022benign} suffices.   In particular, since by Lemma~\ref{lemma:loss.smooth} the loss $\hat L(w)$ has $\rmax^2(1+H/\sqrt m)$-Lipschitz gradients, we have
\[ \hat L(\Wt {t+1}) \leq \hat L(\Wt t) - \alpha \snorm{\nabla \hat L(\Wt t)}_F^2 + \rmax^2 \max(1, H/\sqrt m) \alpha^2 \snorm{\nabla \hat L(\Wt t)}_F^2.\]
Applying the proxy-PL inequality of Lemma~\ref{lemma:proxy.pl} and using that $\alpha \leq [2 \max(1, H/\sqrt m) \rmax^2]^{-1}$ we thus have
\[ \f{ \gamma^2 \rmin^2}{8R^2 n} \hat G(\Wt t)^2 \leq \snorm{\nabla \hat L(\Wt t)}_F^2 \leq \f{2}{\alpha} \l[ \hat L(\Wt {t+1}) - \hat L(\Wt t)\r] .\] 
Telescoping the above, we get 
\[ \min_{t<T} \hat G(\Wt t)^2 \leq \f 1 T \summm t 0 {T-1} \hat G(\Wt t)^2 \leq \f{ 2\hat L(\Wt 0)}{\alpha T} \cdot \f{  8n \rratio^2}{\gamma^2 \rmin^2}. \]
We know from the proof of Lemma~\ref{lemma:loss.ratio} (see~\eqref{eq:unnormalized.margin.increases}) that the unnormalized margin increases for each sample for all times.  Since $g$ is monotone, this implies $\hat G(\Wt t)$ is decreasing and hence so is $\hat G(\Wt t)^2$, which implies
\[ \hat G(\Wt {T-1}) = \min_{t<T} \hat G(\Wt t) \leq \sqrt{ \f{ 16 \hat L(\Wt 0) n \rratio^2}{ \gamma^2 \rmin^2\alpha T} }.\] 
Since $\ell(z) \leq 2 g(z)$ for $z > 0$ and we know that the network interpolates the training data for all times $t\geq 1$, we know that $\hat L(\Wt t)\leq \hat 2G(\Wt t)$ for $t\geq 1$, so that for $T \geq 2$,
\[ \hat L(\Wt {T-1}) \leq \hat 2G(\Wt {T-1}) \leq 2\sqrt{ \f{ 16 \hat L(\Wt 0) n \rratio^2}{ \gamma^2 \rmin^2\alpha T}}.\] 
Since $|f(x_i;\Wt 0)| \leq 1$ for each $i$, $\hat L(\Wt 0)$ is at most an absolute constant, and since $\gamma$ is an absolute constant this completes the proof for the first part of the theorem. 

\paragraph{Norms driven to infinity.}  We showed in Lemma~\ref{lemma:wj.direction.hatmu} (see~\eqref{eq:wtj.hatmu.abs.lb} and~\eqref{eq:hatmu.norm}) that for each $t \geq 1$ and for each $j$,
\[ \snorm{\wt t_j} \geq 
|\sip{\wt t_j}{\hat \mu / \snorm{\hat \mu}}| \geq \f{ \alpha |a_j| \rmin^2}{4 \sqrt 2 \rmax \sqrt n } \summm s 0 {t-1} \hat G(\Wt s).\] 
It therefore suffices to show that $\summm s 0 {t-1}\hat G(\Wt s) \to \infty$.  Suppose this is not the case, so that there exists some $\beta > 0$ such that $\summm s 0 {t-1} \hat G(\Wt s) \leq \beta$ for all $t$.  By Lemma~\ref{lemma:grad.norm.ub}, this implies that for all $t$, $\snorm{\Wt t}_F \leq \snorm{\Wt 0}_F + 2 \sqrt{ C_R \rmax \alpha/n} \beta$.  In particular, $\snorm{\Wt t}_F$ is bounded independently of $t$.   But this contradicts the fact that $\hat L(\Wt t) \to 0$ and $\ell > 0$ everywhere, and thus $\snorm{\wt t_j}\gtrsim \summm s 0 {t-1}\hat G(\Wt s) \to \infty$. 

\paragraph{Stable rank is constant.} By definition,
\begin{align*}
    \stablerank(\Wt t) &= \f{ \snorm{\Wt t}_F^2 }{\snorm{\Wt t}_2^2}.
\end{align*}
We will use the upper bound for the Frobenius norm from Lemma~\ref{lemma:grad.norm.ub} and the lower bound for the spectral norm from Lemma~\ref{lemma:wj.direction.hatmu}.  We consider two cases.
\paragraph{Case 1: $\snorm{\Wt t}_F > 2 \snorm{\Wt 0}_F$.}
In this instance, by Lemma~\ref{lemma:grad.norm.ub}, we have the chain of inequalities,
\[ 2\snorm{\Wt 0}_F < \snorm{\Wt t}_F \leq \snorm{\Wt 0}_F + \f{ \sqrt{2 C_R}\rmax \alpha}{\sqrt n}\summm s 0 {t-1} \hat G(\Wt s).\] 
In particular, we have
\[ \snorm{\Wt 0}_F < \f{ \sqrt{2 C_R}\rmax \alpha}{\sqrt n}\summm s 0 {t-1} \hat G(\Wt s).\]
We can thus use Lemma~\ref{lemma:wj.direction.hatmu} and Lemma~\ref{lemma:grad.norm.ub} to bound the ratio of the Frobenius norm to the spectral norm:
\begin{align*}
    \f{\snorm{\Wt t}_F}{\snorm{\Wt t}_2} &\leq \f{ \snorm{\Wt 0}_F + \f{ \sqrt{2 C_R} \rmax \alpha}{\sqrt n} \summm s 0 {t-1}\hat G(\Wt t)}{\f{ \alpha \gamma \rmin }{4 \sqrt 2 \rratio \sqrt n} \summm s 0 {t-1} \hat G(\Wt t)} \\
    &\leq \f{ \f{ 2\sqrt{2 C_R} \rmax \alpha}{\sqrt n} \summm s 0 {t-1}\hat G(\Wt t)}{ \f{ \alpha \gamma \rmin }{4 \sqrt 2 \rratio  \sqrt n}\summm s 0 {t-1} \hat G(\Wt t)} \\
    &=  16 C_R^{1/2} R^2 \gamma^{-1}.  \numberthis \label{eq:sqrt.stable.rank.ub.case1}
\end{align*}

\paragraph{Case 2: $\snorm{\Wt t}_F \leq 2 \snorm{\Wt 0}_F$.}  Again using Lemma~\ref{lemma:wj.direction.hatmu}, we have
\begin{align*}
    \f{\snorm{\Wt t}_F}{\snorm{\Wt t}_2} &\leq \f{ 2\snorm{\Wt 0}_F }{\f{ \alpha \gamma \rmin }{4 \sqrt 2 \rratio  \sqrt n} \summm s 0 {t-1} \hat G(\Wt t)} \\
    &\overset{(i)}\leq \f{ \sqrt 5 \sinit \sqrt{ md \log(4m/\delta)}}{ \f{ \alpha \gamma \rmin }{4 \sqrt 2 \rratio  \sqrt n}\hat G(\Wt 0)} \\
    &\overset{(ii)}\leq \f{4 \sqrt 5 \sinit \sqrt{ md \log(4m/\delta)}}{ \f{ \alpha \gamma \rmin }{4 \sqrt 2 \rratio  \sqrt n}} \\
    &=  16 \sqrt {10} C_R \gamma^{-1} \rratio \rmin^{-1} \sqrt n \alpha^{-1} \sinit \sqrt{md\log(4m/\delta)}\\
    &\overset{(iii)} \leq \gamma/\sqrt n \leq 16 C_R^{1/2} R^2 \gamma^{-1}. \numberthis \label{eq:sqrt.stable.rank.ub}
\end{align*}
Inequality $(i)$ uses Lemma~\ref{lemma:initialization}.  Inequality $(ii)$ uses that $\hat G(\Wt 0)\geq 1/4$ by the calculation~\eqref{eq:hatg0.lb}.  The final inequality $(iii)$ uses Assumption~\ref{a:sinit} so that 
$\sinit \leq \alpha \gamma^2 \rmin (72 \rratio C_R n \sqrt{md\log(4m/\delta)})^{-1}$.  
Thus,~\eqref{eq:sqrt.stable.rank.ub} yields the following upper bound for the stable rank,
\[ \stablerank(\Wt t) \leq 16^2 C_R \rratio^4 \gamma^{-2} = 16^2(10\rratio^2/\gamma^2+10)\rratio^4 \gamma^{-2} =: C_2.\]
\end{proof}

\section{Experiment Details}\label{app:experiments}

We describe below the two experimental settings we consider.

\begin{figure}
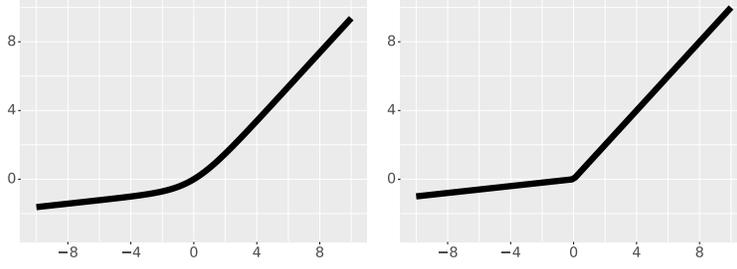

    \centering
    \includegraphics[width=0.3\textwidth]{smooth_leaky.pdf}
    \includegraphics[width=0.3\textwidth]{leaky.pdf}
    \caption{The $0.1$-leaky, $\f 14$-smooth leaky activation $\phi(z) = 0.1 z + 0.9 \log\big( \f 12 (1+\exp(z)\big)$ (left) and the standard leaky ReLU $\phi(z) = \max(0.1z, z)$ (right).}\label{fig:smooth.leaky}
\end{figure}

\subsection{Binary cluster data}
\begin{figure}[h]
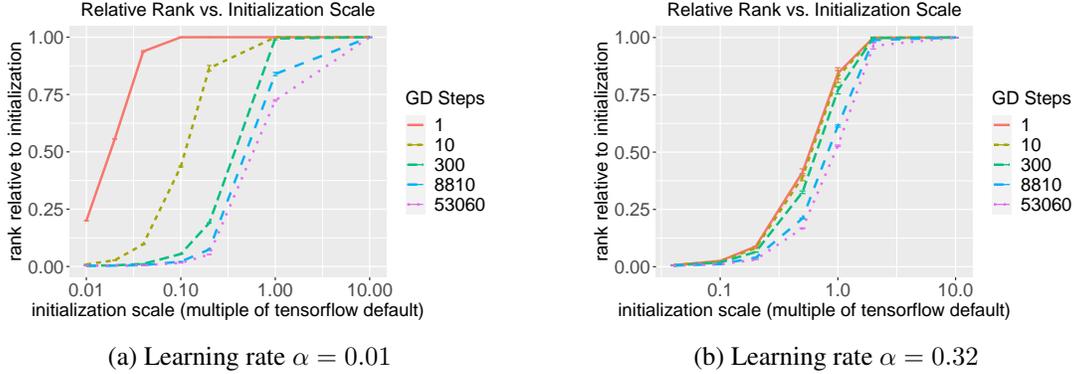

     \centering
     \begin{subfigure}[b]{0.4\textwidth}
         \centering
         \includegraphics[width=\textwidth]{rank_vs_initialization_001.pdf}
         \caption{Learning rate $\alpha = 0.01$}
         \label{fig:lr0.01}
     \end{subfigure}
     \hspace{1cm}
     \begin{subfigure}[b]{0.4\textwidth}
         \centering
         \includegraphics[width=\textwidth]{rank_vs_initialization_032.pdf}
         \caption{Learning rate $\alpha=0.32$}
         \label{fig:lr0.32}
     \end{subfigure}
        \caption{With larger learning rates, most of the rank reduction occurs in the first step of gradient descent.  With smaller learning rates, training for longer can reduce the rank at most initialization scales.  }
        \label{fig:lr}
\end{figure}

\begin{figure}[h]
    \centering
    \includegraphics[width=0.95\textwidth]{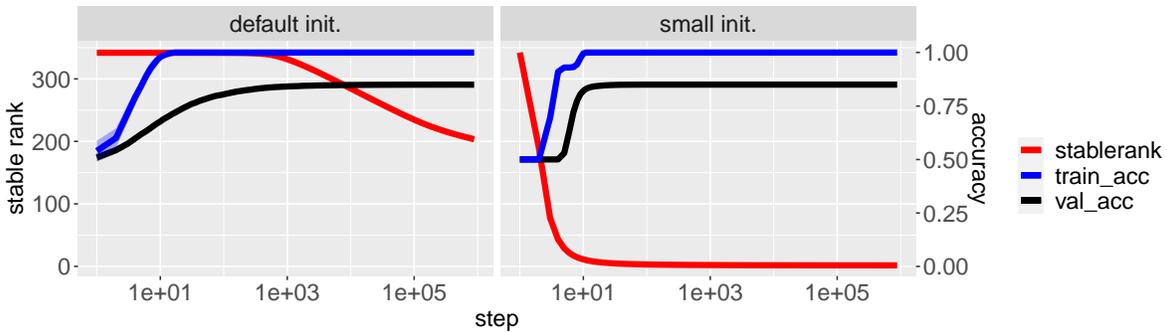}
    \caption{For the high-dimensional binary cluster data  (cf.~\eqref{eq:binary.cluster.distribution}), we see that using a small initialization scale leads to a rapid decrease in the stable rank of the network.  A similar phenomenon occurs with CIFAR-10 (see Figure~\ref{fig:cifar.main}).}
    \label{fig:cluster.acc.rank}
\end{figure}

In Figure~\ref{fig:rank.vs.dimension.init}, we consider the binary cluster distribution described in~\eqref{eq:binary.cluster.distribution}.  We consider a neural network with $m=512$ neurons with activation $\phi(z)=\gamma z + (1-\gamma) \log \l( \f 12 (1+\exp(z))\r)$ for $\gamma=0.1$, which is a $0.1$-leaky, $\nicefrac 14$-smooth leaky ReLU activation (see Figure~\ref{fig:smooth.leaky}).   We fix $n=100$ samples with mean separation $\snorm{\mu}=d^{0.26}$ with each entry of $\mu$ identical and positive.  We introduce label noise by making 15\% of the labels in each cluster share the opposing cluster label (i.e., samples from cluster mean $+\mu_1$ have label $+1$ with probability $0.85$ and $-1$ with probability $0.15$).   Concurrent with the set-up in Section~\ref{sec:gd}, we do not use biases and we keep the second layer fixed at the values $\pm 1/\sqrt m$, with exactly half of the second-layer weights positive and the other half negative.  For the figure on the left, the initialization is standard normal distribution with standard deviation that is $50\times$ smaller than the TensorFlow default initialization, that is, $\sinit =  \nicefrac 1{50}\times \sinit^{\mathsf{TF}}$ where $\sinit^{\mathsf{TF}}=\sqrt{2 /(m+d)}$.  For the figure on the right, we fix $d=10^4$ and vary the initialization standard deviation for different multiples of $\sinit^{\mathsf{TF}}$, so that the variance is between $(10^{-2}\sinit^{\mathsf{TF}})^2$ and $(10^2 \sinit^{\mathsf{TF}})^2$.  For the experiment on the effect of dimension, we use a fixed learning rate of $\alpha=0.01$, while for the experiment on the effect of the initialization scale we use a learning rate of $\alpha=0.16$.  In Figure~\ref{fig:rank.vs.dimension.init}, we show the stable rank of the first-layer weights scaled by the initial stable rank of the network (i.e., we plot $\stablerank(\Wt t)/\stablerank(\Wt 0)$). The line shows the average over 5 independent random initializations with error bars (barely visible) corresponding to plus or minus one standard deviation.

In Figure~\ref{fig:lr}, we provide additional empirical observations on how the learning rate can affect the initialization scale's influence on the stable rank of the trained network as we showed in Figure~\ref{fig:rank.vs.dimension.init}.  We fix $d=10^4$ and otherwise use the same setup for Figure~\ref{fig:rank.vs.dimension.init} described in the previous paragraph.  When the learning rate is the smaller value of $\alpha=0.01$, training for longer can reduce the (stable) rank of the network, while for the larger learning rate of $\alpha=0.32$ most of the rank reduction occurs in the first step of gradient descent.

In Figure~\ref{fig:cluster.acc.rank}, we examine the training accuracy, test accuracy, and stable rank of networks trained on the binary cluster distribution described above.  Here we fix $d=10^4$ and $\alpha=0.01$ and otherwise use the same setup described in the first paragraph.  We again consider two settings of the initialization scale: either a standard deviation of $\sinit^{\mathsf{TF}}$ or $\nicefrac 1{50}\times \sinit^{\mathsf{TF}}$.  We again see that the stable rank decreases much more rapidly when using a small initialization.  Note that in both settings we observe a benign overfitting phenomenon as the training accuracy is 100\% and the test accuracy is eventually the (optimal) 85\%.

\subsection{CIFAR10}
We use the standard 10-class CIFAR10 dataset with pixel values normalized to be between 0 and 1 (dividing each pixel value by 255).  We consider a standard two-layer network with 512 neurons with ReLU activations with biases and with second-layer weights trained.  We train for $T=10^6$ steps with SGD with batch size 128 and a learning rate of $\alpha=0.01$.     
Figure~\ref{fig:cifar.main} shows the average over 5 independent random initializations with shaded area corresponding to plus or minus one standard deviation.  

For the second-layer initialization we use the standard TensorFlow Dense layer initialization, which uses Glorot Uniform with standard deviation $\sqrt{2/(m+10)}$ (since the network has 10 outputs).  For the first-layer initialization, we consider two different initialization schemes. 

\paragraph{Default initialization.}  We use the standard Dense layer initialization in TensorFlow Keras.  In this case the `Glorot Uniform' initialization has standard deviation $\sinit^{\mathsf{TF}} = \sqrt{2/(m+d)}$.

\paragraph{Small initialization.}  We use $\sinit = \sinit^{\mathsf{TF}}/50$.

\printbibliography
\end{document}